\documentclass{article}

\usepackage[main,final]{neurips_2026}
\makeatletter
\renewcommand{\@noticestring}{}
\makeatother

\usepackage[utf8]{inputenc}
\usepackage[T1]{fontenc}
\usepackage{amsmath,amssymb,amsthm}
\usepackage{amsfonts}
\usepackage{mathtools}
\usepackage{graphicx}
\usepackage{booktabs}
\usepackage{multirow}
\usepackage{nicefrac}
\usepackage{microtype}
\usepackage{enumitem}
\usepackage[ruled,vlined]{algorithm2e}
\usepackage{caption}
\usepackage{subcaption}
\usepackage[table]{xcolor}
\usepackage{url}
\usepackage{hyperref}

\hypersetup{
  colorlinks=true,
  linkcolor=black,
  filecolor=black,
  citecolor=black,
  urlcolor=blue,
  pdfborderstyle={/S/U/W 1}
}

\newcommand{\R}{\mathbb{R}}
\newcommand{\bH}{\mathbf{H}}

\newcommand{\bA}{\mathbf{A}}
\newcommand{\bD}{\mathbf{D}}
\newcommand{\bI}{\mathbf{I}}
\newcommand{\Atilde}{\tilde{\mathbf{A}}}
\newcommand{\Ahat}{\hat{\mathbf{A}}}
\newcommand{\ECI}{\mathrm{ECI}}
\newcommand{\MAD}{\mathrm{MAD}}
\newcommand{\intra}{\mathrm{intra}}
\newcommand{\inter}{\mathrm{inter}}
\newcommand{\G}{\mathcal{G}}
\newcommand{\bh}{\mathbf{h}}

\newtheorem{theorem}{Theorem}
\newtheorem{proposition}{Proposition}
\newtheorem{lemma}{Lemma}

\newtheorem{definition}{Definition}

\definecolor{darkgreen}{rgb}{0.0,0.5,0.0}
\newcommand{\gain}[1]{%
  \textcolor{darkgreen}{\footnotesize$\uparrow$#1}%
}
\newcommand{\loss}[1]{%
  \textcolor{red}{\footnotesize$\downarrow$#1}%
}

\title{Not Just Oversmoothing: Detecting the Echo Chamber Effect in Graph Neural Networks}

\author{
  Asela Hevapathige\\
  Department of Mechanical Engineering\\
  University of Melbourne\\
  Melbourne, Australia\\
  \texttt{asela.hevapathige@unimelb.edu.au}
  \And
  Ahad N. Zehmakan\\
  School of Computing\\
  Australian National University\\
  Canberra, Australia\\
  \texttt{ahadn.zehmakan@anu.edu.au}
  \AND
  Asiri Wijesinghe\\
  Data61, CSIRO\\
  Canberra, Australia\\
  \texttt{asiriwijesinghe.wijesinghe@data61.csiro.au}
  \And
  Saman Halgamuge\\
  Department of Mechanical Engineering\\
  University of Melbourne\\
  Melbourne, Australia\\
  \texttt{saman@unimelb.edu.au}
}

\begin{document}

\maketitle

\begin{abstract}
Oversmoothing is a well-known failure mode of Graph Neural Networks (GNNs).
However, most existing diagnostics rely on global aggregation measures that
fail to capture the heterogeneous dynamics of message passing. Real-world
graphs exhibit pronounced community structure, and message passing operates
on two timescales, with representations collapsing rapidly within communities
and slowly across them. This creates a critical gap in which intra-community
representations can become indistinguishable while inter-community separation
persists, a failure mode that we refer to as the \emph{Echo Chamber Effect}.
To quantify this effect, we introduce the Echo Chamber Index (ECI), which
stratifies pairwise distances by community membership and reveals when global
energy diminishes while inter-community separation persists. ECI further
shows that feature-retention mechanisms can preserve the echo chamber under
the conditions of our theoretical analysis. The consequences depend on label
structure: when communities align with classes, the echo chamber can sharpen
node classification, whereas when they do not, the same collapse makes
classification provably harder. Motivated by this analysis, we propose
Community-Aware Split Propagation (CASP), a lightweight plugin that decouples
intra- and inter-community aggregation and learns their balance from label
structure. CASP improves diverse backbone GNNs across most evaluated homophilic and heterophilic
settings. %Our source code is available at: \url{https://anonymous.4open.science/r/CASP-3059/}.
\end{abstract}

\section{Introduction}

Graph neural networks (GNNs) generate node representations by iteratively aggregating features from neighboring nodes across multiple layers~\cite{gilmer2017neural,kipf2017semi}. As the depth of the network increases, this repeated aggregation tends to drive the representations toward a common vector, a phenomenon known as oversmoothing~\cite{rusch2023survey}. Two metrics dominate its diagnosis:
Dirichlet energy~\cite{cai2020note,bison2025analysis}, which sums squared
representation differences over edges, and Mean Average Distance
(MAD)~\cite{chen2020measuring}, which averages pairwise feature
distances across nodes. Both metrics reduce the representation to a single global scalar, an
aggregation that implicitly treats all regions of the graph as
equivalent.

Real-world networks are rarely uniform. Social, citation, and
biological networks exhibit pronounced
modularity~\cite{girvan2002community,alcala2021modularity,radicchi2011citation},
and on such graphs, message passing operates on two timescales: rapid
homogenisation within communities and slow mixing across them. Dirichlet energy and MAD aggregate over all edges and all node pairs
respectively, treating intra- and inter-community contributions
identically. Thus, they may indicate oversmoothing alleviation when meaningful inter-community 
separation persists, while intra-community representations become indistinguishable.
We refer to this state as the \emph{echo chamber effect}. On modular graphs
this region persists across GNNs well beyond the depths
typically deployed, which is a blind spot for current diagnostics.

This blind spot has consequences for both diagnosis and design.
Oversmoothing diagnostics routinely guide depth choices and motivate
design decisions such as residual or skip-connection
architectures~\cite{zhang2023comprehensive,kelesis2025analyzing}. When
a diagnostic fails to detect the echo chamber, these choices that rely
on it may be misled; consequently, architectural mechanisms may be incorrectly credited
with preventing a collapse that has already occurred. To make the echo chamber
region detectable, we introduce the \emph{Echo Chamber Index} (ECI),
defined as the ratio of cross-community to within-community pairwise
distance. ECI separates three regions of message passing on modular
graphs: \emph{uniform mixing}, in which the two distance scales are
comparable; an \emph{intermediate phase}, in which within-community
representations have collapsed but communities remain distinguishable;
and \emph{global oversmoothing}, in which all pairwise distinctions
vanish. As shown in Figure~\ref{fig:layer-metrics}, Dirichlet energy
and MAD decay monotonically across architectures, while ECI traverses
all three regions for a classic GNN such as GCN~\cite{kipf2017semi} and remains elevated for
oversmoothing-alleviation methods such as APPNP~\cite{gasteiger2019predict},
GCNII~\cite{chen2020simple}, and GPR-GNN~\cite{chien2021adaptive}.

\begin{figure}[t]
  \centering
  \includegraphics[width=\linewidth]{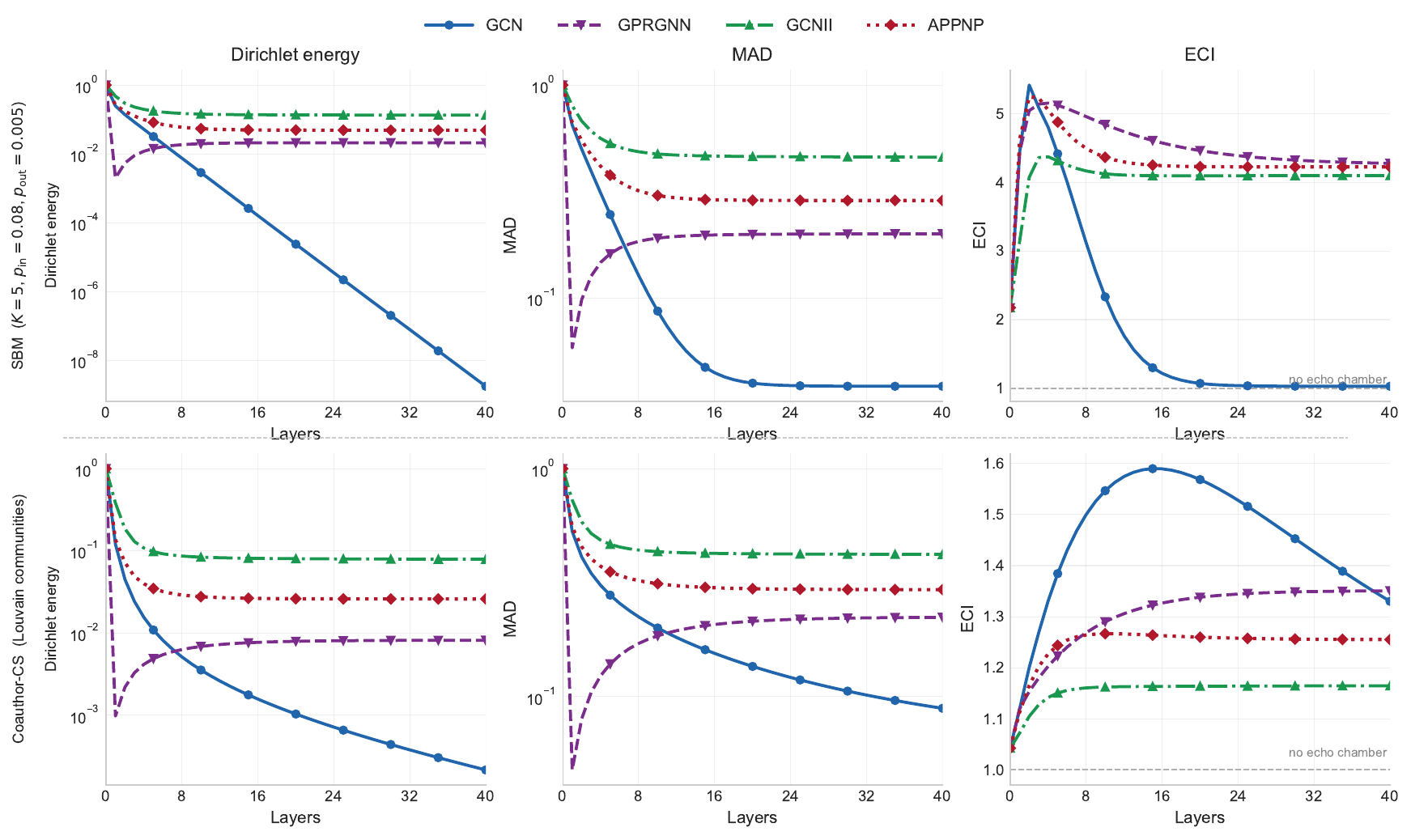}
  \caption{Layer-wise metrics on an SBM~\cite{holland1983stochastic} (1500 nodes, 5
  communities, intra-community edge probability 0.08, inter-community
  0.005) and Coauthor-CS with Louvain~\cite{blondel2008fast} communities.}
  \label{fig:layer-metrics}
\end{figure}

The echo chamber has task-specific consequences. We illustrate this
through node classification, where its effect depends on how
communities align with labels. Under homophily, communities coincide
with classes, so intra-community collapse compresses within-class
variance and improves linear separability. Under heterophily, the same
collapse forces same-community nodes of different classes into nearly
identical representations, degrading classification. We formalise both
directions and propose \emph{Community-Aware Split Propagation} (CASP),
a plugin that grants existing GNNs explicit control over the balance between
intra- and inter-community message passing. CASP calibrates the
intra/inter-community propagation balance directly from the graph's label
structure, amplifying the echo chamber under homophily and suppressing it
under heterophily. 

\begin{figure}[t]
    \centering
    \includegraphics[width=\linewidth]{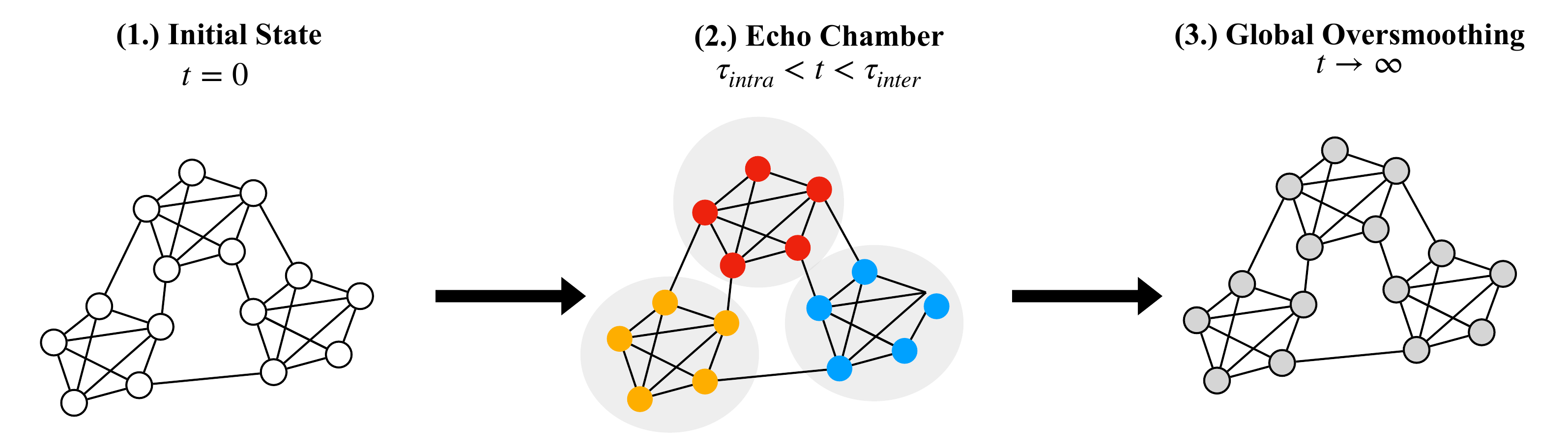}
    \caption{Three regions of GNN message passing:
     initial state~(\textbf{1}), echo chamber where intra-community 
collapse precedes inter-community separation~(\textbf{2}),
    and global oversmoothing~(\textbf{3}).}
    \label{fig:echo_chamber_stages}
\end{figure}

\paragraph{Our contributions are:}
\begin{itemize}
\item We identify and formalise the echo chamber effect, a failure
mode of GNN message passing on modular graphs in which intra-community
representations collapse while inter-community separation persists, which cannot be detected by standard diagnostics.
\item We propose ECI, a diagnostic that stratifies pairwise distances
by community membership, and prove that Dirichlet energy can vanish
while ECI remains bounded away from zero.
\item We characterise the consequences of the echo chamber for node
classification, showing it benefits homophilic settings and harms
heterophilic ones, and make the dichotomy quantitative through a
Fisher ratio and a classifier-margin bound.
\item We propose CASP, a computationally efficient plugin that decouples
intra- and inter-community aggregation and learns their beneficial balance
from label structure. CASP serves as a theory-motivated
intervention for validating the practical relevance of ECI.
\end{itemize}
\section{Preliminaries}

\paragraph{Notations.} Let $\G = (\mathcal{V}, \mathcal{E})$ be an
undirected connected graph with $n = |\mathcal{V}|$ nodes and
$|\mathcal{E}|$ edges. We denote the adjacency matrix by $\bA$ and the
degree matrix by $\bD$. Two propagation operators appear in our analysis.
The symmetric normalised adjacency $\Atilde = \bD^{-1/2}(\bA+\bI)\bD^{-1/2}$
is used in standard GCN-style layers~\cite{kipf2017semi} and in our
empirical experiments. The row-stochastic adjacency $\Ahat = \bD^{-1}\bA$ is used in our
theoretical analysis: as a Markov operator, its mixing dynamics are
analytically tractable~\cite{oono2020graph}. A GCN-style layer takes
the form $\bH^{(k)} = \sigma(\Atilde \bH^{(k-1)}\mathbf{W}^{(k)})$,
which we treat as a representative instance of message passing. Following prior work that simplifies GNN propagation for analytical tractability~\cite{wu2019simplifying,hevapathige2026beyond}, we work with the linearised iteration $\bH^{(k)} = \Ahat \bH^{(k-1)}$ in our structural analysis, isolating propagation dynamics from learning.

For our theoretical analysis we consider graphs with modular structure,
which we model via the stochastic block model
(SBM) \citep{holland1983stochastic}, where intra-block pairs connect with probability
$p_{\intra}$ and cross-block pairs with $p_{\inter} \ll p_{\intra}$
across $m$ balanced communities, denoted $\G \sim \mathrm{SBM}(n,m,p_{\intra},p_{\inter})$.

\paragraph{Oversmoothing and its diagnostics.}
In GNNs, repeated application of $\Ahat$ drives node representations toward
a common stationary vector, a phenomenon known as \emph{oversmoothing}
\citep{rusch2023survey}. A standard way to quantify this is via Dirichlet
energy~\citep{cai2020note}, defined for a feature matrix
$\bH \in \R^{n \times d}$ with $\bH_i \in \R^d$ the representation of node
$i$ as
$E(\bH) = \tfrac{1}{2}\sum_{(i,j)\in\mathcal{E}}
\|\bH_i/\sqrt{1+d_i} - \bH_j/\sqrt{1+d_j}\|^2$,
where $d_i$ is the degree of node $i$. As depth increases, $E(\bH) \to 0$
under oversmoothing. The edge-normalised variant
$
\bar E(\bH)
= \frac{1}{|\mathcal{E}|}
\sum_{(i,j)\in\mathcal{E}}
\left\|\frac{\bH_i}{\sqrt{1+d_i}} - \frac{\bH_j}{\sqrt{1+d_j}}\right\|^2
$
differs only by a constant and allows comparison across graphs of different
sizes. Chen et al.\cite{chen2020measuring} proposed Mean Average Distance (MAD), the
average pairwise cosine distance between node representations, as a
complementary measure. Both metrics aggregate globally, thus representations
may have already collapsed within communities while remaining distinct
across them, a failure neither can detect.

\section{The Echo Chamber Effect}
\label{sec:eci}

In this section, we formalize the echo chamber effect and propose a diagnostic 
to make it measurable. In a modular graph, message passing occurs through two 
diffusion processes that operate at vastly different speeds. Information rapidly 
spreads through dense connections within communities, while it spreads slowly 
across sparse connections between different communities (see
Figure~\ref{fig:echo_chamber_stages}). The MAD metric blends these two processes,
failing to capture the asymmetry between them. When stratified by community
membership, this asymmetry becomes apparent.

\begin{definition}[Community MAD and ECI]
\label{def:mad_eci}
Let $\mathcal{C} = \{C_1,\ldots,C_m\}$ be the community partition, $c(i)$ the
community of node $i$, and $\mathcal{P}_{\intra}$, $\mathcal{P}_{\inter}$ the
edge sets within and across communities. The intra- and inter-community mean
average distances are:
\[
  \MAD_{\intra}^{(k)} = \frac{1}{|\mathcal{P}_{\intra}|}
    \sum_{(i,j)\in\mathcal{P}_{\intra}}\|\bH_i^{(k)}-\bH_j^{(k)}\|_2, \qquad
  \MAD_{\inter}^{(k)} = \frac{1}{|\mathcal{P}_{\inter}|}
    \sum_{(i,j)\in\mathcal{P}_{\inter}}\|\bH_i^{(k)}-\bH_j^{(k)}\|_2,
\]
and the \emph{Echo Chamber Index} (ECI) is:
\[
\ECI^{(k)} =
\frac{\MAD_{\inter}^{(k)}}{\MAD_{\intra}^{(k)}+\nu},
\]
where $\nu>0$ is a small numerical stabiliser.
\end{definition}

Three regions emerge naturally: \textbf{Echo Chamber} ($\ECI \gg 1$), where
intra-community distances have collapsed while inter-community ones remain
large; \textbf{Uniform Mixing} ($\ECI \approx 1$), where both distance scales
decay at comparable rates and no structural asymmetry is present; and
\textbf{Global Oversmoothing} ($\ECI \approx 0$), where all pairwise
distinctions vanish. Dirichlet energy and MAD aggregate globally and therefore
cannot distinguish among these three states.

\begin{definition}[Echo chamber effect and its window]
\label{def:window}
The \emph{echo chamber effect} occurs at depth $k$ when
\[
\frac{\MAD_{\intra}^{(k)}}{\MAD_{\intra}^{(0)}} \ll 
\frac{\MAD_{\inter}^{(k)}}{\MAD_{\inter}^{(0)}}.
\]
The \emph{echo chamber window} $[\tau_{\intra},\tau_{\inter})$ captures the 
depths over which this region persists, where
\[
\tau_{\intra}
=
\min\left\{
k:\MAD_{\intra}^{(k)}
\leq
\gamma_{\intra}\MAD_{\intra}^{(0)}
\right\},
\qquad
\tau_{\inter}
=
\min\left\{
k:\MAD_{\inter}^{(k)}
\leq
\gamma_{\inter}\MAD_{\inter}^{(0)}
\right\},
\]
for thresholds $\gamma_{\intra},\gamma_{\inter}\in(0,1)$.
\end{definition}

We assume $\tau_{\intra}<\tau_{\inter}$, meaning that intra-community
representations collapse before inter-community ones. This occurs when
intra-community edges are denser than inter-community edges and the initial
features are not already more homogeneous across communities than within
them. Pathological violations are theoretically possible.

On a modular graph, the two timescales are determined by both graph structure
and the initial feature distribution. Intra-community edges are dense, so
within-community representations converge quickly, while inter-community edges
are sparse, so cross-community mixing is slow. On
$\G\sim\mathrm{SBM}(n,m,p_{\intra},p_{\inter})$, the contrast
$p_{\intra}\gg p_{\inter}$ means that cross-community information flows more
slowly than within-community information. As this contrast grows, the echo
chamber window widens and can exceed the number of layers used in practice.
This establishes a sufficient regime in which practical GNNs
may operate inside the echo chamber window throughout much of their
aggregation process.

\subsection{The Diagnostic Gap: Why Dirichlet Energy Can Miss Community Separation}
\label{sec:gap}

The above discrepancy is not an artefact but follows from the fact that
Dirichlet energy and ECI average over different objects. Dirichlet energy
measures representation variation along edges, whereas ECI compares intra- and
inter-community distances over node pairs. The following results formalise this
distinction. The proofs are deferred to the Appendix.

\begin{lemma}[Inter-community Energy Bound]
\label{lem:dirichlet_bound}
Let $\bar E_{\inter}(\bH)$ denote the restriction of the normalized Dirichlet
energy to inter-community edges. Then
\[
\bar E_{\inter}(\bH)
\leq
\frac{|\mathcal E_{\inter}|}{|\mathcal E|}
\sup_{(i,j)\in\mathcal E_{\inter}}
\left\|
\frac{\bH_i}{\sqrt{1+d_i}}
-
\frac{\bH_j}{\sqrt{1+d_j}}
\right\|^2.
\]
\end{lemma}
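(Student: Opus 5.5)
The plan is to make precise what ``restriction of the normalized Dirichlet energy to inter-community edges'' means, and then bound a sum by its number of terms times its largest term. Following the edge-normalised energy $\bar E(\bH)$ from the preliminaries, I define
\[
\bar E_{\inter}(\bH) = \frac{1}{|\mathcal E|}\sum_{(i,j)\in\mathcal E_{\inter}}\left\|\frac{\bH_i}{\sqrt{1+d_i}}-\frac{\bH_j}{\sqrt{1+d_j}}\right\|^2 .
\]
The normalisation is by the total edge count $|\mathcal E|$, not by $|\mathcal E_{\inter}|$. This choice makes $\bar E(\bH)=\bar E_{\intra}(\bH)+\bar E_{\inter}(\bH)$ an exact decomposition, and it explains the prefactor $|\mathcal E_{\inter}|/|\mathcal E|$ in the statement.

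With this definition, the argument is one line. Write $s_{ij}$ for the summand. Each $s_{ij}$ is a nonnegative real, and for $(i,j)\in\mathcal E_{\inter}$ it satisfies $s_{ij}\le \sup_{(k,l)\in\mathcal E_{\inter}} s_{kl}$. Summing this inequality over the $|\mathcal E_{\inter}|$ inter-community edges and dividing by $|\mathcal E|$ gives the claim. Because the graph is finite, the supremum is a maximum and is finite. If $\mathcal E_{\inter}=\emptyset$, both sides are $0$ under the convention that the empty sum is zero and that the supremum is multiplied by $0$.

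The only real obstacle is interpretive: fixing the normalisation convention for the restricted energy. If one instead normalised by $|\mathcal E_{\inter}|$, the prefactor would be $1$. In that case I would note that $\bar E_{\inter}$, measured relative to the global normalisation, equals $(|\mathcal E_{\inter}|/|\mathcal E|)$ times the average over inter-community edges, and the same averaging bound applies. I would also remark on why the lemma matters for the later diagnostic-gap argument. On the SBM, $|\mathcal E_{\inter}|/|\mathcal E|\approx \frac{(m-1)p_{\inter}}{p_{\intra}+(m-1)p_{\inter}}$ is small. So inter-community contributions to the Dirichlet energy are suppressed by this sparsity factor even when cross-community distances stay large, which is what lets the energy vanish while ECI stays bounded away from zero.
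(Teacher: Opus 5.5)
Your proposal is correct and matches the paper's proof: both define $\bar E_{\inter}$ with the global $1/|\mathcal E|$ normalisation, bound each nonnegative inter-community summand by its supremum over $\mathcal E_{\inter}$, and sum over the $|\mathcal E_{\inter}|$ edges. Your added handling of the empty case and of the normalisation convention is harmless. Note that only the $1/|\mathcal E|$ convention makes the stated prefactor valid.
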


The inter-community contribution to Dirichlet energy scales with the fraction
of inter-community edges, so even large cross-community differences have
negligible effect when such edges are sparse.

\begin{proposition}[Diagnostic gap]
\label{prop:dirichlet}
Let $\G\sim\mathrm{SBM}(n,2,p_{\intra},\epsilon p_{\intra})$ with balanced
communities, where $\epsilon=p_{\inter}/p_{\intra}\in(0,1)$.
Let $c(i)\in\{1,2\}$ denote the community of node $i$ and
$\bar{\bh}_c^{(k)}$ the centroid of community $c$ at depth $k$.
Suppose that for any node $i$ at some depth $k$,
\[
\bH_i^{(k)}
=
\bar{\bh}_{c(i)}^{(k)}
+
\boldsymbol{\xi}_i^{(k)},
\qquad
\|\boldsymbol{\xi}_i^{(k)}\|\leq\eta,
\]
and that
$\|\bar{\bh}_1^{(k)}-\bar{\bh}_2^{(k)}\|\geq\Delta>0$
with $\eta<\Delta/2$. Then, in the limit $n\to\infty$ followed by
$\epsilon\to0$,
\[
\bar E(\bH^{(k)})\to0,
\qquad
\ECI^{(k)}
\geq
\frac{\Delta-2\eta}{2\eta+\nu}
>0.
\]
\end{proposition}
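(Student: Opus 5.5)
The plan is to treat the two conclusions separately. The ECI bound is a deterministic consequence of the triangle inequality and holds for every $n$ and $\epsilon$. The energy statement rests on splitting $\bar E$ into intra- and inter-community edges and using Lemma~\ref{lem:dirichlet_bound} for the inter part.

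\textbf{ECI bound.} For any intra-community pair $(i,j)$ we have $c(i)=c(j)$, so $\bH_i^{(k)}-\bH_j^{(k)}=\boldsymbol{\xi}_i^{(k)}-\boldsymbol{\xi}_j^{(k)}$, whose norm is at most $2\eta$. Averaging gives $\MAD_{\intra}^{(k)}\le 2\eta$. For an inter-community pair, the reverse triangle inequality gives
\[
\|\bH_i^{(k)}-\bH_j^{(k)}\|\ \ge\ \|\bar{\bh}_1^{(k)}-\bar{\bh}_2^{(k)}\|-\|\boldsymbol{\xi}_i^{(k)}\|-\|\boldsymbol{\xi}_j^{(k)}\|\ \ge\ \Delta-2\eta,
\]
which is positive because $\eta<\Delta/2$. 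Averaging gives $\MAD_{\inter}^{(k)}\ge\Delta-2\eta$. Dividing, $\ECI^{(k)}\ge(\Delta-2\eta)/(2\eta+\nu)>0$, uniformly in $n$ and $\epsilon$, so the bound survives both limits.

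\textbf{Energy vanishing.} Write $\bar E=\bar E_{\intra}+\bar E_{\inter}$. For the inter part, Lemma~\ref{lem:dirichlet_bound} bounds $\bar E_{\inter}$ by $\frac{|\mathcal E_{\inter}|}{|\mathcal E|}$ times a supremum. That supremum is at most $\big(2(\max_c\|\bar{\bh}_c^{(k)}\|+\eta)\big)^2$, since $1/\sqrt{1+d_i}\le1$; I will assume bounded centroids, which is implicit in the setting. In the balanced two-block SBM, as $n\to\infty$, concentration gives $|\mathcal E_{\inter}|/|\mathcal E|\to \epsilon/(1+\epsilon)$ almost surely. Letting $\epsilon\to0$ then sends $\bar E_{\inter}\to0$.

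For the intra part, each summand is
\[
\Big\|\frac{\bar{\bh}_c+\boldsymbol{\xi}_i}{\sqrt{1+d_i}}-\frac{\bar{\bh}_c+\boldsymbol{\xi}_j}{\sqrt{1+d_j}}\Big\|^2,
\]
which is at most $2\big(\|\bar{\bh}_c\|+\eta\big)^2\big(\tfrac{1}{1+d_i}+\tfrac{1}{1+d_j}\big)$. Under the dense-SBM regime ($p_{\intra}$ fixed), every degree is $\Theta(n)$ with high probability. Hence each intra summand is $O(1/n)$, and so is their average. So $\bar E_{\intra}\to0$ as $n\to\infty$, regardless of $\epsilon$.

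\textbf{Main obstacle.} The delicate step is that $\bar E$ uses degree normalisation, so the mere smallness of $\eta$ does not make intra-edge terms small: the shared centroid is scaled by different factors $1/\sqrt{1+d_i}$. The argument above sidesteps this by using degree growth to kill every term. That requires that the hypothesis $\|\boldsymbol{\xi}_i\|\le\eta$ and the centroids stay bounded as $n$ grows, i.e.\ that the representation at depth $k$ is fixed in scale. I would state this explicitly as the interpretation of the hypothesis.

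An alternative, if the intended regime is sparse, is to use degree concentration $d_i/d_j\to1$ and bound the intra terms by $O(\eta^2/d)$. In either case the order of limits matters. First, $n\to\infty$ makes degrees concentrate and the inter-edge fraction deterministic. Second, $\epsilon\to0$ removes the inter contribution. The ECI bound needs neither limit.
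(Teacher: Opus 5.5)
Your proposal is correct and follows essentially the same route as the paper. The paper also splits $\bar E$ into intra and inter parts and kills the intra part by degree growth ($d_{\min}=\Omega(np_{\intra})$, so each summand is $O(d_{\min}^{-1})$). It then kills the inter part with Lemma~\ref{lem:dirichlet_bound} and the edge fraction $\epsilon/(1+\epsilon)$, and gets the ECI bound from the triangle inequality. Your explicit bounded-centroid assumption is exactly what the paper uses implicitly through its constants $M$ and $C_0$.
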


The contribution of inter-community edges to Dirichlet energy scales with their
fraction, which vanishes as $\epsilon\to0$, while inter-community pairwise
distances remain bounded away from zero, yielding a persistent gap captured
by ECI. The condition $\eta<\Delta/2$ is a sufficient
concentration condition for Proposition~\ref{prop:dirichlet}, and it is not
assumed to hold for every community pair or at every depth in a trained
network.

Figure~\ref{fig:layer-metrics} provides empirical confirmation of the 
diagnostic gap established in Proposition~\ref{prop:dirichlet} and 
Lemma~\ref{lem:dirichlet_bound}. Across both the synthetic SBM, with initial 
features drawn independently from $\mathcal{N}(0,I)$, and Coauthor-CS, 
Dirichlet energy and MAD decay towards zero while ECI remains elevated. This
demonstrates that small global Dirichlet energy need not imply vanishing
inter-community separation.

\subsection{Echo Chamber Permanence Under Feature Retention}
\label{sec:permanence}

Standard oversmoothing remedies introduce either a residual connection to the
input (\emph{Initial Feature Retention}, IFR) or a skip to the previous layer
(\emph{Previous Layer Retention}, PLR).
We show that IFR can preserve the echo chamber
asymptotically, whereas PLR can produce arbitrarily long but finite echo
chamber windows on sufficiently modular graphs.

\begin{definition}[IFR / PLR]
\label{def:retention}
A GNN satisfies \textbf{IFR} if
\[
\mathbf{H}^{(k)}
=
f_k(\hat{\mathbf{A}},\mathbf{H}^{(0)})
+
\alpha_k\mathbf{H}^{(0)}
\]
with $\alpha_k\geq\alpha_{\min}>0$, where $\alpha_{\min}$ is a uniform
lower bound on the residual coefficient across all layers, and
\[
\eta_k
=
\sup_{i,j}
\|[f_k]_i-[f_k]_j\|/2
\]
measures the maximum pairwise variation of the propagated component $f_k$ at
depth $k$. It satisfies \textbf{PLR} if
\[
\mathbf{H}^{(k)}
=
(1-\beta_k)\hat{\mathbf{A}}\mathbf{H}^{(k-1)}
+
\beta_k\mathbf{H}^{(k-1)}
\]
with $\beta_k\geq\beta_{\min}>0$, where $\beta_{\min}$ is a uniform lower
bound on the skip coefficient across all layers.
\end{definition}

IFR subsumes architectures with a fixed residual to
$\mathbf{H}^{(0)}$ for which $\inf_k\alpha_k>0$, while PLR subsumes
architectures with skip connections to the preceding layer.

\begin{theorem}[Echo chamber permanence]
\label{thm:permanence}
Let $\mathcal{G}\sim\mathrm{SBM}(n,2,p,\varepsilon p)$ with
initial features satisfying $\|\boldsymbol{\xi}_i^{(0)}\|\leq\eta_0$,
$\|\bar{\mathbf{h}}_1^{(0)}-\bar{\mathbf{h}}_2^{(0)}\|\geq\Delta$,
and $\eta_0<\alpha_{\min}\Delta/[2(1+\alpha_{\min})]$.
\begin{enumerate}[label=(\alph*),leftmargin=*]
\item \textbf{IFR:} for all $k$ with $\eta_k\leq\eta_0$,
\[
    \mathrm{ECI}^{(k)}
    \geq\frac{\alpha_{\min}(\Delta-2\eta_0)-2\eta_0}
             {4\eta_0+\nu}>0.
\]
The echo chamber is \emph{permanent}:
$\liminf_{k\to\infty}\mathrm{ECI}^{(k)}>0$.
\item \textbf{PLR:} the inter-community collapse depth satisfies
$\tau_{\mathrm{inter}}(\beta_{\min},\varepsilon)\to\infty$
as $\varepsilon\to 0$. The echo chamber window widens without
bound; for any fixed $K$, $\mathrm{ECI}^{(k)}>0$ for all
$k\leq K$ when $\varepsilon$ is sufficiently small.
\end{enumerate}
\end{theorem}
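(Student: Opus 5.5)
For part (a), I will bound the numerator and denominator of $\ECI^{(k)}$ separately using only the triangle inequality, as in Proposition~\ref{prop:dirichlet}. Write $\bH^{(k)}_i=[f_k]_i+\alpha_k\bH^{(0)}_i$ with $\bH^{(0)}_i=\bar{\bh}^{(0)}_{c(i)}+\boldsymbol{\xi}^{(0)}_i$. By the definition of $\eta_k$, for any pair $(i,j)$ we have $\|[f_k]_i-[f_k]_j\|\le 2\eta_k\le 2\eta_0$.

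For an intra-community pair the centroids cancel, which gives
\[
\|\bH^{(k)}_i-\bH^{(k)}_j\|\le 2\eta_0+\alpha_k\cdot 2\eta_0 .
\]
Hence $\MAD_\intra^{(k)}\le 2\eta_0(1+\alpha_k)$. For an inter-community pair the reverse triangle inequality gives
\[
\|\bH^{(k)}_i-\bH^{(k)}_j\|\ge \alpha_k(\Delta-2\eta_0)-2\eta_0 ,
\]
so $\MAD_\inter^{(k)}$ is bounded below by the same quantity.

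To match the stated constant $4\eta_0+\nu$ in the denominator, I will normalise so that $\alpha_k\le 1$. This holds for the convex-combination residuals used by APPNP and GCNII, and the step should be stated explicitly as part of the IFR convention. Alternatively, I can absorb $\alpha_k$ by rescaling, since ECI is scale-invariant up to $\nu$. With $\alpha_k\le1$ the intra bound is at most $4\eta_0$. In the numerator I replace $\alpha_k$ by $\alpha_{\min}$, using monotonicity in $\alpha_k$, which is valid because $\Delta-2\eta_0>0$. The hypothesis $\eta_0<\alpha_{\min}\Delta/[2(1+\alpha_{\min})]$ is equivalent to $\alpha_{\min}(\Delta-2\eta_0)-2\eta_0>0$, so the bound is positive. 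Permanence then follows because the bound is uniform in $k$: as long as $\eta_k\le\eta_0$ along a tail of depths, which is the natural regime since $f_k$ is a smoothing of $\bH^{(0)}$ and $\eta_k$ is nonincreasing under a Markov operator, $\liminf_k\ECI^{(k)}$ is at least this positive constant.

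For part (b), I will unroll the PLR recursion as $\bH^{(k)}=P_k\bH^{(0)}$ with $P_k=\prod_{\ell\le k}\bigl((1-\beta_\ell)\Ahat+\beta_\ell\bI\bigr)$, a product of lazy Markov operators. The key observation is that at $\varepsilon=0$ (in the $n\to\infty$ limit), $\Ahat$ is block-diagonal and row-stochastic per block. Each community's centroid is then exactly preserved, so $\bar{\bh}^{(k)}_1-\bar{\bh}^{(k)}_2=\bar{\bh}^{(0)}_1-\bar{\bh}^{(0)}_2$. For $\varepsilon>0$, the per-step change of the centroid difference is governed by the fraction of a node's edges that cross communities, which is $\varepsilon/(1+\varepsilon)+o(1)$ w.h.p. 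The centroid gap therefore contracts at most by a factor $\bigl(1-2(1-\beta_\ell)\varepsilon/(1+\varepsilon)\bigr)$ per layer. Up to concentration errors, this is the second eigenvalue $1-2\varepsilon/(1+\varepsilon)$ of the expected quotient chain, made lazy.

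After $k$ layers the gap is at least $\Delta(1-2\varepsilon)^k-o(1)$, while intra-pair distances stay bounded by $2\eta_0$ plus a perturbation. This means $\MAD_\inter^{(k)}\ge \Delta(1-2\varepsilon)^k-2\eta_0-o(1)$. Given $\gamma_\inter$, this stays above $\gamma_\inter\MAD_\inter^{(0)}$ for all $k<c\log(1/\gamma')/\varepsilon$, which forces $\tau_\inter\gtrsim 1/\varepsilon\to\infty$. For any fixed $K$, choosing $\varepsilon$ small enough makes $\MAD_\inter^{(k)}>0$ for all $k\le K$, hence $\ECI^{(k)}>0$. The factor $\beta_{\min}$ only slows intra-community mixing; it does not affect the lower bound on the inter gap.

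The main obstacle is the perturbation step in (b): controlling $P_k$ for a random SBM relative to the block-diagonal limit uniformly over $k\le K$. I will first try a simple entrywise bound: each row of $\Ahat$ puts mass at most $\varepsilon/(1+\varepsilon)+o(1)$ off-block w.h.p., by Chernoff bounds on degrees as $n\to\infty$. This gives a per-step bound $\|\bar{\bh}^{(k)}_1-\bar{\bh}^{(k)}_2\|\ge(1-2\varepsilon')\|\bar{\bh}^{(k-1)}_1-\bar{\bh}^{(k-1)}_2\|$, plus a term involving the within-block deviations, which are themselves bounded by $\eta_0$ since averaging is a contraction. That term is the one requiring care.
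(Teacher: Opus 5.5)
Your part (a) is the paper's argument. It uses the same splitting $\bH^{(k)}_i-\bH^{(k)}_j=\alpha_k(\bH^{(0)}_i-\bH^{(0)}_j)+([f_k]_i-[f_k]_j)$ and the same two triangle inequalities. It also relies on the same normalisation $\alpha_k\le 1$, which the paper uses without stating it in the IFR definition; you are right to flag it.

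Two of your side remarks should be dropped. First, rescaling cannot replace $\alpha_k\le1$, because $\nu$ breaks scale invariance and rescaling moves $\eta_k$ relative to $\eta_0$. Second, $\eta_k\le\eta_0$ does not hold ``naturally'' on a tail of depths. $\eta_k$ is a supremum over \emph{all} pairs, inter-community ones included. Smoothing only shrinks it from about half the full diameter, which already exceeds $\eta_0$ here because $\eta_0<\Delta/4$. For APPNP with small $\varepsilon$, $f_k$ keeps an inter-community gap of order $(1-\alpha)\Delta$. The $\liminf$ statement is therefore conditional on $\eta_k\le\eta_0$ for all large $k$, exactly as in the paper.

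Part (b) is where the genuine gap lies, and it sits precisely where you depart from the paper. The paper argues spectrally: $\Ahat$ has limiting community eigenvalue $\lambda_2=1-2\varepsilon$, with lazy version $\mu_2=1-2(1-\beta_{\min})\varepsilon$. It also \emph{assumes the initial features are aligned with the community eigenvector}. Under that assumption $\MAD_{\inter}^{(0)}$ equals the centroid gap and $\MAD_{\inter}^{(k)}\ge\mu_2^k\MAD_{\inter}^{(0)}$, which gives $\tau_{\inter}\ge\log(1/\gamma_{\inter})/\log(1/\mu_2)\to\infty$.

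You keep general $\boldsymbol{\xi}^{(0)}$. Your bound $\MAD_{\inter}^{(k)}\ge\Delta(1-2\varepsilon)^k-2\eta_0-o(1)$ exceeds $\gamma_{\inter}\MAD_{\inter}^{(0)}$ only if $\gamma_{\inter}\MAD_{\inter}^{(0)}<\Delta-2\eta_0$, which is your $\gamma'<1$. The hypotheses do not give this: $\gamma_{\inter}\in(0,1)$ is arbitrary, and $\MAD_{\inter}^{(0)}$ can reach $\|\bar{\bh}^{(0)}_1-\bar{\bh}^{(0)}_2\|+2\eta_0$.

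No sharper estimate repairs this. Take $\boldsymbol{\xi}^{(0)}$ orthogonal to the centroid gap and independent of the graph, with constant $\beta<1$. Neighbour averages of $\boldsymbol{\xi}$ vanish as $n\to\infty$, so within-block deviations decay like $\beta^k$. Then $\MAD_{\inter}^{(k)}$ approaches the centroid gap, which is strictly below $\MAD_{\inter}^{(0)}$. So for $\gamma_{\inter}$ close to $1$, $\tau_{\inter}$ is reached at a depth that does not grow as $\varepsilon\to0$. You need either the paper's alignment assumption or an explicit condition on $\gamma_{\inter}$.

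The step you flag yourself is also open as written. Bounding the deviation term by $\eta_0$ per layer lets errors accumulate linearly in $k$ until they swamp the gap. A clean fix is to track convex hulls instead of centroids:
\begin{itemize}
\item Each row of the PLR update is a convex combination placing mass at most $q_{\max}=\varepsilon/(1+\varepsilon)+o(1)$ outside the node's own block.
\item All rows stay in the hull of the initial rows; call its radius $R$.
\item Hence block $c$'s rows at depth $k$ lie within $\eta_0+2kq_{\max}R$ of $\bar{\bh}^{(0)}_c$, for any $\beta_\ell\in[0,1]$.
\end{itemize}
This gives inter-community distances of at least $\|\bar{\bh}^{(0)}_1-\bar{\bh}^{(0)}_2\|-2\eta_0-O(k\varepsilon R)$. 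That yields the fixed-$K$ claim directly, and, under the threshold condition above, $\tau_{\inter}=\Omega(1/\varepsilon)$.

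Finally, ``the window widens without bound'' also needs $\tau_{\intra}$ to stay bounded independently of $\varepsilon$. You do not address this; the paper appeals to the intra-community spectral gap.
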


The condition $\eta_0<\alpha_{\min}\Delta/[2(1+\alpha_{\min})]$ is sufficient rather than universal. We make the architectural correspondence explicit. APPNP satisfies IFR directly: $\mathbf{H}^{(k+1)}=(1-\alpha)\hat{\mathbf{A}}\mathbf{H}^{(k)}+\alpha\mathbf{H}^{(0)}$ has constant retention weight $\alpha_{\min}=\alpha$. GCNII, $\mathbf{H}^{(k+1)}=\sigma\big(((1-\alpha_k)\hat{\mathbf{A}}\mathbf{H}^{(k)}+\alpha_k\mathbf{H}^{(0)})((1-\beta_k)\mathbf{I}+\beta_k\mathbf{W}^{(k)})\big)$, is covered only under its linearised, identity-weight form: dropping $\sigma$ and setting $\beta_k=\log(\lambda/k+1)\to0$ recovers IFR with weight $\alpha_k$, but this is an approximation to the trained architecture, not an exact match. GPR-GNN reduces to APPNP at $\gamma_k=\alpha(1-\alpha)^k$ and is covered only at that fixed-coefficient setting; free $\gamma_k$ does not guarantee $\alpha_{\min}>0$. JK-Net retains earlier layers via concatenation rather than a per-step residual, so $\|\mathbf{H}^{\mathrm{cat}}_i-\mathbf{H}^{\mathrm{cat}}_j\|^2\geq\|\mathbf{H}^{(m)}_i-\mathbf{H}^{(m)}_j\|^2$ for any retained block $m$ lower-bounds $\mathrm{MAD}_{\mathrm{inter}}$ but not $\mathrm{MAD}_{\mathrm{intra}}$ from above, so it does not directly satisfy PLR. The theorem thus gives mechanistic connections to these architectures rather than exact coverage of their complete trained forms.

\subsection{Task-Conditional Consequences: Node Classification}
\label{sec:tasks}

Operating in the echo chamber reduces intra-community variance and preserves
inter-community separation. We consider the implications of this region for
node classification, where performance depends on how community structure
aligns with class labels. The following theorem formalises this relationship.
The proof is provided in the Appendix.

\begin{theorem}[Task-conditional consequence]
\label{thm:task}
Consider node classification on
$\G\sim\mathrm{SBM}(n,m,p_{\intra},p_{\inter})$.
Suppose
\[
\mathbf{H}_i
=
\bar{\mathbf{h}}_{c(i)}
+
\boldsymbol{\xi}_i,
\]
with
$\|\boldsymbol{\xi}_i\|\leq\eta$,
$\|\bar{\mathbf{h}}_c-\bar{\mathbf{h}}_{c'}\|\geq\Delta>0$
for all $c\neq c'$, and $\eta<\Delta/2$, where $\mathrm{FR}$ denotes the
Fisher Ratio.
\begin{enumerate}[label=(\alph*),topsep=2pt,itemsep=1pt]
\item \textbf{Homophily} ($c(i)=c(j)\iff y_i=y_j$):
\[
\mathrm{FR}(c,c')
\geq
\frac{(\Delta-2\eta)^2}{8\eta^2},
\qquad
\mathrm{FR}(c,c')
:=
\frac{\|\bar{\mathbf{H}}_c-\bar{\mathbf{H}}_{c'}\|^2}
{\mathrm{tr}(\mathrm{Cov}(\mathbf{H}_c))
+
\mathrm{tr}(\mathrm{Cov}(\mathbf{H}_{c'}))},
\]
where
\[
\bar{\mathbf{H}}_c
=
\frac{1}{|C_c|}
\sum_{i\in C_c}\mathbf{H}_i.
\]

\item \textbf{Heterophily}
($\exists\,i,j\in C_s$ with $y_i=a\neq b=y_j$):
\[
\|\mathbf{H}_i-\mathbf{H}_j\|
\leq
2\eta,
\qquad
\|\mathbf{w}_a-\mathbf{w}_b\|
\geq
\frac{\gamma}{\eta},
\]
for any multiclass linear classifier
$f_c(\mathbf{h})=\mathbf{w}_c^\top\mathbf{h}+b_c$
that classifies both $i$ and $j$ correctly with margin $\gamma>0$.
Equivalently, if $\|\mathbf{w}_a-\mathbf{w}_b\|\leq B$, then no margin
$\gamma>B\eta$ is achievable.
\end{enumerate}
\end{theorem}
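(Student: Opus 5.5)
Both parts follow directly from the concentration model $\mathbf{H}_i=\bar{\mathbf{h}}_{c(i)}+\boldsymbol{\xi}_i$ with $\|\boldsymbol{\xi}_i\|\leq\eta$; the SBM enters only through this hypothesis. For part (a) we bound the numerator of $\mathrm{FR}(c,c')$ from below and the denominator from above. Averaging over $C_c$ gives $\bar{\mathbf{H}}_c=\bar{\mathbf{h}}_c+\bar{\boldsymbol{\xi}}_c$, where $\bar{\boldsymbol{\xi}}_c=\frac{1}{|C_c|}\sum_{i\in C_c}\boldsymbol{\xi}_i$. By convexity of the norm, $\|\bar{\boldsymbol{\xi}}_c\|\leq\eta$. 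The triangle inequality then gives $\|\bar{\mathbf{H}}_c-\bar{\mathbf{H}}_{c'}\|\geq\|\bar{\mathbf{h}}_c-\bar{\mathbf{h}}_{c'}\|-2\eta\geq\Delta-2\eta>0$, using $\eta<\Delta/2$. Squaring yields the numerator bound $(\Delta-2\eta)^2$.

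For the denominator we use the variance identity for the (population) covariance:
\[
\mathrm{tr}(\mathrm{Cov}(\mathbf{H}_c))=\frac{1}{|C_c|}\sum_{i\in C_c}\|\mathbf{H}_i-\bar{\mathbf{H}}_c\|^2=\frac{1}{|C_c|}\sum_{i\in C_c}\|\boldsymbol{\xi}_i-\bar{\boldsymbol{\xi}}_c\|^2 .
\]
The mean minimises the average squared deviation, so this is at most $\frac{1}{|C_c|}\sum_{i}\|\boldsymbol{\xi}_i\|^2\leq\eta^2$. Summing the two classes gives a denominator of at most $2\eta^2$, and hence $\mathrm{FR}\geq(\Delta-2\eta)^2/(2\eta^2)$, which is stronger than the stated bound. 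The stated $8\eta^2$ also follows from the cruder estimate $\|\boldsymbol{\xi}_i-\bar{\boldsymbol{\xi}}_c\|\leq2\eta$, which gives $4\eta^2$ per class. I would present this cruder route to match the constant and remark that it can be tightened. Homophily is used only to identify the classes with the communities $C_c$.

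For part (b), take $i,j\in C_s$. Since they share a centroid, $\mathbf{H}_i-\mathbf{H}_j=\boldsymbol{\xi}_i-\boldsymbol{\xi}_j$, so $\|\mathbf{H}_i-\mathbf{H}_j\|\leq2\eta$. For the margin statement, correct classification with margin $\gamma$ means $f_a(\mathbf{H}_i)-f_b(\mathbf{H}_i)\geq\gamma$ and $f_b(\mathbf{H}_j)-f_a(\mathbf{H}_j)\geq\gamma$. Adding these inequalities cancels the biases and gives
\[
(\mathbf{w}_a-\mathbf{w}_b)^\top(\mathbf{H}_i-\mathbf{H}_j)\geq2\gamma .
\]
Cauchy--Schwarz then gives $2\gamma\leq\|\mathbf{w}_a-\mathbf{w}_b\|\cdot2\eta$, that is, $\|\mathbf{w}_a-\mathbf{w}_b\|\geq\gamma/\eta$. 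The contrapositive, that $\|\mathbf{w}_a-\mathbf{w}_b\|\leq B$ forces $\gamma\leq B\eta$, is immediate.

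Nothing here is genuinely hard. The step needing the most care is fixing the margin convention in (b). I would define margin as the pairwise score gap $f_{y}(\mathbf{h})-f_{c}(\mathbf{h})\geq\gamma$ for all $c\neq y$, and use only the $a$-versus-$b$ comparison. The bias cancellation when the two inequalities are added is exactly what makes the bound independent of $b_a,b_b$. I would also state explicitly the covariance normalisation used in (a), since the constant in the Fisher ratio bound depends on it.
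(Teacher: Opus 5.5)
Your proposal is correct and matches the paper's proof essentially step by step: the decomposition $\mathbf{H}_i-\bar{\mathbf{H}}_c=\boldsymbol{\xi}_i-\bar{\boldsymbol{\xi}}_c$ with the crude $2\eta$ bound giving the $8\eta^2$ denominator, the triangle inequality for the centroid gap, and adding the two margin inequalities so the biases cancel before Cauchy--Schwarz. Your tightening remark, which the paper does not make, is also right: because the mean minimises average squared deviation, the trace is at most $\eta^2$ per class, so $\mathrm{FR}\geq(\Delta-2\eta)^2/(2\eta^2)$. This route also keeps the stated constant valid under the unbiased $1/(|C_c|-1)$ normalisation, which the crude route alone does not.
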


Under homophily, where communities coincide with classes, as $\eta\to0$ the
intra-class variance vanishes while inter-class separation remains bounded.
The Fisher ratio therefore grows without bound, improving class separability
and confirming that the echo chamber can be beneficial under homophily. In
contrast, under heterophily, nodes with different labels can belong to the same
community and their representations collapse:
$\|\mathbf{H}_i-\mathbf{H}_j\|\leq2\eta$. Maintaining a fixed margin
$\gamma>0$ then requires
$\|\mathbf{w}_a-\mathbf{w}_b\|\geq\gamma/\eta$, demonstrating that the echo
chamber makes margin-based classification increasingly difficult as
$\eta\to0$.
\section{Community-Aware Split Propagation (CASP)}
\label{sec:casp}

Our theoretical analysis shows that standard propagation can enter an echo chamber
region, yet the relative strength of these two effects is fixed by
the graph structure. Theorem~\ref{thm:task} shows this phenomenon is beneficial under 
homophily but harmful under heterophily, and the direction depends on whether 
intra- or inter-community edges are more label-consistent. This motivates 
explicit, data-driven control over the two mixing processes. We do this by 
decomposing the adjacency matrix and weighting the two components separately.

\begin{definition}[CASP]
\label{def:casp}
Let $\mathcal{C}=\{C_1,\ldots,C_m\}$ be a community partition obtained 
by any community detection algorithm.
We define community-aware split propagation as:
\[
\bH^{(k)} =
\alpha_{\intra}\mathbf{P}_{\intra}\bH^{(k-1)}
+
\alpha_{\inter}\mathbf{P}_{\inter}\bH^{(k-1)},
\]
where $\mathbf{P}$ is the propagation operator of the backbone GNN and
$\mathbf{P}_{\intra}$, $\mathbf{P}_{\inter}$ are its restrictions to
intra- and inter-community edges, with
$\mathbf{P}=\mathbf{P}_{\intra}+\mathbf{P}_{\inter}$.
\end{definition}

For GCN, $\mathbf{P}=\tilde{\mathbf{A}}=\mathbf{D}^{-1/2}(\mathbf{A}+\mathbf{I})\mathbf{D}^{-1/2}$ 
and the layer becomes $\bH^{(k)}=(\alpha_{\intra}\tilde{\mathbf{A}}_{\intra}+
\alpha_{\inter}\tilde{\mathbf{A}}_{\inter})\bH^{(k-1)}\mathbf{W}^{(k)}$.  We define the \textbf{propagation ratio} $r=\alpha_{\intra}/\alpha_{\inter}$, 
which controls the balance between the two: increasing $r$ strengthens 
intra-community aggregation, while decreasing $r$ promotes cross-community mixing. The high-level architecture of CASP is depicted in Figure \ref{fig:casp} (Appendix).

\subsection{Mass-Preserving Parameterisation}

We determine $(\alpha_{\intra},\alpha_{\inter})$ by requiring CASP to preserve
expected aggregation mass. Since $p_i=k_i^{\intra}/d_i$ varies across nodes,
a per-node constraint cannot be satisfied with shared scalars. Instead, we
enforce the constraint in expectation: under the balanced SBM, $p_i$
concentrates around its mean $p:=|\mathcal{E}_{\intra}|/|\mathcal{E}|$,
so we require
\[
\alpha_{\intra}p+\alpha_{\inter}(1-p)=1.
\]
Combining this with the propagation ratio
$r$, which controls the balance between
intra- and inter-community aggregation, yields
\begin{equation}
\alpha_{\intra}=\frac{r}{rp+(1-p)},
\qquad
\alpha_{\inter}=\frac{1}{rp+(1-p)}.
\label{eq:alphas}
\end{equation}
Increasing $r$ strengthens intra-community aggregation while decreasing $r$
promotes cross-community mixing.
We establish three key properties of this parameterisation in
Appendix~\ref{app:casp_proofs}: (i) at $r=1$, CASP recovers standard
propagation; (ii) $\alpha_{\intra}$ increases and $\alpha_{\inter}$ decreases
monotonically in $r$, providing stable interpretable control with no abrupt
transitions; and (iii) each node aggregates the same expected total mass as
under standard propagation for any $r>0$, so CASP rebalances the
\emph{source} of information without changing its \emph{scale}.
In particular, $r>1$ biases aggregation toward intra-community edges, which
Theorem~\ref{thm:task} shows is beneficial under homophily, while $r<1$
promotes cross-community mixing, beneficial under heterophily.

\subsection{Data-Driven Calibration}
\label{sec:adaptive}

The appropriate direction of $r$ depends on whether intra-community edges are
more label-consistent than inter-community edges. We estimate this directly
from labelled training edges. Specifically,
\[
h_{\intra}
=
\frac{|\{(i,j)\in\mathcal E_{\intra}^{\mathrm{tr}}: y_i=y_j\}|}
{|\mathcal E_{\intra}^{\mathrm{tr}}|},
\qquad
h_{\inter}
=
\frac{|\{(i,j)\in\mathcal E_{\inter}^{\mathrm{tr}}: y_i=y_j\}|}
{|\mathcal E_{\inter}^{\mathrm{tr}}|},
\]
and $\delta = h_{\intra}-h_{\inter}$. We then set $r=\exp(\theta\delta)$
with learnable $\theta>0$. The exponential ensures $r>0$ for all
$\delta$, maps $\delta=0$ to $r=1$ (recovering standard propagation), and
grows or decays symmetrically around this baseline. A positive $\delta$
indicates that intra-community edges are more label-consistent, in which case
$r>1$ emphasises within-community propagation. Conversely, when $\delta<0$,
cross-community edges are more informative and $r<1$ promotes mixing.

\subsection{Complexity Analysis}
\label{sec:complexity}

CASP introduces one scalar parameter $\theta$. Preprocessing consists of
community detection, edge splitting, and computing the statistics $p$ and
$\delta$. Community detection is performed once as a preprocessing step using
the Louvain algorithm~ with $O(|\mathcal{E}|)$
complexity, followed by $O(|\mathcal{E}|)$ for edge splitting. $\delta$ is
computed directly from labelled training edges in $O(|\mathcal{E}|)$,
introducing no additional learned parameters or modules. During training,
each layer performs two sparse propagation steps over $\Ahat_{\intra}$ and
$\Ahat_{\inter}$. Since
$|\mathcal{E}_{\intra}|+|\mathcal{E}_{\inter}|=|\mathcal{E}|$, the total
message-passing cost remains $O(|\mathcal{E}|d)$ per layer up to a constant
factor, with memory linear in the number of edges.
\section{Experiments}

\subsection{Experimental Setup}

We provide experiments on diverse aspects of our work, focusing on the echo
chamber effect, the ECI metric, and the CASP plugin. We evaluate CASP on eleven
node classification benchmarks: five
homophilic~\cite{bojchevski2018deep} and six
heterophilic~\cite{pei2020geom,platonov2023critical} datasets. We augment five
backbone GNNs with CASP, selected to cover a broad design spectrum:
GCN~\cite{kipf2017semi} as the standard message-passing baseline;
APPNP~\cite{gasteiger2019predict} as a propagation-decoupled method;
MixHop~\cite{abu2019mixhop} as a multi-hop aggregation model;
FAGCN~\cite{bo2021beyond} as a spectral architecture; and
DirGNN~\cite{rossi2024edge} as a direction-aware model. Community detection is
performed once as a preprocessing step using the Louvain algorithm.
For node classification, we use ten independently generated
random 60/20/20 train/validation/test splits following
\cite{chen2025graph,hevapathige2026beyond}, with a distinct seed for each
split, and report the mean and standard deviation across the ten runs.
Dataset statistics, model hyperparameters, and implementation details are
provided in Appendix~\ref{sec:exp_des}.

\subsection{Echo Chamber Across Graph Types}

We validate the echo chamber effect and ECI diagnostic on both synthetic and
real-world graphs.

We first sweep $\epsilon=p_{\mathrm{inter}}/p_{\mathrm{intra}}$ on SBM graphs
to test Proposition~\ref{prop:dirichlet}. As $\epsilon$ decreases, ECI peaks
grow and the echo chamber window of Definition~\ref{def:window} widens, while
at $\epsilon=0.4$ ECI stays near unity and the region disappears, confirming
that the diagnostic gap is a consequence of modularity rather than
architecture. GCN eventually exits the echo chamber at sufficient depth,
exhibiting the non-monotonic arc predicted by our two-timescale analysis.
APPNP, GCNII, and GPR-GNN instead plateau at elevated ECI
and do not return to unity within the evaluated depth range, consistent with
persistent echo-chamber behaviour under feature retention.

\begin{figure}[htbp]
\centering
\includegraphics[width=\linewidth]{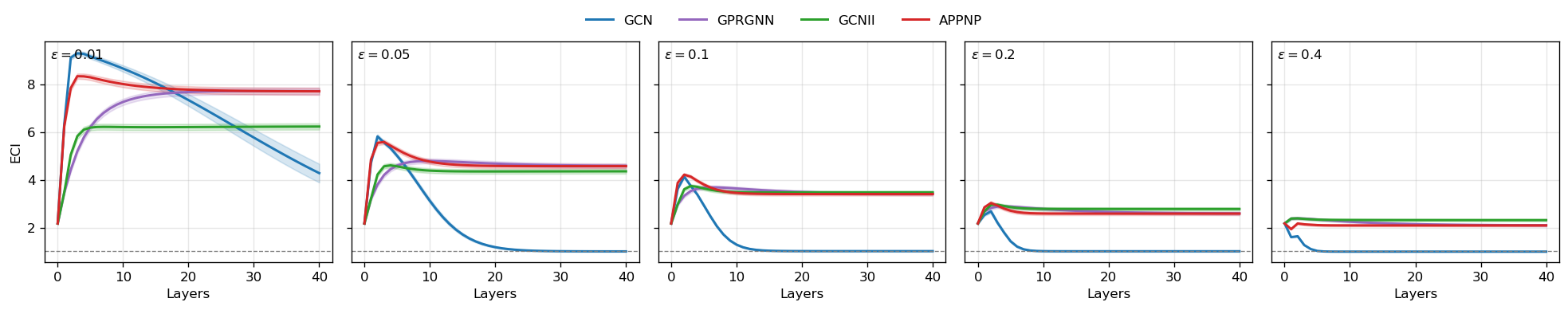}
\caption{SBM sweep ($n=1500$, $m=5$, $\mathbb{E}[\deg]=20$): ECI depth
profiles across $\epsilon=p_{\mathrm{inter}}/p_{\mathrm{intra}}$.}
\label{fig:sbm-sweep}
\end{figure}

Figure~\ref{fig:real-graphs} extends these findings to real-world graphs. On
homophilic graphs, ECI rises and remains elevated throughout, strengthening
with modularity $Q$: Citeseer ($Q=0.89$) and DBLP ($Q=0.76$) sustain the
largest echo chambers. High-modularity heterophilic graphs, Roman-Empire
($Q=0.99$) and Amazon-Ratings ($Q=0.97$), exhibit persistent echo chambers
despite their heterophilic label structure, consistent with
Theorem~\ref{thm:task}, which predicts that this region is harmful in this
setting. At low modularity (Tolokers, $Q=0.52$; Film, $Q=0.50$), ECI stays
near unity, and on Film, GCN drops below unity, suggesting that
inter-community representations become more homogeneous than intra-community
ones. Across all graphs, APPNP, GCNII, and GPR-GNN reach
moderate ECI plateaus and do not exit within the evaluated depth range,
reproducing the finite-depth behaviour observed on the SBM.

\begin{figure}[htbp]
\centering
\includegraphics[width=\linewidth]{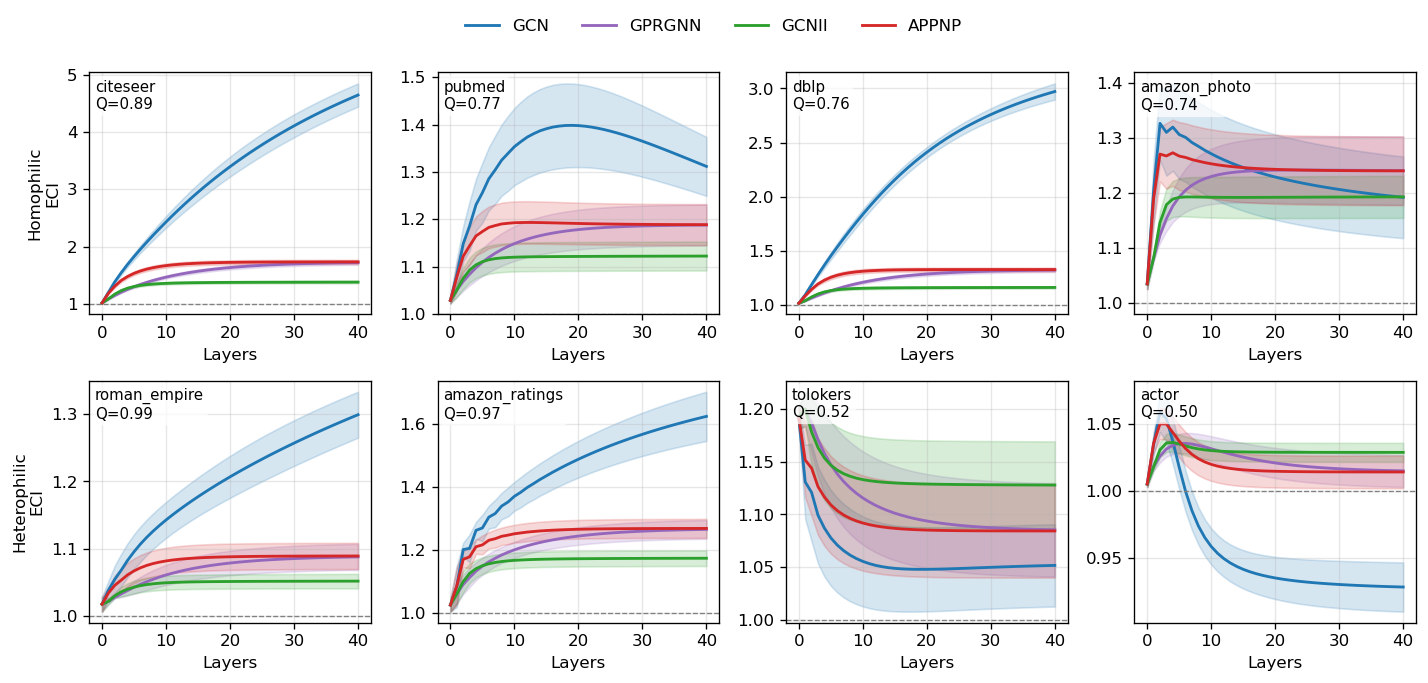}
\caption{ECI profiles for real-world homophilic (top row) and heterophilic
(bottom row) graphs.}
\label{fig:real-graphs}
\end{figure}

\subsection{Node Classification Performance of CASP}

Table~\ref{tab:casp_results} reports node classification performance. CASP improves its corresponding backbone on majority of the
dataset--model combinations. On homophilic datasets, the gains are consistent
with Theorem~\ref{thm:task}(a): stronger intra-community propagation reduces
within-class variance when communities align with labels. Gains are often
larger on heterophilic datasets, where CASP promotes cross-community mixing
through $r<1$. Note that the improvement depends on the backbone:
CASP modifies propagation but does not replace the need for an appropriate
aggregation architecture. The three observed losses are small relative to
the corresponding backbone performance.

\begin{table*}[htbp]
\centering
\caption{Node classification on homophilic and heterophilic benchmarks. Each
base model is paired with its CASP-augmented variant
(\textcolor{darkgreen}{$\uparrow$} gain,
\textcolor{red}{$\downarrow$} loss). ROC-AUC (\%) is reported for
Minesweeper, Tolokers, and Questions; accuracy (\%) is reported elsewhere.
A subset of baseline results is taken from
\cite{chen2025graph,hevapathige2026beyond,mo2025autosgnn}; the remaining
baselines are reproduced using the hyperparameters reported in their original
papers.}
\label{tab:casp_results}
\setlength{\tabcolsep}{4pt}
\renewcommand{\arraystretch}{1.15}
\resizebox{\textwidth}{!}{%
\begin{tabular}{l cccccc ccccc}
\toprule
& \multicolumn{5}{c}{\textbf{Homophilic}}
& \multicolumn{6}{c}{\textbf{Heterophilic}} \\
\cmidrule(lr){2-6} \cmidrule(lr){7-12}
\textbf{Method}
& Cora Full & Citeseer & Pubmed & DBLP & Photo & Film
& Minesweeper & Tolokers & Questions & Amazon-Ratings & Roman-Empire \\
\midrule

GCN
& 68.06\tiny{$\pm$0.98} & 76.68\tiny{$\pm$1.64} & 86.74\tiny{$\pm$0.47} & 83.93\tiny{$\pm$0.84} & 89.30\tiny{$\pm$0.82}
& 30.26\tiny{$\pm$0.79} & 74.79\tiny{$\pm$1.78} & 79.61\tiny{$\pm$0.66} & 65.47\tiny{$\pm$0.88} & 37.99\tiny{$\pm$0.61} & 71.23\tiny{$\pm$0.22} \\
\rowcolor{gray!8}
GCN + CASP
& 70.62\tiny{$\pm$0.56} & 79.36\tiny{$\pm$1.89} & 89.68\tiny{$\pm$0.50} & 87.06\tiny{$\pm$0.87} & 94.83\tiny{$\pm$0.43}
& 37.23\tiny{$\pm$2.17} & 84.39\tiny{$\pm$1.52} & 80.49\tiny{$\pm$0.56} & 76.01\tiny{$\pm$1.29} & 49.57\tiny{$\pm$0.95} & 74.18\tiny{$\pm$0.51} \\
\rowcolor{gray!8}
& \gain{2.56} & \gain{2.68} & \gain{2.94} & \gain{3.13} & \gain{5.53}
& \gain{6.97} & \gain{9.60} & \gain{0.88} & \gain{10.54} & \gain{11.58} & \gain{2.95} \\
\midrule

APPNP
& 70.63\tiny{$\pm$0.51} & 77.89\tiny{$\pm$1.07} & 83.37\tiny{$\pm$0.42} & 85.36\tiny{$\pm$0.40} & 93.74\tiny{$\pm$0.42}
& 36.44\tiny{$\pm$1.78} & 62.62\tiny{$\pm$1.66} & 68.98\tiny{$\pm$0.96} & 65.04\tiny{$\pm$1.42} & 43.48\tiny{$\pm$0.68} & 64.98\tiny{$\pm$0.44} \\
\rowcolor{gray!8}
APPNP + CASP
& 70.97\tiny{$\pm$0.48} & 78.28\tiny{$\pm$1.56} & 88.76\tiny{$\pm$0.35} & 84.62\tiny{$\pm$0.50} & 95.13\tiny{$\pm$0.49}
& 38.15\tiny{$\pm$1.59} & 63.07\tiny{$\pm$1.40} & 72.43\tiny{$\pm$0.89} & 66.20\tiny{$\pm$1.17} & 47.38\tiny{$\pm$1.25} & 68.73\tiny{$\pm$0.34} \\
\rowcolor{gray!8}
& \gain{0.34} & \gain{0.39} & \gain{5.39} & \loss{0.74} & \gain{1.39}
& \gain{1.71} & \gain{0.45} & \gain{3.45} & \gain{1.16} & \gain{3.90} & \gain{3.75} \\
\midrule

MixHop
& 64.85\tiny{$\pm$1.08} & 70.75\tiny{$\pm$2.95} & 80.75\tiny{$\pm$2.29} & 84.27\tiny{$\pm$0.31} & 94.83\tiny{$\pm$0.41}
& 32.22\tiny{$\pm$2.34} & 90.46\tiny{$\pm$0.51} & 74.87\tiny{$\pm$0.61} & 75.90\tiny{$\pm$1.10} & 42.81\tiny{$\pm$0.90} & 81.78\tiny{$\pm$0.52} \\
\rowcolor{gray!8}
MixHop + CASP
& 65.27\tiny{$\pm$1.35} & 74.83\tiny{$\pm$3.09} & 91.13\tiny{$\pm$0.43} & 89.04\tiny{$\pm$1.95} & 94.85\tiny{$\pm$0.54}
& 39.42\tiny{$\pm$1.90} & 90.60\tiny{$\pm$0.48} & 75.29\tiny{$\pm$0.68} & 76.30\tiny{$\pm$0.99} & 44.99\tiny{$\pm$2.47} & 81.88\tiny{$\pm$0.45} \\
\rowcolor{gray!8}
& \gain{0.42} & \gain{4.08} & \gain{10.38} & \gain{4.77} & \gain{0.02}
& \gain{7.20} & \gain{0.14} & \gain{0.42} & \gain{0.40} & \gain{2.18} & \gain{0.10} \\
\midrule

FAGCN
& 70.71\tiny{$\pm$0.74} & 83.51\tiny{$\pm$0.43} & 86.08\tiny{$\pm$0.33} & 85.37\tiny{$\pm$0.89} & 93.67\tiny{$\pm$0.50}
& 31.59\tiny{$\pm$1.37} & 85.19\tiny{$\pm$1.27} & 79.56\tiny{$\pm$0.77} & 72.22\tiny{$\pm$1.76} & 42.63\tiny{$\pm$1.20} & 68.59\tiny{$\pm$1.41} \\
\rowcolor{gray!8}
FAGCN + CASP
& 72.32\tiny{$\pm$0.76} & 81.47\tiny{$\pm$1.57} & 89.77\tiny{$\pm$0.20} & 85.80\tiny{$\pm$1.41} & 95.09\tiny{$\pm$0.44}
& 38.18\tiny{$\pm$1.10} & 85.64\tiny{$\pm$0.64} & 80.18\tiny{$\pm$1.06} & 73.90\tiny{$\pm$2.02} & 45.37\tiny{$\pm$1.56} & 71.43\tiny{$\pm$0.80} \\
\rowcolor{gray!8}
& \gain{1.61} & \loss{2.04} & \gain{3.69} & \gain{0.43} & \gain{1.42}
& \gain{6.59} & \gain{0.45} & \gain{0.62} & \gain{1.68} & \gain{2.74} & \gain{2.84} \\
\midrule

DirGNN
& 67.80\tiny{$\pm$0.53} & 77.71\tiny{$\pm$0.78} & 86.94\tiny{$\pm$0.55} & 81.22\tiny{$\pm$0.54} & 95.38\tiny{$\pm$0.32}
& 35.76\tiny{$\pm$1.68} & 81.52\tiny{$\pm$0.41} & 82.64\tiny{$\pm$0.75} & 59.95\tiny{$\pm$0.79} & 46.66\tiny{$\pm$0.61} & 80.08\tiny{$\pm$1.07} \\
\rowcolor{gray!8}
DirGNN + CASP
& 68.88\tiny{$\pm$0.82} & 78.32\tiny{$\pm$2.89} & 90.03\tiny{$\pm$0.53} & 83.93\tiny{$\pm$1.82} & 95.15\tiny{$\pm$0.45}
& 40.49\tiny{$\pm$1.80} & 89.77\tiny{$\pm$0.56} & 84.06\tiny{$\pm$0.63} & 75.26\tiny{$\pm$1.06} & 49.09\tiny{$\pm$1.19} & 80.12\tiny{$\pm$0.70} \\
\rowcolor{gray!8}
& \gain{1.08} & \gain{0.61} & \gain{3.09} & \gain{2.71} & \loss{0.23}
& \gain{4.73} & \gain{8.25} & \gain{1.42} & \gain{15.31} & \gain{2.43} & \gain{0.04} \\
\bottomrule
\end{tabular}%
}
\end{table*}

\paragraph{Additional Experiments}

Appendix~\ref{app:additional} provides additional experiments related to ECI,
including robustness to the community detection algorithm, partition noise,
initial feature conditions, comparison with existing metrics, and the empirical validation of theoretical assumptions. It also
provides CASP component ablations, robustness to sparse data splits and noisy
partitions, depth analysis, comparisons with existing baselines, empirical
validation of echo-chamber window modulation, scalability analysis, and
sensitivity to the Louvain resolution parameter.
\section{Related Work}

\paragraph{Oversmoothing diagnostics.}
Oversmoothing in GNNs has been extensively studied since Li et
al.~\cite{li2018deeper}. Dirichlet
energy~\cite{cai2020note,bison2025analysis} and
MAD~\cite{chen2020measuring} are the dominant diagnostics, but both reduce
representations to a scalar, treating all graph regions as homogeneous and
failing to distinguish intra- from inter-community convergence. Zhou et
al.~\cite{zhou2020towards} propose the Group Distance Ratio (GDR), which
stratifies distances using class-label-defined groups but requires supervision
and primarily serves as a training objective rather than a structural
diagnostic. Keriven et al.~\cite{keriven2022not} shows that finite
smoothing can contract within-community variation faster than
between-community separation. ECI diagnoses this intermediate regime without
labels. ECI identifies three regions on modular graphs corresponding to the
two-timescale propagation process. Although echo chambers and related metrics
have been studied in social dynamics to describe ideological homogenisation
in online communities~\cite{cinelli2021echo,alatawi2023quantifying,putri2024echo},
our use of the term refers specifically to asymmetric representation collapse
during GNN message passing.

Several methods mitigate oversmoothing by changing propagation
depth or retaining earlier representations. DropEdge~\cite{rong2019dropedge}
randomly removes edges, SkipNode~\cite{lu2024skipnode} allows selected nodes to
bypass convolution, and Shadow-GNN~\cite{zeng2021decoupling} restricts the
receptive-field scope independently of network depth. These approaches can
slow smoothing or preserve earlier information, but they do not explicitly
measure or control the relative rates of intra- and inter-community
convergence.

\paragraph{Community structure and GNN design.}
Several works incorporate community structure into GNN design. Skryagin et
al.~\cite{skryagin2024graph} dynamically cluster nodes at each layer and
normalize within clusters to mitigate oversmoothing. Begga et
al.~\cite{begga2024community} use precomputed partitions to guide aggregation
across homophilic and heterophilic settings. Cluster-GCN
\cite{chiang2019cluster} uses graph partitions for scalable mini-batch
training, thereby favouring intra-cluster propagation within each batch. Unlike CASP, its purpose is computational scalability rather
than explicit control of the intra/inter-community propagation balance.
GDC~\cite{gasteiger2019diffusion} preprocesses the graph using diffusion-based
rewiring, whereas CASP retains the edge set and reweights structurally defined
edge classes during propagation.

Palowitch et al.~\cite{palowitch2022graphworld} and Platonov et
al.~\cite{platonov2023critical} show that label--community alignment predicts
GNN performance more reliably than homophily alone.
DiffPool~\cite{ying2018hierarchical} and
MinCutPool~\cite{bianchi2020spectral} learn soft community assignments for
graph-level tasks. CASP builds on this broader use of
community-aware aggregation but uses a label-calibrated parameter to
control the relative contributions of intra- and inter-community messages.
Its principal role in this work is therefore a theory-motivated intervention
for validating and modulating the propagation behaviour identified by ECI,
rather than a replacement for the underlying backbone architecture.
\section{Conclusion, Limitations, and Future Work}

This paper formalizes the echo chamber effect in GNNs, where
intra-community representations collapse while inter-community separation
persists. We introduced ECI to detect this effect and showed that feature-retention mechanisms can preserve it under the
conditions of our theoretical analysis. Our analysis demonstrated that the
echo chamber can benefit node classification when communities align with
classes but harm it when communities contain different classes. We also
proposed CASP, a lightweight plugin that decouples intra- and inter-community
propagation and learns their balance from label structure. Experiments showed
that CASP improves diverse backbones relative to their
corresponding unmodified versions across most evaluated settings.

Our theoretical analysis assumes a balanced SBM,
centroid-concentrated representations, and linearised
propagation. It therefore does not directly cover the learned weights,
nonlinearities, and training dynamics of complete GNN architectures.
CASP relies on a fixed preprocessing partition whose quality can affect
performance, although our experiments show robustness to substantial
partition noise. Its data-driven calibration also requires labelled training
edges, which may be scarce in low-label settings.
Moreover, CASP modifies propagation but does not replace the
representational capabilities of its backbone, so its absolute performance
remains backbone-dependent. Extending the theory to more general graph
models and trained nonlinear GNNs, jointly learning community structure, and
using ECI to guide GNN architecture design are natural directions for future
work.

\bibliographystyle{unsrt}
{
  \small
  \bibliography{references}
}

\clearpage
\newpage
\setcounter{lemma}{0}
\setcounter{proposition}{0}
\setcounter{theorem}{0}
\setcounter{section}{0}
\renewcommand{\thesection}{\Alph{section}}
\section*{Appendix}
\addcontentsline{toc}{section}{Appendix}

\vspace{0.5em}
\noindent\textbf{Table of Contents}
\vspace{0.3em}

\noindent
\hyperref[app:casp_arch]{A\quad CASP Architecture Overview} \dotfill \pageref{app:casp_arch}\\
\hyperref[app:proofs]{B\quad Proofs of Main Theoretical Results} \dotfill \pageref{app:proofs}\\
\hspace*{1.5em}\hyperref[app:proof_lemma1]{B.1\quad Proof of Lemma~1} \dotfill \pageref{app:proof_lemma1}\\
\hspace*{1.5em}\hyperref[app:proof_prop1]{B.2\quad Proof of Proposition~1} \dotfill \pageref{app:proof_prop1}\\
\hspace*{1.5em}\hyperref[app:proof_thm1]{B.3\quad Proof of Theorem~1} \dotfill \pageref{app:proof_thm1}\\
\hspace*{1.5em}\hyperref[app:proof_thm2]{B.4\quad Proof of Theorem~2} \dotfill \pageref{app:proof_thm2}\\
\hyperref[app:casp_proofs]{C\quad Proofs of CASP Theoretical Properties} \dotfill \pageref{app:casp_proofs}\\
\hyperref[sec:exp_des]{D\quad Experimental Design} \dotfill \pageref{sec:exp_des}\\
\hspace*{1.5em}\hyperref[app:datasets]{D.1\quad Dataset Statistics} \dotfill \pageref{app:datasets}\\
\hspace*{1.5em}\hyperref[app:hyperparams]{D.2\quad Model Hyperparameters} \dotfill \pageref{app:hyperparams}\\
\hspace*{1.5em}\hyperref[app:impl]{D.3\quad Implementation Details} \dotfill \pageref{app:impl}\\
\hyperref[app:additional]{E\quad Additional Experiments} \dotfill \pageref{app:additional}\\
\hspace*{1.5em}\hyperref[app:eci_exp]{E.1\quad Ablation Studies on ECI} \dotfill \pageref{app:eci_exp}\\
\hspace*{3.0em}\hyperref[app:thm_validation]{E.1.1\quad Empirical Validation of Theorem~2} \dotfill \pageref{app:thm_validation}\\
\hspace*{3.0em}\hyperref[app:comm_detection]{E.1.2\quad Robustness to Community Detection} \dotfill \pageref{app:comm_detection}\\
\hspace*{3.0em}\hyperref[app:feature_sensitivity]{E.1.3\quad Effect of Initial Node Features on ECI} \dotfill \pageref{app:feature_sensitivity}\\
\hspace*{3.0em}\hyperref[app:diag_gap]{E.1.4\quad Diagnostic Gap: ECI vs.\ Existing Metrics} \dotfill \pageref{app:diag_gap}\\
\hspace*{3.0em}\hyperref[app:diag_gap]{E.1.5\quad Empirical Validation of Theoretical Assumptions} \dotfill \pageref{app:assumption-validation}\\
\hspace*{1.5em}\hyperref[app:casp_ablation]{E.2\quad Ablation Studies on CASP} \dotfill \pageref{app:casp_ablation}\\
\hspace*
{3.0em}\hyperref[app:casp_components]{E.2.1\quad Component Analysis of CASP} \dotfill \pageref{app:casp_components}\\
\hspace*{3.0em}\hyperref[app:split_robustness]{E.2.2\quad Robustness to Data Splits and Community Detection} \dotfill \pageref{app:split_robustness}\\
\hspace*{3.0em}\hyperref[app:noisy_partitions]{E.2.3\quad Robustness to Noisy Community Partitions} \dotfill \pageref{app:noisy_partitions}\\
\hspace*{3.0em}\hyperref[app:depth_analysis]{E.2.4\quad Depth Analysis and Oversmoothing Robustness} \dotfill \pageref{app:depth_analysis}\\
\hspace*{3.0em}\hyperref[app:baselines]{E.2.5\quad Node Classification Performance with Existing Baselines} \dotfill \pageref{app:baselines}\\
\hspace*{3.0em}\hyperref[app:eci_window]{E.2.6\quad Impact of CASP on ECI and Echo Chamber Window} \dotfill \pageref{app:eci_window}\\
\hspace*{3.0em}\hyperref[sec:scalability]{E.2.7\quad Scalability Analysis} \dotfill \pageref{sec:scalability}\\
\hspace*{3.0em}\hyperref[sec:resolution_ablation]{E.2.8\quad Sensitivity to Resolution Parameter in Louvain Algorithm} \dotfill \pageref{sec:resolution_ablation}\\

\vspace{1em}

\section{CASP Architecture Overview} 
\label{app:casp_arch}

Figure~\ref{fig:casp} illustrates the high-level architecture of CASP.
Given an input graph, CASP first identifies community structure via the Louvain
algorithm as a one-time preprocessing step.
Then, message passing procedure is split into two streams: one
operating along intra-community edges and one along inter-community edges.
The two streams are scaled by weights calibrated directly from the
graph's label structure, allowing CASP to amplify intra-community aggregation
under homophily and promote inter-community mixing under heterophily.

\begin{figure}[htbp]
    \centering
    \includegraphics[width=\linewidth]{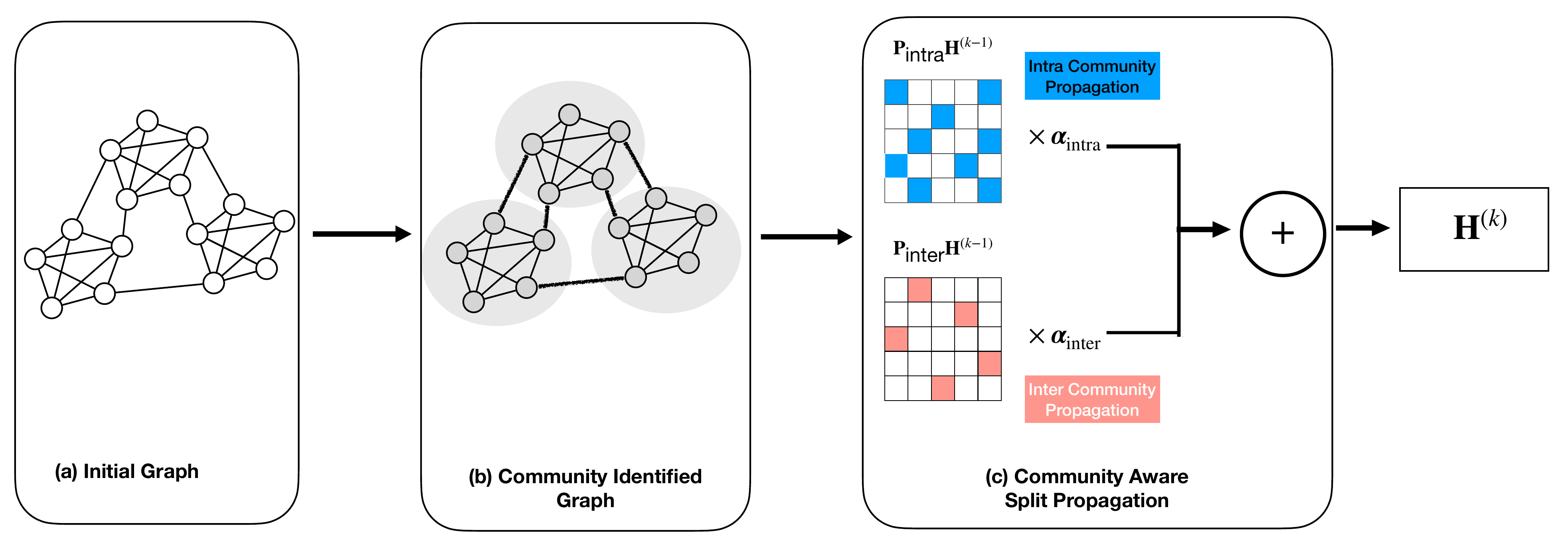}
    \caption{High-level Architecture of CASP
    }
    \label{fig:casp}
\end{figure}

\section{Proofs of Main Theoretical Results}
\label{app:proofs}

In this section we provide proofs for the theoretical results stated in 
the main text, in order of appearance.

\subsection{Proof of Lemma~\ref{lem:dirichlet_bound}}
\label{app:proof_lemma1}

\begin{lemma}[Inter-community Energy Bound]
\label{lem:dirichlet_bound}
Let $\bar E_{\inter}(\bH)$ denote the restriction of the normalized Dirichlet
energy to inter-community edges. Then
\[
\bar E_{\inter}(\bH)
\;\le\;
\frac{|\mathcal E_{\inter}|}{|\mathcal E|}
\sup_{(i,j)\in\mathcal E_{\inter}}
\left\|
\frac{\bH_i}{\sqrt{1+d_i}}
-
\frac{\bH_j}{\sqrt{1+d_j}}
\right\|^2.
\]
\end{lemma}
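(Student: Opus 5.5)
The plan is to unpack the definition of the restricted energy and then bound each summand by the supremum; no probabilistic or spectral input is needed. First we fix the meaning of $\bar E_{\inter}$. It is the part of the edge-normalised Dirichlet energy $\bar E$ that comes from inter-community edges, so its normalising denominator is still the total edge count $|\mathcal E|$:
\[
\bar E_{\inter}(\bH)=\frac{1}{|\mathcal E|}\sum_{(i,j)\in\mathcal E_{\inter}}\left\|\frac{\bH_i}{\sqrt{1+d_i}}-\frac{\bH_j}{\sqrt{1+d_j}}\right\|^2 .
\]
With this reading, $\bar E=\bar E_{\intra}+\bar E_{\inter}$ because the edge set splits as $\mathcal E=\mathcal E_{\intra}\sqcup\mathcal E_{\inter}$. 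This additivity is what makes the lemma a statement about the inter-community share of the global energy, and it is the form used later in Proposition~\ref{prop:dirichlet}.

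Next, write $S=\sup_{(i,j)\in\mathcal E_{\inter}}\|\bH_i/\sqrt{1+d_i}-\bH_j/\sqrt{1+d_j}\|^2$. The graph is finite, so $S$ is actually a maximum over finitely many nonnegative terms, and hence finite. Every summand in the sum above is therefore at most $S$. Summing over the $|\mathcal E_{\inter}|$ inter-community edges gives
\[
\sum_{(i,j)\in\mathcal E_{\inter}}(\cdots)\le |\mathcal E_{\inter}|\,S .
\]
Dividing by $|\mathcal E|$ yields the claim directly. If $\mathcal E_{\inter}=\emptyset$, both sides are zero, taking the convention that the supremum over an empty set contributes nothing because it is multiplied by the factor $0$.

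The only point needing care is the normalisation convention, so this is where I would spend attention. If $\bar E_{\inter}$ were instead normalised by $|\mathcal E_{\inter}|$, the prefactor $|\mathcal E_{\inter}|/|\mathcal E|$ would not appear and the inequality would say something different. I would therefore state the definition explicitly at the start of the proof, and remark that it is consistent with the decomposition $\bar E=\bar E_{\intra}+\bar E_{\inter}$. The remaining steps are an elementary averaging bound.
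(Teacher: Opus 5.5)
Your proposal is correct and follows the same approach as the paper's proof. The paper likewise takes $\bar E_{\inter}(\bH)$ to be the inter-community edge sum normalised by the total edge count $|\mathcal E|$, bounds each nonnegative summand by the supremum $M$, and sums over the $|\mathcal E_{\inter}|$ inter-community edges to obtain $\frac{|\mathcal E_{\inter}|}{|\mathcal E|}M$.
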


\begin{proof}
We start by defining:
\[
f_{ij}
=
\left\|
\frac{\bH_i}{\sqrt{1+d_i}}
-
\frac{\bH_j}{\sqrt{1+d_j}}
\right\|^2
\ge 0,
\qquad
M
=
\sup_{(i,j)\in\mathcal E_{\inter}} f_{ij}.
\]
Then
\[
\bar E_{\inter}(\bH)
=
\frac{1}{|\mathcal E|}
\sum_{(i,j)\in\mathcal E_{\inter}} f_{ij}
\le
\frac{1}{|\mathcal E|}
\sum_{(i,j)\in\mathcal E_{\inter}} M
=
\frac{|\mathcal E_{\inter}|}{|\mathcal E|} M,
\]
which proves the claim.
\end{proof}

\subsection{Proof of Proposition~\ref{prop:dirichlet}}
\label{app:proof_prop1}

\begin{proposition}[Diagnostic gap]
\label{prop:dirichlet}
Let $\G\sim\mathrm{SBM}(n,2,p_{\intra},\epsilon p_{\intra})$ with balanced
communities, where $\epsilon = p_{\inter}/p_{\intra} \in (0,1)$.
$c(i)\in\{1,2\}$ denotes the community of node $i$ and
$\bar{\bh}_c^{(k)}$ the centroid of community $c$ at depth $k$.
Suppose that at some depth $k$,
\[
\bH_i^{(k)}=\bar{\bh}_{c(i)}^{(k)}+\boldsymbol{\xi}_i^{(k)},
\qquad
\|\boldsymbol{\xi}_i^{(k)}\|\le \eta,
\]
and that $\|\bar{\bh}_1^{(k)}-\bar{\bh}_2^{(k)}\|\ge \Delta>0$ with $\eta<\Delta/2$.
Then, in the limit $n\to\infty$ followed by $\epsilon\to 0$,
\[
\bar E(\bH^{(k)}) \to 0,
\qquad
\ECI^{(k)} \;\ge\; \frac{\Delta-2\eta}{2\eta+\nu} > 0.
\]
\end{proposition}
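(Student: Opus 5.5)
The plan treats the two conclusions separately: the ECI bound is deterministic given the concentration hypothesis, while the vanishing of $\bar E$ is where the limits $n\to\infty$ and then $\epsilon\to0$ enter.

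\textbf{ECI bound.} For an intra-community pair $(i,j)$ the centroids coincide, so $\|\bH_i^{(k)}-\bH_j^{(k)}\|=\|\boldsymbol{\xi}_i^{(k)}-\boldsymbol{\xi}_j^{(k)}\|\le 2\eta$. Averaging over $\mathcal P_{\intra}$ gives $\MAD_{\intra}^{(k)}\le 2\eta$. For an inter-community pair, the reverse triangle inequality gives
\[
\|\bH_i^{(k)}-\bH_j^{(k)}\|\ge \|\bar\bh_1^{(k)}-\bar\bh_2^{(k)}\|-\|\boldsymbol{\xi}_i^{(k)}\|-\|\boldsymbol{\xi}_j^{(k)}\|\ge \Delta-2\eta>0 .
\]
Hence $\MAD_{\inter}^{(k)}\ge\Delta-2\eta$. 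Dividing yields $\ECI^{(k)}\ge(\Delta-2\eta)/(2\eta+\nu)$. This bound holds for every realisation and every $n,\epsilon$, so it survives both limits. One should check that $\mathcal P_{\inter}$ is nonempty (true w.h.p. for fixed $\epsilon>0$); if pairs are read as all node pairs rather than edges, the same argument applies verbatim.

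\textbf{Dirichlet energy.} Split $\bar E=\bar E_{\intra}+\bar E_{\inter}$ over the intra- and inter-community edges.

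The inter part is handled by Lemma~\ref{lem:dirichlet_bound}. The supremum of the normalised differences is bounded by $2\max_i\|\bH_i^{(k)}\|^2$, hence by a constant $M$ depending on the centroid norms and $\eta$; I will take bounded centroids as implicit. In the balanced two-block SBM, $\mathbb E|\mathcal E_{\inter}|\approx (n/2)^2\epsilon p$ and $\mathbb E|\mathcal E_{\intra}|\approx 2\binom{n/2}{2}p$. By concentration (Chernoff, as $n\to\infty$) the ratio satisfies
\[
\frac{|\mathcal E_{\inter}|}{|\mathcal E|}\to\frac{\epsilon}{1+\epsilon}.
\]
So $\bar E_{\inter}\le M\epsilon/(1+\epsilon)+o(1)$, which vanishes as $\epsilon\to0$.

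For the intra part, each term is
\[
\left\|\frac{\bH_i}{\sqrt{1+d_i}}-\frac{\bH_j}{\sqrt{1+d_j}}\right\|^2 .
\]
I plan to show it tends to zero through the degree normalisation. W.h.p. every degree is $\Theta(np)\to\infty$ when $p$ is fixed or $np\gg\log n$. Each scaled vector then has norm $O(1/\sqrt{np})$, so every edge term is $O(1/(np))\to0$ uniformly, and the edge-average vanishes as $n\to\infty$. This argument even makes the inter part vanish without taking $\epsilon\to0$.

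\textbf{Main obstacle.} That last observation is the delicate point: the stated two-limit structure suggests the authors instead intend the intra terms to be controlled by the concentration hypothesis, not by degree growth. I would therefore also give a second route. Write
\[
\frac{\bH_i}{\sqrt{1+d_i}}-\frac{\bH_j}{\sqrt{1+d_j}}=\frac{\boldsymbol{\xi}_i-\boldsymbol{\xi}_j}{\sqrt{1+d_i}}+\bar\bh_c\left(\frac{1}{\sqrt{1+d_i}}-\frac{1}{\sqrt{1+d_j}}\right),
\]
where $c$ is the common community of $i$ and $j$. The first term is at most $2\eta/\sqrt{1+d_i}$. The second is small because degrees concentrate: $|d_i-d_j|/d=o(1)$. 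This gives $\bar E_{\intra}\to0$ as $n\to\infty$ without relying on anything beyond bounded centroids and degree concentration.

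I expect the main care to be in stating the growth regime of $p$, which is needed for uniform degree concentration, and the boundedness of the centroids, both of which I will make explicit as standing assumptions.
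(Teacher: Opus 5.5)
Your proposal is correct and follows the paper's proof essentially step for step. You bound the ECI through $\MAD_{\intra}\le 2\eta$ and the reverse triangle inequality $\MAD_{\inter}\ge\Delta-2\eta$, handle the inter-community energy with Lemma~\ref{lem:dirichlet_bound} and $|\mathcal E_{\inter}|/|\mathcal E|\to\epsilon/(1+\epsilon)$, and treat the intra-community energy with the same centroid-plus-noise decomposition under $d_{\min}=\Omega(np_{\intra})$ that the paper uses. Your side remarks also hold: with bounded centroids every normalised edge term is $O(1/d_{\min})$, so $\bar E\to 0$ already as $n\to\infty$ without the $\epsilon\to 0$ limit, and stating bounded centroids explicitly fills in the constants $M$ and $C_0$ that the paper uses without defining.
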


\begin{proof}
We can write:
\[
\bar E(\bH^{(k)}) = \bar E_{\intra}(\bH^{(k)}) + \bar E_{\inter}(\bH^{(k)}).
\]

\textit{Step 1: the intra-community contribution vanishes as $n\to\infty$ 
for fixed $\epsilon$.}

Fix $(i,j)\in\mathcal E_{\intra}$. Since $c(i)=c(j)$, we have
$\bar{\bh}_{c(i)}^{(k)}=\bar{\bh}_{c(j)}^{(k)}$, and therefore
\[
\frac{\bH_i^{(k)}}{\sqrt{1+d_i}}-\frac{\bH_j^{(k)}}{\sqrt{1+d_j}}
=
\bar{\bh}_{c(i)}^{(k)}
\left(
\frac{1}{\sqrt{1+d_i}}-\frac{1}{\sqrt{1+d_j}}
\right)
+
\frac{\boldsymbol{\xi}_i^{(k)}}{\sqrt{1+d_i}}
-
\frac{\boldsymbol{\xi}_j^{(k)}}{\sqrt{1+d_j}}.
\]
Hence
\[
\left\|
\frac{\bH_i^{(k)}}{\sqrt{1+d_i}}-\frac{\bH_j^{(k)}}{\sqrt{1+d_j}}
\right\|
\le
M
\left|
\frac{1}{\sqrt{1+d_i}}-\frac{1}{\sqrt{1+d_j}}
\right|
+
\frac{\eta}{\sqrt{1+d_i}}
+
\frac{\eta}{\sqrt{1+d_j}}.
\]
Under SBM degree concentration, for fixed $\epsilon$ we have
$d_{\min}=\Omega(n p_{\intra})$ with high probability by a Chernoff 
bound~\cite{chernoff1952measure}, so
\[
\frac{1}{\sqrt{1+d_i}},\frac{1}{\sqrt{1+d_j}} = O(d_{\min}^{-1/2}),
\qquad
\left|
\frac{1}{\sqrt{1+d_i}}-\frac{1}{\sqrt{1+d_j}}
\right| = O(d_{\min}^{-1}).
\]
Therefore each intra-community edge contributes
$O(d_{\min}^{-1/2})^2 = O(d_{\min}^{-1})$
to the squared norm, and thus
\[
\bar E_{\intra}(\bH^{(k)})
=
\frac{1}{|\mathcal E|}
\sum_{(i,j)\in\mathcal E_{\intra}}
O(d_{\min}^{-1})
=
O(d_{\min}^{-1})
\to 0
\]
as $n\to\infty$.

\textit{Step 2: the inter-community contribution vanishes as $\epsilon\to 0$.}

By Lemma~\ref{lem:dirichlet_bound},
\[
\bar E_{\inter}(\bH^{(k)})
\le
\frac{|\mathcal E_{\inter}|}{|\mathcal E|} C_0.
\]
For the balanced two-block SBM,
\[
\frac{|\mathcal E_{\inter}|}{|\mathcal E|}
\to
\frac{\epsilon}{1+\epsilon}
\]
in expectation, and the same scaling holds with high probability by 
concentration. Hence
\[
\bar E_{\inter}(\bH^{(k)}) = O\!\left(\frac{\epsilon}{1+\epsilon}\right)\to 0
\]
as $\epsilon\to 0$. Combining the two steps yields
$\bar E(\bH^{(k)})\to 0$
in the sequential limit $n\to\infty$ followed by $\epsilon\to 0$.

\textit{Step 3: ECI stays bounded away from zero.}

For any intra-community pair $(i,j)$, since $c(i)=c(j)$,
$\bH_i^{(k)}-\bH_j^{(k)} = \boldsymbol{\xi}_i^{(k)}-\boldsymbol{\xi}_j^{(k)}$,
so $\|\bH_i^{(k)}-\bH_j^{(k)}\| \le 2\eta$, giving
$\MAD_{\intra}^{(k)} \le 2\eta$.
For any inter-community pair $(i,j)$, by the triangle inequality,
$\|\bH_i^{(k)}-\bH_j^{(k)}\| \ge \Delta-2\eta$,
giving $\MAD_{\inter}^{(k)} \ge \Delta-2\eta$.
Therefore
\[
\ECI^{(k)}
=
\frac{\MAD_{\inter}^{(k)}}{\MAD_{\intra}^{(k)}+\nu}
\ge
\frac{\Delta-2\eta}{2\eta+\nu} > 0,
\]
since $\eta<\Delta/2$.
\end{proof}

\subsection{Proof of Theorem~\ref{thm:permanence}}
\label{app:proof_thm1}

\begin{theorem}[Echo chamber permanence]
\label{thm:permanence}
Let $\mathcal{G}\sim\mathrm{SBM}(n,2,p,\varepsilon p)$ with
initial features satisfying $\|\boldsymbol{\xi}_i^{(0)}\|\leq\eta_0$,
$\|\bar{\mathbf{h}}_1^{(0)}-\bar{\mathbf{h}}_2^{(0)}\|\geq\Delta$,
and $\eta_0<\alpha_{\min}\Delta/[2(1+\alpha_{\min})]$.
\begin{enumerate}[label=(\alph*),leftmargin=*]
\item \textbf{IFR:} for all $k$ with $\eta_k\leq\eta_0$,
\[
    \mathrm{ECI}^{(k)}
    \geq\frac{\alpha_{\min}(\Delta-2\eta_0)-2\eta_0}
             {4\eta_0+\nu}>0.
\]
The echo chamber is \emph{permanent}:
$\liminf_{k\to\infty}\mathrm{ECI}^{(k)}>0$.
\item \textbf{PLR:} the inter-community collapse depth satisfies
$\tau_{\mathrm{inter}}(\beta_{\min},\varepsilon)\to\infty$
as $\varepsilon\to 0$. The echo chamber window widens without
bound; for any fixed $K$, $\mathrm{ECI}^{(k)}>0$ for all
$k\leq K$ when $\varepsilon$ is sufficiently small.
\end{enumerate}
\end{theorem}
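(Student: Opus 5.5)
For part (a), I will repeat the triangle-inequality argument from Step~3 of Proposition~\ref{prop:dirichlet}, applied to the IFR decomposition $\bH^{(k)}=f_k+\alpha_k\bH^{(0)}$.

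\emph{Intra-community pairs.} Take $c(i)=c(j)$. The definition of $\eta_k$ gives $\|[f_k]_i-[f_k]_j\|\le 2\eta_k\le 2\eta_0$. Since the initial centroids cancel,
\[
\|\alpha_k(\bH^{(0)}_i-\bH^{(0)}_j)\|=\alpha_k\|\boldsymbol{\xi}^{(0)}_i-\boldsymbol{\xi}^{(0)}_j\|\le 2\alpha_k\eta_0 .
\]
This yields $\MAD_{\intra}^{(k)}\le 2\eta_0(1+\alpha_k)$. To reach the stated denominator $4\eta_0+\nu$, I will work with the normalised representation $\bH^{(k)}/\alpha_k$, or equivalently assume $\alpha_k\le 1$. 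ECI is not exactly scale invariant because of $\nu$, so the convention that retention weights lie in $(0,1]$ has to be made explicit. Under that convention, $\MAD_{\intra}^{(k)}\le 4\eta_0$.

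\emph{Inter-community pairs.} Take $c(i)\ne c(j)$. The reverse triangle inequality gives
\[
\|\bH^{(k)}_i-\bH^{(k)}_j\|\ge \alpha_k\|\bH^{(0)}_i-\bH^{(0)}_j\|-\|[f_k]_i-[f_k]_j\|\ge \alpha_{\min}(\Delta-2\eta_0)-2\eta_0 .
\]
Here the centroid gap $\Delta$ minus the two $\eta_0$ noise terms lower-bounds $\|\bH^{(0)}_i-\bH^{(0)}_j\|$, as in Proposition~\ref{prop:dirichlet}. Averaging over pairs gives the same bound for $\MAD_{\inter}^{(k)}$.

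Dividing the two bounds gives the displayed inequality. Its positivity is exactly the hypothesis $\eta_0<\alpha_{\min}\Delta/[2(1+\alpha_{\min})]$, since that is equivalent to $\alpha_{\min}\Delta-2\eta_0(1+\alpha_{\min})>0$. For the $\liminf$ claim I need $\eta_k\le\eta_0$ for all large $k$. For linear propagated components such as APPNP's $f_k=(1-\alpha)\sum_{t}(\ldots)\Ahat^t\bH^{(0)}$, $f_k$ is a convex-type combination of $\Ahat^t\bH^{(0)}$. Since $\Ahat$ is a Markov operator, each row is an average, so pairwise spreads are non-increasing. I will therefore argue that $\eta_k$ is bounded by the initial spread restricted appropriately, and treat this as the hypothesis of the statement ("for all $k$ with $\eta_k\le\eta_0$"). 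A uniform bound then gives $\liminf>0$.

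For part (b), I will use the two-block SBM spectrum. In the large-$n$ limit, $\Ahat$ concentrates around its expected operator. That operator has eigenvalue $1$ (constant vector), eigenvalue $\lambda_2=(1-\varepsilon)/(1+\varepsilon)$ on the community-indicator direction $\mathbf{s}$ ($\pm1$ by block), and $O(1/\sqrt{np})$ on the rest. The PLR iteration multiplies the $\mathbf{s}$-component by $(1-\beta_k)\lambda_2+\beta_k\ge\lambda_2$ each step. After $k$ steps the centroid gap is therefore at least $\lambda_2^k$ times the initial gap, up to concentration error. Setting $\lambda_2^{k}\le\gamma_{\inter}$ shows
\[
\tau_{\inter}\gtrsim \log(1/\gamma_{\inter})/\log((1+\varepsilon)/(1-\varepsilon))\sim \log(1/\gamma_{\inter})/(2\varepsilon)\to\infty .
\]
For fixed $K$ and small $\varepsilon$, the gap stays above $\tfrac12\Delta$ for $k\le K$, so $\MAD_{\inter}^{(k)}>0$ and hence $\ECI^{(k)}>0$. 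The bulk components decay at rate $O(1/\sqrt{np})$ per step, which keeps $\tau_{\intra}$ bounded and confirms that the window widens.

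The main obstacle is controlling the deviation $\Ahat-\mathbb{E}\Ahat$ uniformly over $K$ steps. I will handle it with a standard spectral-norm concentration bound for SBM adjacency, $\|\bA-\mathbb{E}\bA\|=O(\sqrt{np})$, combined with degree concentration. Together these give a perturbation of $\lambda_2$ of order $O(1/\sqrt{np})$, which vanishes when $n\to\infty$ is taken first.
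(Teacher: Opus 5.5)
Your proposal is correct and follows the paper's proof essentially step for step. Part~(a) is the same triangle-inequality argument on $\bH^{(k)}_i-\bH^{(k)}_j=\alpha_k(\bH^{(0)}_i-\bH^{(0)}_j)+([f_k]_i-[f_k]_j)$: it uses $\alpha_k\le 1$ (which the paper's proof also invokes even though Definition~\ref{def:retention} does not state it), and it reads $\eta_k\le\eta_0$ for all large $k$ as a standing hypothesis to get the $\liminf$, exactly as the paper does implicitly; your Markov-averaging remark could not supply this hypothesis, since it only bounds $\eta_k$ by the full initial diameter, which includes the inter-community gap. Part~(b) is the same community-eigenvector argument, except that you use the exact $\lambda_2=(1-\varepsilon)/(1+\varepsilon)$ with the cruder per-step bound $\ge\lambda_2$ where the paper uses $\mu_2=1-2(1-\beta_{\min})\varepsilon$, and under PLR the within-community modes actually contract by $(1-\beta_k)\mu+\beta_k\approx\beta_k$ rather than $O(1/\sqrt{np})$, so keeping $\tau_{\intra}$ bounded tacitly needs $\sup_k\beta_k<1$, which the paper also leaves implicit.
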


\begin{proof}
\textbf{(a) IFR.}
Fix $k$ with $\eta_k\leq\eta_0$. For any pair $(i,j)$:
\begin{equation}
    \mathbf{H}_i^{(k)}-\mathbf{H}_j^{(k)}
    =\alpha_k\bigl(\mathbf{H}_i^{(0)}-\mathbf{H}_j^{(0)}\bigr)
    +\bigl([f_k]_i-[f_k]_j\bigr).
    \label{eq:app-diff}
\end{equation}

\emph{Upper bound on $\mathrm{MAD}_{\mathrm{intra}}^{(k)}$.}
For $c(i)=c(j)$: $\|\mathbf{H}_i^{(0)}-\mathbf{H}_j^{(0)}\|
\leq 2\eta_0$. By~\eqref{eq:app-diff}, $\alpha_k\leq 1$,
and the uniform smoothness condition:
\[
    \|\mathbf{H}_i^{(k)}-\mathbf{H}_j^{(k)}\|
    \leq 2\eta_0+2\eta_k \leq 4\eta_0,
\]
so $\mathrm{MAD}_{\mathrm{intra}}^{(k)}\leq 4\eta_0$.

\emph{Lower bound on $\mathrm{MAD}_{\mathrm{inter}}^{(k)}$.}
For $c(i)\neq c(j)$: $\|\mathbf{H}_i^{(0)}-\mathbf{H}_j^{(0)}\|
\geq\Delta-2\eta_0$. By~\eqref{eq:app-diff} and the uniform
smoothness condition applied to all pairs:
\[
    \|\mathbf{H}_i^{(k)}-\mathbf{H}_j^{(k)}\|
    \geq\alpha_{\min}(\Delta-2\eta_0)-2\eta_0,
\]
which is positive by the condition
$\eta_0<\alpha_{\min}\Delta/[2(1+\alpha_{\min})]$.
Hence $\mathrm{MAD}_{\mathrm{inter}}^{(k)}\geq
\alpha_{\min}(\Delta-2\eta_0)-2\eta_0>0$.

\emph{ECI bound.} Combining:
\[
    \mathrm{ECI}^{(k)}
    =\frac{\mathrm{MAD}_{\mathrm{inter}}^{(k)}}
          {\mathrm{MAD}_{\mathrm{intra}}^{(k)}+\nu}
    \geq\frac{\alpha_{\min}(\Delta-2\eta_0)-2\eta_0}
             {4\eta_0+\nu}>0.
\]
The bound is independent of $k$, so
$\liminf_{k\to\infty}\mathrm{ECI}^{(k)}>0$.

\medskip
\textbf{(b) PLR.}
The PLR update applies
$\mathbf{M}=(1-\beta_{\min})\hat{\mathbf{A}}+\beta_{\min}
\mathbf{I}$ per layer. In the $n\to\infty$ limit,
$\hat{\mathbf{A}}$ has community-separation eigenvalue
$\lambda_2=1-2\varepsilon$~\cite{oono2020graph}, so
$\mathbf{M}$ has eigenvalue
\[
    \mu_2=(1-\beta_{\min})\lambda_2+\beta_{\min}
    =1-(1-\beta_{\min})\cdot2\varepsilon>\lambda_2.
\]
Under linearised propagation with initial features aligned with
the community eigenvector, the inter-community centroid
separation satisfies
$\|\bar{\mathbf{h}}_1^{(k)}-\bar{\mathbf{h}}_2^{(k)}\|
\geq\mu_2^k\Delta$.
The inter-community collapse depth is
\[
    \tau_{\mathrm{inter}}
    =\!\left\lceil
        \frac{\log(1/\gamma_{\mathrm{inter}})}{\log(1/\mu_2)}
    \right\rceil\!.
\]
As $\varepsilon\to 0$,
$\log(1/\mu_2)=(1-\beta_{\min})\cdot2\varepsilon
+O(\varepsilon^2)\to 0$,
so $\tau_{\mathrm{inter}}\to\infty$.
The intra-community collapse depth $\tau_{\mathrm{intra}}$ is
governed by the intra-community spectral gap
$\Theta(p_{\mathrm{intra}})$, independent of $\varepsilon$,
so the echo chamber window
$[\tau_{\mathrm{intra}},\tau_{\mathrm{inter}})$ widens without
bound and $\mathrm{ECI}^{(k)}>0$ throughout.
\end{proof}

\subsection{Proof of Theorem~\ref{thm:task}}
\label{app:proof_thm2}

\begin{theorem}[Task-conditional consequence]
\label{thm:task}
Consider node classification on $\G\sim\mathrm{SBM}(n,m,p_{\intra},p_{\inter})$.
Suppose $\mathbf{H}_i = \bar{\mathbf{h}}_{c(i)} + \boldsymbol{\xi}_i$ with
$\|\boldsymbol{\xi}_i\| \le \eta$, $\|\bar{\mathbf{h}}_c - \bar{\mathbf{h}}_{c'}\|
\ge \Delta > 0$ for all $c \neq c'$, and $\eta < \Delta/2$,
where $\mathrm{FR}$ denotes the Fisher Ratio.
\begin{enumerate}[label=(\alph*),topsep=2pt,itemsep=1pt]
\item \textbf{Homophily} ($c(i)=c(j) \iff y_i = y_j$):
\[
\mathrm{FR}(c,c') \ge \frac{(\Delta - 2\eta)^2}{8\eta^2}, \qquad
\mathrm{FR}(c,c') := \frac{\|\bar{\mathbf{H}}_c - \bar{\mathbf{H}}_{c'}\|^2}
{\mathrm{tr}(\mathrm{Cov}(\mathbf{H}_c)) + \mathrm{tr}(\mathrm{Cov}(\mathbf{H}_{c'}))},
\]
where $\bar{\mathbf{H}}_c = \frac{1}{|C_c|}\sum_{i \in C_c} \mathbf{H}_i$.
\item \textbf{Heterophily} ($\exists\, i,j \in C_s$ with $y_i = a \neq b = y_j$):
\[
\|\mathbf{H}_i - \mathbf{H}_j\| \le 2\eta, \qquad
\|\mathbf{w}_a - \mathbf{w}_b\| \ge \frac{\gamma}{\eta},
\]
for any multiclass linear classifier $f_c(\mathbf{h}) = \mathbf{w}_c^\top \mathbf{h} + b_c$
that classifies both $i$ and $j$ correctly with margin $\gamma > 0$.
Equivalently, if $\|\mathbf{w}_a - \mathbf{w}_b\| \le B$, then no margin $\gamma > B\eta$
is achievable.
\end{enumerate}
\end{theorem}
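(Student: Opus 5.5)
The proof is elementary, and I would handle the two parts separately. Both rest on the centroid-plus-noise decomposition $\mathbf{H}_i=\bar{\mathbf{h}}_{c(i)}+\boldsymbol{\xi}_i$ with $\|\boldsymbol{\xi}_i\|\le\eta$. No spectral or SBM concentration arguments are needed; the SBM only supplies the community labelling. I assume $\eta>0$ throughout, since otherwise both bounds are vacuous or infinite.

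\textbf{Part (a).} I would bound the numerator and denominator of $\mathrm{FR}(c,c')$ separately.
\begin{itemize}
\item \emph{Numerator.} Write $\bar{\boldsymbol{\xi}}_c=\frac{1}{|C_c|}\sum_{i\in C_c}\boldsymbol{\xi}_i$. Averaging the decomposition over $C_c$ gives $\bar{\mathbf{H}}_c=\bar{\mathbf{h}}_c+\bar{\boldsymbol{\xi}}_c$, and convexity gives $\|\bar{\boldsymbol{\xi}}_c\|\le\eta$. The reverse triangle inequality then yields $\|\bar{\mathbf{H}}_c-\bar{\mathbf{H}}_{c'}\|\ge\Delta-2\eta>0$, using $\eta<\Delta/2$. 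Squaring is therefore legitimate.
\item \emph{Denominator.} I take $\mathrm{tr}(\mathrm{Cov}(\mathbf{H}_c))=\frac{1}{|C_c|}\sum_{i\in C_c}\|\mathbf{H}_i-\bar{\mathbf{H}}_c\|^2$. Each term equals $\|\boldsymbol{\xi}_i-\bar{\boldsymbol{\xi}}_c\|^2\le(2\eta)^2$, so each trace is at most $4\eta^2$ and the sum is at most $8\eta^2$.
\item \emph{Combination.} Dividing the squared numerator bound by the denominator bound gives $(\Delta-2\eta)^2/(8\eta^2)$.
\item \emph{Homophily.} The assumption $c(i)=c(j)\iff y_i=y_j$ is used only to identify community pairs with class pairs. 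This makes the bound a statement about class separability.
\end{itemize}
The sharper bound $\mathrm{tr}\,\mathrm{Cov}\le\eta^2$ (the variance about the mean is at most the second moment about $\bar{\mathbf{h}}_c$) also holds. I would only remark on it, since the crude bound already matches the stated constant.

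\textbf{Part (b).} The argument has two steps.
\begin{itemize}
\item \emph{Collapse.} For $i,j\in C_s$ the centroids cancel, so $\mathbf{H}_i-\mathbf{H}_j=\boldsymbol{\xi}_i-\boldsymbol{\xi}_j$. Hence $\|\mathbf{H}_i-\mathbf{H}_j\|\le2\eta$.
\item \emph{Margin.} I read ``correct with margin $\gamma$'' as $f_a(\mathbf{H}_i)-f_b(\mathbf{H}_i)\ge\gamma$ and $f_b(\mathbf{H}_j)-f_a(\mathbf{H}_j)\ge\gamma$. The multiclass condition against all competitors implies this pairwise one, so it is the weakest needed form. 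Adding the two inequalities cancels the biases $b_a-b_b$ and gives $(\mathbf{w}_a-\mathbf{w}_b)^\top(\mathbf{H}_i-\mathbf{H}_j)\ge2\gamma$. Cauchy--Schwarz with the collapse bound then gives $2\gamma\le2\eta\|\mathbf{w}_a-\mathbf{w}_b\|$, that is, $\|\mathbf{w}_a-\mathbf{w}_b\|\ge\gamma/\eta$.
\item \emph{Equivalent form.} This is the contrapositive: if $\|\mathbf{w}_a-\mathbf{w}_b\|\le B$, then $\gamma\le B\eta$.
\end{itemize}

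\textbf{Main obstacle.} The only real difficulty is definitional, namely fixing precise meanings for $\mathrm{tr}(\mathrm{Cov})$ (empirical, normalised by $|C_c|$) and for the multiclass margin. The step to watch is that summing the two margin inequalities eliminates the biases. Without that cancellation the bound would depend on $b_a-b_b$ and fail.
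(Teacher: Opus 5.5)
Your proposal is correct and follows the paper's proof essentially step for step: in part (a), the same averaged decomposition $\bar{\mathbf{H}}_c=\bar{\mathbf{h}}_c+\bar{\boldsymbol{\xi}}_c$ gives the $\Delta-2\eta$ numerator bound, and the same per-community trace bound $4\eta^2$ follows from $\|\boldsymbol{\xi}_i-\bar{\boldsymbol{\xi}}_c\|\le 2\eta$; in part (b), you add the two pairwise margin inequalities so that $b_a-b_b$ cancels, then apply Cauchy--Schwarz. Your side remark that $\mathrm{tr}(\mathrm{Cov}(\mathbf{H}_c))\le\eta^2$ under $|C_c|$-normalisation is also valid and would sharpen the constant to $(\Delta-2\eta)^2/(2\eta^2)$, but it is not needed for the stated bound.
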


\begin{proof}
Both parts follow from $\mathbf{H}_i = \bar{\mathbf{h}}_{c(i)} + \boldsymbol{\xi}_i$ 
with $\|\boldsymbol{\xi}_i\| \le \eta$.

\textit{(a).}
Under $c(i)=c(j) \iff y_i = y_j$, communities coincide with classes, so
$\bar{\mathbf{H}}_c$ and $\mathrm{Cov}(\mathbf{H}_c)$ are computed over $C_c$.
Since $\mathbf{H}_i = \bar{\mathbf{h}}_c + \boldsymbol{\xi}_i$,
\[
\mathbf{H}_i - \bar{\mathbf{H}}_c = \boldsymbol{\xi}_i - \bar{\boldsymbol{\xi}}_c,
\]
giving $\|\mathbf{H}_i - \bar{\mathbf{H}}_c\| \le 2\eta$ and hence
$\mathrm{tr}(\mathrm{Cov}(\mathbf{H}_c)) \le 4\eta^2$. Applying the same bound to $c'$
and summing yields
\[
\mathrm{tr}(\mathrm{Cov}(\mathbf{H}_c)) + \mathrm{tr}(\mathrm{Cov}(\mathbf{H}_{c'})) \le 8\eta^2.
\]
Moreover,
\[
\bar{\mathbf{H}}_c - \bar{\mathbf{H}}_{c'} = (\bar{\mathbf{h}}_c - \bar{\mathbf{h}}_{c'}) 
+ (\bar{\boldsymbol{\xi}}_c - \bar{\boldsymbol{\xi}}_{c'}),
\]
so $\|\bar{\mathbf{H}}_c - \bar{\mathbf{H}}_{c'}\| \ge \Delta - 2\eta$. Thus
\[
\mathrm{FR}(c,c') \ge \frac{(\Delta - 2\eta)^2}{8\eta^2}.
\]

\textit{(b).}
If $i,j \in C_s$, then $\mathbf{H}_i - \mathbf{H}_j = \boldsymbol{\xi}_i - \boldsymbol{\xi}_j$,
so $\|\mathbf{H}_i - \mathbf{H}_j\| \le 2\eta$.
Since $f_c(\mathbf{h}) = \mathbf{w}_c^\top \mathbf{h} + b_c$ classifies $i$ with label $a$
correctly with margin $\gamma$, we have in particular:
\[
(\mathbf{w}_a - \mathbf{w}_b)^\top \mathbf{H}_i + (b_a - b_b) \ge \gamma.
\]
Since $f_c$ classifies $j$ with label $b$ correctly with margin $\gamma$, we have:
\[
(\mathbf{w}_b - \mathbf{w}_a)^\top \mathbf{H}_j + (b_b - b_a) \ge \gamma.
\]
Adding these two inequalities, the bias terms cancel:
\[
(\mathbf{w}_a - \mathbf{w}_b)^\top (\mathbf{H}_i - \mathbf{H}_j) \ge 2\gamma.
\]
By Cauchy--Schwarz,
\[
2\gamma \le \|\mathbf{w}_a - \mathbf{w}_b\| \cdot \|\mathbf{H}_i - \mathbf{H}_j\| \le 2\eta\|\mathbf{w}_a - \mathbf{w}_b\|,
\]
which gives $\|\mathbf{w}_a - \mathbf{w}_b\| \ge \gamma/\eta$.
\end{proof}

\section{Proofs of CASP Theoretical Properties}
\label{app:casp_proofs}

The following propositions establish three key properties of the CASP 
parameterisation introduced in Section~\ref{sec:casp}: that it recovers 
standard propagation as a special case, that the propagation ratio $r$ 
provides monotone and interpretable control over the intra/inter balance, 
and that the expected aggregation mass is preserved for any $r>0$.

\begin{proposition}[Baseline recovery]
\label{prop:recovery}
At $r=1$, $\alpha_{\intra}=\alpha_{\inter}=1$, so CASP reduces to standard 
propagation.
\end{proposition}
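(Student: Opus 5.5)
The plan is to substitute $r=1$ into the closed-form parameterisation~\eqref{eq:alphas} and then use the decomposition $\mathbf{P}=\mathbf{P}_{\intra}+\mathbf{P}_{\inter}$ from Definition~\ref{def:casp}. No probabilistic or spectral argument is needed; everything is algebra.

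\emph{Step 1: the coefficients.} At $r=1$ the common denominator in~\eqref{eq:alphas} is
\[
rp+(1-p)=p+1-p=1,
\]
and this holds for every value of the intra-edge fraction $p=|\mathcal E_{\intra}|/|\mathcal E|\in[0,1]$. Therefore $\alpha_{\intra}=r/1=1$ and $\alpha_{\inter}=1/1=1$. As a consistency check, these values satisfy the mass constraint $\alpha_{\intra}p+\alpha_{\inter}(1-p)=1$. They are also the unique solution of that constraint with $\alpha_{\intra}/\alpha_{\inter}=1$, so the result does not depend on how~\eqref{eq:alphas} was derived.

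\emph{Step 2: the layer update.} Substituting into Definition~\ref{def:casp} gives
\[
\bH^{(k)}=\mathbf{P}_{\intra}\bH^{(k-1)}+\mathbf{P}_{\inter}\bH^{(k-1)}=(\mathbf{P}_{\intra}+\mathbf{P}_{\inter})\bH^{(k-1)}=\mathbf{P}\bH^{(k-1)}.
\]
This is exactly the backbone's propagation. For GCN the same calculation yields $(\tilde{\mathbf{A}}_{\intra}+\tilde{\mathbf{A}}_{\inter})\bH^{(k-1)}\mathbf{W}^{(k)}=\tilde{\mathbf{A}}\bH^{(k-1)}\mathbf{W}^{(k)}$, since the weight matrix and any nonlinearity act after propagation and are unaffected.

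\emph{The one point needing care} is that $\mathbf{P}_{\intra}$ and $\mathbf{P}_{\inter}$ must genuinely partition $\mathbf{P}$, including the self-loop and normalisation terms. Self-loops $(i,i)$ belong to the intra-community part, because $c(i)=c(i)$. The restriction is taken entrywise on the already-normalised $\mathbf{P}$, not by renormalising each piece separately. Under this convention the identity $\mathbf{P}=\mathbf{P}_{\intra}+\mathbf{P}_{\inter}$ holds exactly, as Definition~\ref{def:casp} stipulates, and the proof is complete. Finally, in the calibrated version $r=\exp(\theta\delta)$, we have $r=1$ whenever $\delta=0$, so the recovery also applies there.
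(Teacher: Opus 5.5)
Your proof is correct and follows the same route as the paper: substitute $r=1$ into~\eqref{eq:alphas} to obtain $\alpha_{\intra}=\alpha_{\inter}=1$, then use $\mathbf{P}=\mathbf{P}_{\intra}+\mathbf{P}_{\inter}$ to recover the backbone operator. Your remarks on self-loops and on restricting entrywise only make explicit the exact decomposition that the paper takes as given in Definition~\ref{def:casp}.
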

\begin{proof}
Substituting $r=1$ into~\eqref{eq:alphas} gives
$\alpha_{\intra}=\alpha_{\inter}=1$, so
$\alpha_{\intra}\Ahat_{\intra}+\alpha_{\inter}\Ahat_{\inter}=\Ahat$.
\end{proof}

\begin{proposition}[Monotone interpolation]
\label{prop:monotone_alpha}
$\alpha_{\intra}$ increases and $\alpha_{\inter}$ decreases monotonically in 
$r$, with $\alpha_{\intra}\to 0$, $\alpha_{\inter}\to 1/(1-p)$ as $r\to 0$,
and $\alpha_{\intra}\to 1/p$, $\alpha_{\inter}\to 0$ as $r\to\infty$.
\end{proposition}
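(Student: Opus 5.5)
The plan is to work directly from the closed-form expressions in \eqref{eq:alphas}, treating $p=|\mathcal E_{\intra}|/|\mathcal E|$ as a fixed constant in $(0,1)$. This is the nondegenerate case in which both edge classes are present; it is implicit in the mass-preserving constraint, since otherwise $r$ has no effect. Write the common denominator as $D(r)=rp+(1-p)$. It is affine in $r$ with positive slope $p$ and positive intercept $1-p$, so $D(r)>0$ for all $r>0$ and both coefficients are well defined and positive.

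\textbf{Monotonicity.} Differentiate in $r$. For $\alpha_{\inter}=1/D(r)$ we get $\partial_r\alpha_{\inter}=-p/D(r)^2<0$, so $\alpha_{\inter}$ is strictly decreasing. For $\alpha_{\intra}=r/D(r)$ the quotient rule gives
\[
\partial_r\alpha_{\intra}=\frac{D(r)-rp}{D(r)^2}=\frac{1-p}{D(r)^2}>0,
\]
so $\alpha_{\intra}$ is strictly increasing. A derivative-free alternative is to rewrite $\alpha_{\intra}=1/\bigl(p+(1-p)/r\bigr)$, whose denominator decreases in $r$, and to note that $\alpha_{\inter}$ is the reciprocal of an increasing positive function.

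\textbf{Limits.} As $r\to0^+$, $D(r)\to1-p>0$, hence $\alpha_{\intra}=r/D(r)\to0$ and $\alpha_{\inter}\to1/(1-p)$. As $r\to\infty$, use the rewritten form $\alpha_{\intra}=1/\bigl(p+(1-p)/r\bigr)\to1/p$, and $\alpha_{\inter}=1/D(r)\to0$ because $D(r)\to\infty$.

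\textbf{Consistency check.} The mass constraint $\alpha_{\intra}p+\alpha_{\inter}(1-p)=1$ holds at both endpoints: at $r\to0$ it reads $0\cdot p+(1-p)\cdot\frac{1}{1-p}=1$, and at $r\to\infty$ it reads $p\cdot\frac1p+0=1$. This confirms the limits are compatible with property (iii).

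There is no substantive obstacle. The only point needing care is stating the assumption $0<p<1$, since $p=0$ or $p=1$ would make one of the limits undefined (division by zero).
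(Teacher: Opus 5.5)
Your proposal is correct and takes the same route as the paper: differentiate the closed forms in \eqref{eq:alphas} to get $\partial_r\alpha_{\intra}=(1-p)/(rp+(1-p))^2>0$ and $\partial_r\alpha_{\inter}=-p/(rp+(1-p))^2<0$, then read the limits off directly. Stating $0<p<1$ explicitly, which the paper leaves implicit, is a worthwhile addition, since both strict monotonicity and the limits $1/p$ and $1/(1-p)$ depend on it.
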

\begin{proof}
Differentiating Equation~\eqref{eq:alphas} gives
$\partial\alpha_{\intra}/\partial r=(1-p)/(rp+(1-p))^2>0$ and
$\partial\alpha_{\inter}/\partial r=-p/(rp+(1-p))^2<0$.
The limits follow directly from~\eqref{eq:alphas}.
\end{proof}

\begin{proposition}[Expected mass preservation]
\label{prop:mass}
Under the balanced SBM, for any $r>0$,
$
\mathbb{E}_i\!\left[
\alpha_{\intra}\frac{k_i^{\intra}}{d_i}
+
\alpha_{\inter}\frac{k_i^{\inter}}{d_i}
\right]=1.
$
\end{proposition}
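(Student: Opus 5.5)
The plan is to use linearity of expectation together with the way $p$ was defined. For each node $i$, write $d_i = k_i^{\intra}+k_i^{\inter}$, so that $k_i^{\inter}/d_i = 1 - k_i^{\intra}/d_i$. With $p_i := k_i^{\intra}/d_i$, the quantity inside the expectation becomes
\[
\alpha_{\intra}p_i + \alpha_{\inter}(1-p_i) = \alpha_{\inter} + (\alpha_{\intra}-\alpha_{\inter})\,p_i .
\]
This is affine in $p_i$ with coefficients that do not depend on $i$, since $\alpha_{\intra},\alpha_{\inter}$ are shared scalars given by~\eqref{eq:alphas}. Linearity of expectation then reduces the claim to
\[
\alpha_{\intra}\,\mathbb{E}_i[p_i] + \alpha_{\inter}\bigl(1-\mathbb{E}_i[p_i]\bigr) = 1 .
\]

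The central step is identifying $\mathbb{E}_i[p_i]$ with the constant $p = |\mathcal E_{\intra}|/|\mathcal E|$ used in~\eqref{eq:alphas}. In the balanced SBM every node has the same distribution over its intra- and inter-community neighbours. The expected intra-degree is $(n/m-1)p_{\intra}$ and the expected inter-degree is $(n-n/m)p_{\inter}$. Hence $\mathbb{E}[k_i^{\intra}]/\mathbb{E}[d_i]$ is the same for all $i$ and equals $\mathbb{E}|\mathcal E_{\intra}|/\mathbb{E}|\mathcal E|$, because $2|\mathcal E_{\intra}| = \sum_i k_i^{\intra}$ and $2|\mathcal E| = \sum_i d_i$.

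The obstacle is that $\mathbb{E}[k_i^{\intra}/d_i]$ is not exactly $\mathbb{E}[k_i^{\intra}]/\mathbb{E}[d_i]$. I would handle this by interpreting $\mathbb{E}_i$ in the sense already used in Section~\ref{sec:casp}: $p_i$ concentrates around its mean. By Chernoff bounds, $d_i = \Theta(np_{\intra})$ with high probability, so each ratio $k_i^{\intra}/d_i$ equals $p + O\bigl((np_{\intra})^{-1/2}\bigr)$ uniformly over $i$. The identification $\mathbb{E}_i[p_i]=p$ therefore holds in the large-$n$ limit, matching the asymptotic regime of Proposition~\ref{prop:dirichlet}. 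Alternatively, one can state it exactly for the ratio of expectations, which is the reading "enforce the constraint in expectation" suggests.

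Once $\mathbb{E}_i[p_i]=p$ is in place, substituting~\eqref{eq:alphas} gives
\[
\frac{rp + (1-p)}{rp+(1-p)} = 1
\]
for every $r>0$. The denominator is positive because $p\in(0,1)$ and $r>0$.
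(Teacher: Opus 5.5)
Your proposal is correct and follows the same route as the paper, whose proof simply asserts $\mathbb{E}[k_i^{\intra}/d_i]=p$ and $\mathbb{E}[k_i^{\inter}/d_i]=1-p$ under the balanced SBM and then substitutes~\eqref{eq:alphas} to obtain $\frac{rp+(1-p)}{rp+(1-p)}=1$. Your treatment of the expectation-of-a-ratio issue is a real refinement that the paper omits: the identification $\mathbb{E}_i[p_i]=p$ holds exactly only under a ratio-of-expectations reading, and otherwise only in the large-$n$ limit, so your version states more accurately what the argument establishes.
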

\begin{proof}
Under the balanced SBM,
$\mathbb{E}[k_i^{\intra}/d_i]=p$ and $\mathbb{E}[k_i^{\inter}/d_i]=1-p$.
Substituting~\eqref{eq:alphas},
\[
\alpha_{\intra}p+\alpha_{\inter}(1-p)
=\frac{rp}{rp+(1-p)}+\frac{1-p}{rp+(1-p)}=1. \qedhere
\]
\end{proof}

\section{Experimental Design} \label{sec:exp_des}

\subsection{Dataset Statistics}
\label{app:datasets}

Table~\ref{tab:dataset_stats} reports statistics for all eleven node 
classification benchmarks used in our experiments, spanning homophilic 
and heterophilic settings. 

\begin{table}[htbp]
\centering
\small
\caption{Dataset statistics.}
\label{tab:dataset_stats}
\begin{tabular}{llrrrr}
\toprule
Type & Dataset & \# Nodes & \# Edges & \# Features & \# Classes \\
\midrule
\multirow{8}{*}{Homophilic}
& Cora            & 2,708   & 5,429     & 1,433 & 7  \\
& Cora Full       & 19,793  & 126,842   & 8,710 & 70 \\
& Citeseer        & 3,312   & 4,732     & 3,703 & 6  \\
& Pubmed          & 19,717  & 44,338    & 500   & 3  \\
& DBLP            & 17,716  & 105,734   & 1,639 & 4  \\
& Amazon Photo    & 7,650   & 119,081   & 745   & 8  \\
& Coauthor-CS     & 18,333  & 163,788   & 6,805 & 15 \\
& ogbn-arxiv      & 169,343 & 1,166,243 & 128   & 40 \\
\midrule
\multirow{9}{*}{Heterophilic}
& Cornell         & 183     & 295       & 1,703 & 5  \\
& Wisconsin       & 251     & 499       & 1,703 & 5  \\
& Texas           & 183     & 309       & 1,703 & 5  \\
& Film            & 7,600   & 33,544    & 932   & 5  \\
& Minesweeper     & 10,000  & 39,402    & 7     & 2  \\
& Tolokers        & 11,758  & 519,000   & 10    & 2  \\
& Questions       & 48,921  & 153,540   & 301   & 2  \\
& Amazon-Ratings  & 24,492  & 93,050    & 300   & 5  \\
& Roman-Empire    & 22,662  & 32,927    & 300   & 18 \\
\bottomrule
\end{tabular}
\end{table}

\subsection{Model Hyperparamters}
\label{app:hyperparams}

We perform a grid search 
over the following hyperparameters: learning rate $\in \{0.005, 0.01, 0.1\}$, 
weight decay $\in \{5\times10^{-4}, 5\times10^{-3}\}$, dropout 
$\in \{0.1, 0.4, 0.8\}$, and hidden dimension $\in \{64, 128, 256\}$. 
We train for up to $1000$ epochs with early stopping patience of $200$ 
epochs based on validation loss. The number of layers is varied up to $4$. We use a resolution parameter of 1.0 for the Louvain algorithm across all datasets. All models are trained using the Adam optimiser \cite{kingma2014adam}.

\subsection{Implementation Details}
\label{app:impl}

All experiments were performed on a Linux-based server equipped with an NVIDIA A100 GPU. Our implementation uses the following Python libraries: PyTorch (v2.3.1), torchvision (v0.18.1), torchaudio (v2.3.1), torch-geometric (v2.7.0), torch-cluster (v1.6.3), torch-scatter (v2.0.9), and torch-sparse (v0.6.18).

\section{Additional Experiments}
\label{app:additional}

\subsection{Ablation Studies on ECI}
\label{app:eci_exp}

\begin{figure}[htbp]
\centering
\includegraphics[width=\linewidth]{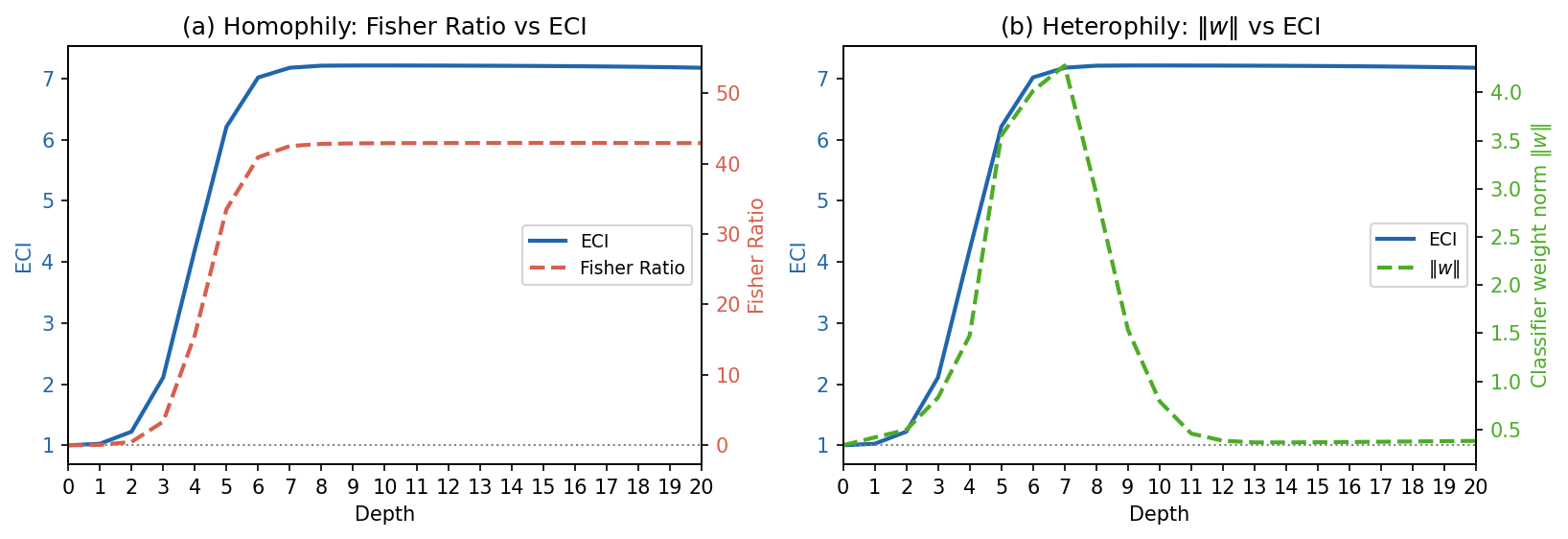}
\caption{Validation of Theorem~\ref{thm:task} on SBM graphs
($n{=}1500$, $m{=}5$, $\mathbb{E}[\mathrm{deg}]{=}20$).}
\label{fig:theorem-validation}
\end{figure}

\subsubsection{Emperical Validation of Theorem~\ref{thm:task}}
\label{app:thm_validation}

Figure~\ref{fig:theorem-validation} measures the quantities predicted by 
Theorem~\ref{thm:task} as a function of propagation depth on SBM graphs. 
Under homophily, the Fisher Ratio rises alongside ECI and plateaus at the 
same depth, confirming part~(a): as intra-community representations collapse, 
inter-class separation grows unboundedly relative to intra-class variance. 
Under heterophily, the binary linear SVM weight norm $\|w\|^{(k)}$ peaks 
precisely at peak ECI (depth $\approx 7$), confirming part~(b): 
classification is hardest exactly when the echo chamber is deepest, as 
same-community nodes of different classes become indistinguishable and 
maintaining any fixed margin $\gamma > 0$ requires $\|w\| \geq \gamma/(2\eta)$. 
The subsequent decay of $\|w\|$ beyond the echo chamber peak reflects further 
representation collapse toward a region where no linear separator exists.

\subsubsection{Robustness to Community Detection}
\label{app:comm_detection}

Figure~\ref{fig:comm-detection} tests sensitivity of ECI to the choice 
of community partition, comparing four algorithms: Louvain~\cite{blondel2008fast}, 
Leiden~\cite{traag2019louvain}, spectral clustering~\cite{von2007tutorial}, 
and a random baseline. On SBM,  Louvain, Leiden, and spectral clustering algorithms recover ECI profiles 
that are virtually identical to those obtained using the known community 
memberships (i.e., ground truth) used to generate the graph, confirming that community detection 
introduces no distortion on well-structured synthetic graphs. On 
Coauthor-CS, Louvain and Leiden yield nearly identical profiles while 
spectral clustering yields a moderately lower peak. Random partitions 
remain flat at unity throughout, confirming that the echo chamber signal 
is driven by the underlying graph structure rather than the choice of 
community detection algorithm. Louvain algorithm is used as the default algorithm throughout our experiments
due to its computational efficiency.

\begin{figure}[htbp]
\centering
\includegraphics[width=0.9\linewidth]{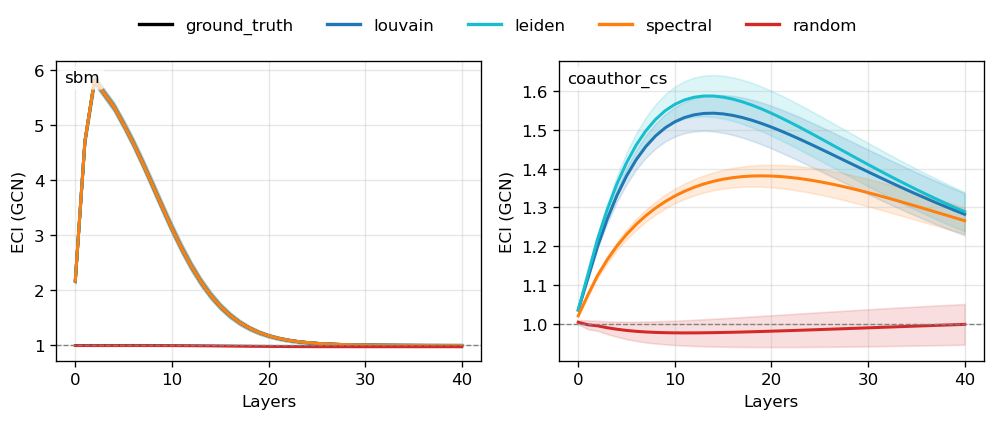}
\caption{ECI under different community partitions (GCN) on SBM and 
Coauthor-CS. For SBM, ground-truth communities are the planted blocks.}
\label{fig:comm-detection}
\end{figure}

\begin{figure}[htbp]
\centering
\includegraphics[width=0.95\linewidth]{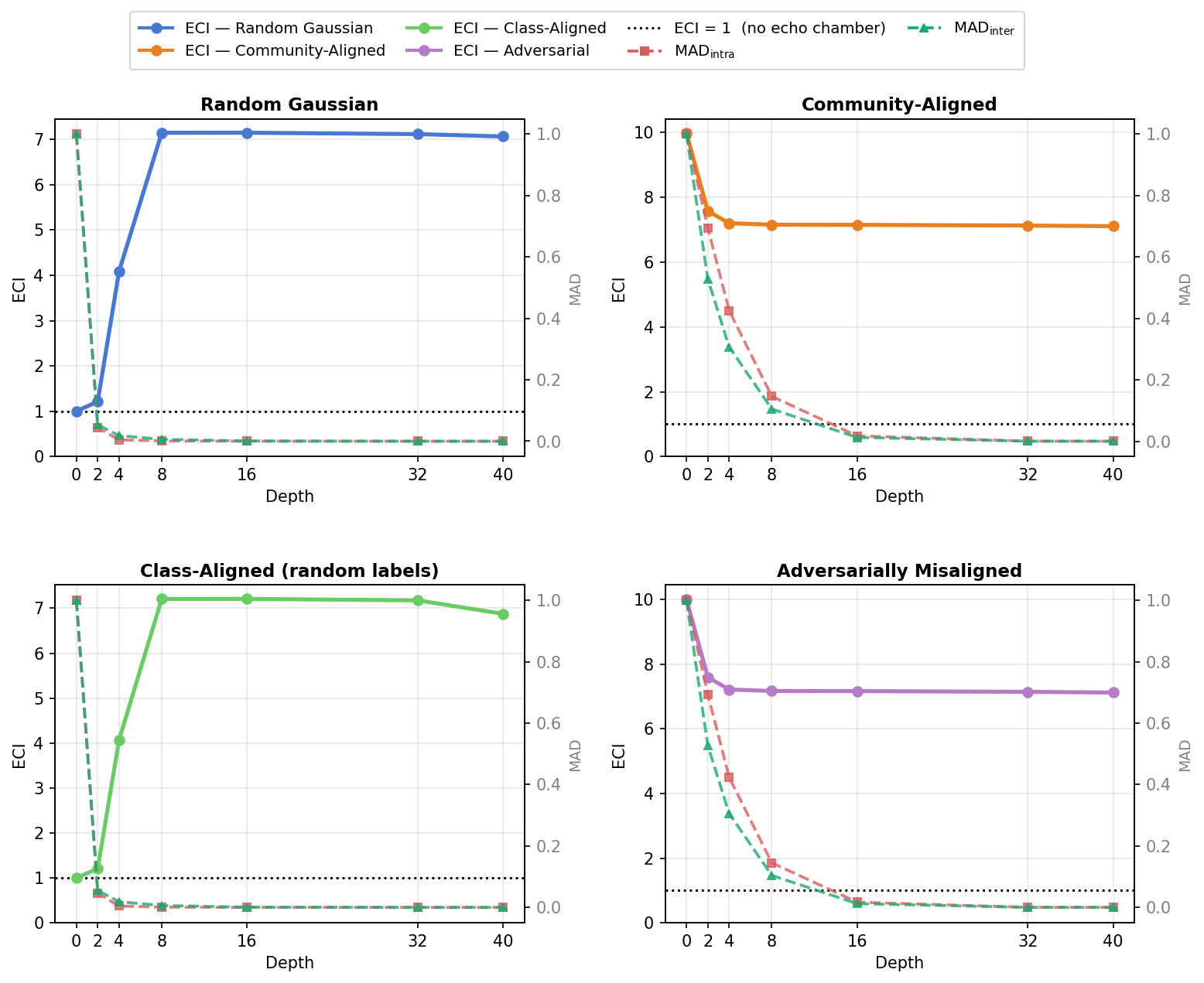}
\caption{ECI under four initial feature conditions on a fixed SBM graph 
($n{=}1500$, $m{=}5$, $p_{\mathrm{intra}}{=}0.08$, 
$p_{\mathrm{inter}}{=}0.005$). $\MAD_{\mathrm{intra}}$ decays faster 
than $\MAD_{\mathrm{inter}}$ across all conditions, explaining the 
persistent elevation of ECI even as both approach zero.}
\label{fig:eci_feature_sensitivity}
\end{figure}

\subsubsection{Effect of Initial Node Features on ECI}
\label{app:feature_sensitivity}

To determine whether the echo chamber effect is driven by graph topology 
or by the initial node feature distribution, we measure ECI under pure 
propagation on a fixed SBM graph across four feature conditions: random 
Gaussian (no structure); community-aligned (features drawn around community 
centroids); class-aligned (features encode randomly assigned labels, 
independent of the community); and adversarially misaligned (each node 
receives the centroid of a different community). 
Figure~\ref{fig:eci_feature_sensitivity} shows ECI as a function of 
propagation depth for each condition. Despite starting from very different 
initial values, all four conditions converge to the same asymptotic plateau. 
This is explained by the MAD traces: $\MAD_{\intra}$ decays substantially 
faster than $\MAD_{\inter}$ across all conditions, so their ratio ECI 
remains elevated even as both approach zero. Crucially, even adversarially 
misaligned features are dominated by graph topology within a limited number 
of propagation steps, demonstrating that the asymptotic echo chamber region 
is primarily determined by graph modularity rather than by the initial 
feature distribution. Initial features affect only the transient trajectory 
and the depth at which the echo chamber is entered, not the long-run steady 
state.

\subsubsection{Diagnostic Gap: ECI vs.\ Existing Metrics}
\label{app:diag_gap}

To demonstrate that ECI captures structural information that existing metrics 
miss, we conduct two complementary experiments on synthetic SBM graphs 
($n=400$, $m=5$, $p_{\text{intra}}=0.10$).

\textbf{Modularity sweep.} Figure~\ref{fig:eci_differentiated} sweeps 
$\varepsilon = p_{\text{inter}}/p_{\text{intra}} \in \{0.4, 0.1, 0.02, 
0.005\}$, progressively strengthening community structure. As $\varepsilon 
\to 0$, MAD and Dirichlet energy decay monotonically to zero, falsely 
signalling global oversmoothing. ECI and GDR both remain persistently 
elevated, correctly detecting that inter-community separation is intact, which is 
the diagnostic gap established in Proposition \ref{prop:dirichlet}. Embedding modularity 
saturates early and fails to reflect the progressive widening of the echo 
chamber window as $\varepsilon$ decreases.

\textbf{Label--community alignment.} Figure~\ref{fig:eci_vs_gdr} fixes 
$\varepsilon = 0.05$ and varies the fraction of nodes whose label aligns 
with their community from $100\%$ (homophilic) to $25\%$ (near-random). 
ECI remains invariant across all alignment levels since it depends solely 
on graph topology. GDR, which requires ground-truth labels, collapses 
toward unity as alignment decreases, becoming uninformative under 
heterophily. This demonstrates the fundamental limitation of GDR as a 
structural diagnostic: it conflates label structure with community structure 
and fails precisely in the settings where the echo chamber is most harmful. 
ECI avoids this by stratifying distances by community membership rather 
than class membership, making it reliably 
detect the echo chamber across all graph and label configurations.

\begin{figure}[htbp]
    \centering
    \includegraphics[width=\linewidth]{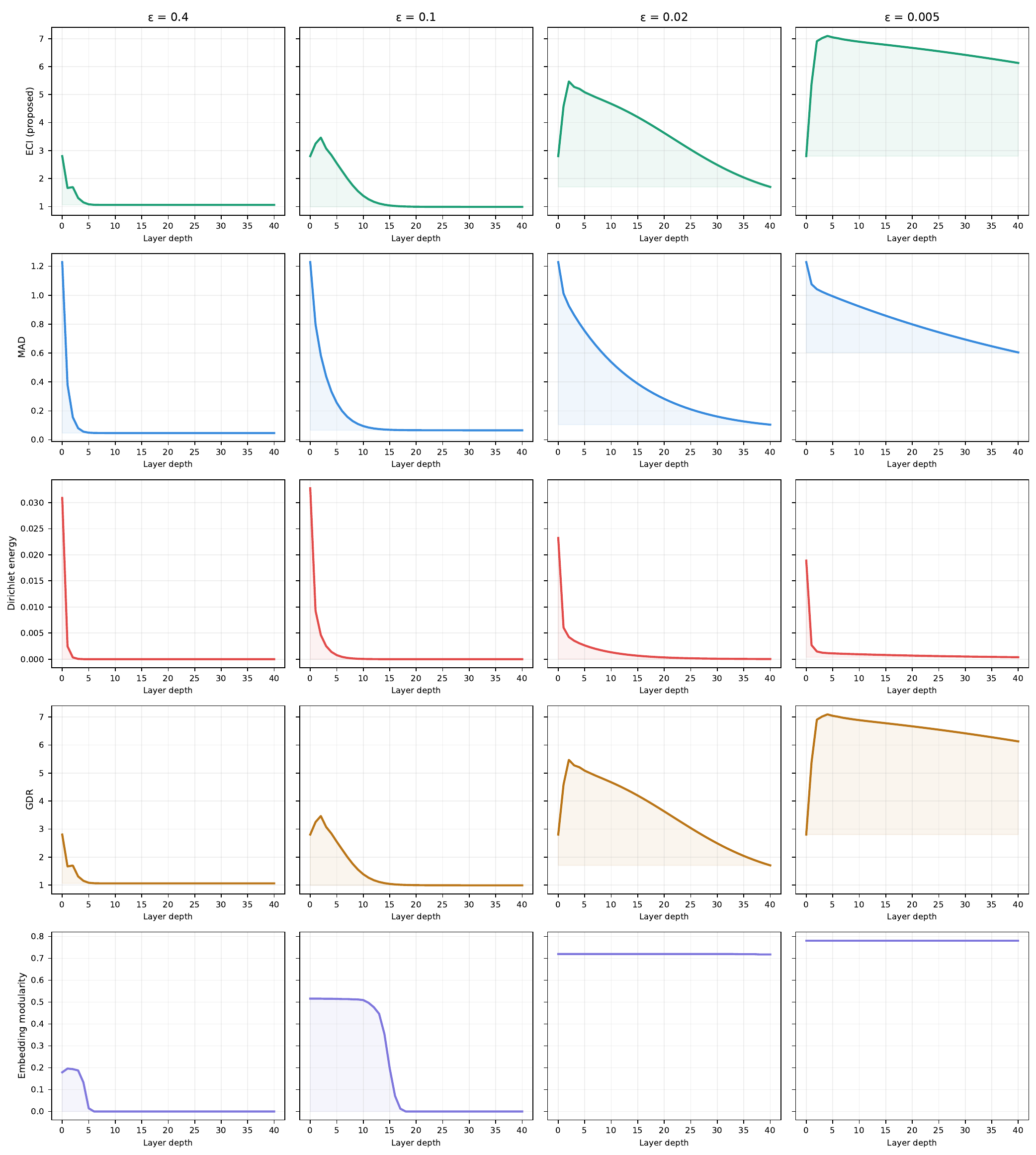}
    \caption{Layer-wise metrics on a synthetic SBM graph across four values 
    of $\varepsilon$.}
    \label{fig:eci_differentiated}
\end{figure}

\begin{figure}[htbp]
    \centering
    \includegraphics[width=\linewidth]{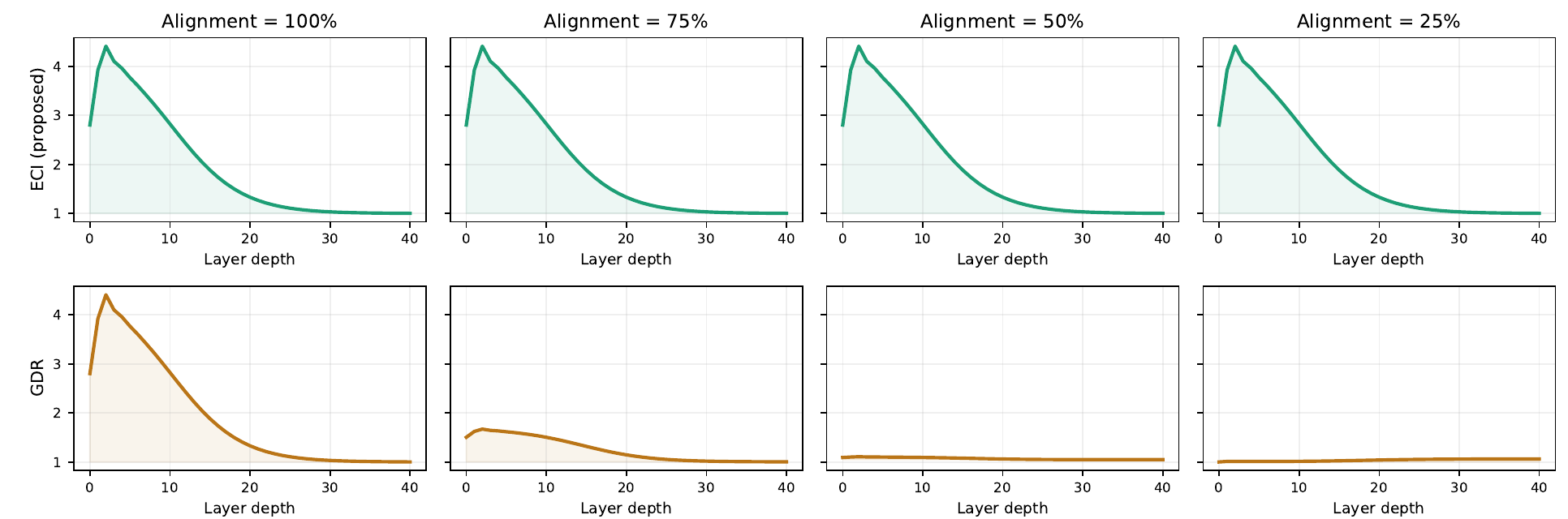}
    \caption{ECI vs.\ GDR under varying label--community alignment 
    ($\varepsilon=0.05$).}
    \label{fig:eci_vs_gdr}
\end{figure}

\clearpage

\subsubsection{Empirical Validation of Theoretical Assumptions}
\label{app:assumption-validation}

\paragraph{Representation concentration.}
Proposition~\ref{prop:dirichlet} assumes
$\bH_i=\bar{\bh}_{c(i)}+\boldsymbol{\xi}_i$ with
$\|\boldsymbol{\xi}_i\|\leq\eta<\Delta/2$, where $\Delta$ is the separation
between community centroids. We evaluate this condition using
$2\eta_{95}/\Delta$, where $\eta_{95}$ is the 95th percentile of
node-to-centroid distances. Values below one indicate that the condition
holds.

\begin{table*}[htbp]
\centering
\caption{Empirical evaluation of the concentration assumption. ``Holding''
is the percentage of community pairs satisfying $\eta_{95}<\Delta/2$.
Results are averaged over five seeds. $\dagger$ denotes GCN runs that fail
to train, with accuracy between 0.21 and 0.23.}
\label{tab:concentration}
\small
\setlength{\tabcolsep}{4pt}
\begin{tabular}{lc cc cc cc}
\toprule
& & \multicolumn{2}{c}{SBM, i.i.d.}
& \multicolumn{2}{c}{SBM, aligned}
& \multicolumn{2}{c}{Citeseer} \\
\cmidrule(lr){3-4}\cmidrule(lr){5-6}\cmidrule(lr){7-8}
Model & Layer
& Median & Holding
& Median & Holding
& Median & Holding \\
\midrule
GCN              & 2  & 0.74 & 100\% & 0.40 & 100\% & 1.96 & 9.2\%  \\
GCN              & 4  & 0.48 & 100\% & 0.51 & 100\% & 2.07 & 9.0\%  \\
GCN$^\dagger$    & 8  & 37.9 & 0\%   & 37.3 & 0\%   & 32.4 & 0\%    \\
GCN$^\dagger$    & 16 & 37.5 & 0\%   & 37.4 & 0\%   & 34.9 & 0\%    \\
\midrule
APPNP            & 2  & 0.85 & 100\% & 0.34 & 100\% & 1.77 & 12.0\% \\
APPNP            & 4  & 1.01 & 54\%  & 0.34 & 100\% & 1.67 & 15.4\% \\
APPNP            & 8  & 1.26 & 0\%   & 0.34 & 100\% & 1.57 & 18.6\% \\
APPNP            & 16 & 1.38 & 0\%   & 0.35 & 100\% & 1.52 & 21.2\% \\
\midrule
GCNII            & 2  & 0.60 & 100\% & 0.38 & 100\% & 1.90 & 8.8\%  \\
GCNII            & 4  & 0.53 & 100\% & 0.39 & 100\% & 1.83 & 12.0\% \\
GCNII            & 8  & 0.76 & 94\%  & 0.36 & 100\% & 1.77 & 12.0\% \\
GCNII            & 16 & 2.21 & 0\%   & 0.29 & 100\% & 1.79 & 11.0\% \\
\bottomrule
\end{tabular}
\end{table*}

The condition holds broadly in the SBM setting for which the theory is
formulated and more selectively on Citeseer. With community-aligned SBM
features, it holds at every tested depth except when GCN fails to train.
The condition is therefore sufficient rather than universal in trained
networks.

For a finer depth-wise analysis, Table~\ref{tab:concentration-depth} reports
the same quantity for a 16-layer APPNP on the SBM. The condition holds most
frequently at layers 2--4, where ECI is also highest, connecting the
theoretical regime to the strongest observed echo-chamber region.

\begin{table}[htbp]
\centering
\caption{Layer-wise concentration and ECI for 16-layer APPNP on an SBM with
i.i.d.\ features, averaged over five seeds.}
\label{tab:concentration-depth}
\small
\begin{tabular}{cccc}
\toprule
Layer & Median $2\eta_{95}/\Delta$ & Pairs holding & ECI \\
\midrule
0  & 4.35 & 0\%  & 1.15 \\
1  & 1.06 & 30\% & 2.49 \\
2  & 0.90 & 84\% & 2.96 \\
3  & 0.89 & 92\% & 3.01 \\
4  & 0.98 & 56\% & 2.75 \\
5  & 1.06 & 24\% & 2.57 \\
6  & 1.14 & 6\%  & 2.42 \\
8  & 1.25 & 0\%  & 2.24 \\
16 & 1.38 & 0\%  & 2.08 \\
\bottomrule
\end{tabular}
\end{table}

\paragraph{Mass preservation on real graphs.}
Proposition~\ref{prop:mass} establishes expected mass
preservation under a balanced SBM. To evaluate its applicability to real
graphs, we measure each node's intra-community degree fraction
$p_i=k_i^{\mathrm{intra}}/d_i$ and aggregation mass
$m_i=\alpha_{\intra}p_i+\alpha_{\inter}(1-p_i)$ under the Louvain partition.

\begin{table}[htbp]
\centering
\caption{Per-node intra-community degree fractions and aggregation mass on
real graphs. ``Within 10\%'' is the fraction of nodes satisfying
$|m_i-1|<0.1$.}
\label{tab:mass-preservation}
\small
\begin{tabular}{lcccc}
\toprule
Dataset & Mean $p_i$ & Std.\ $p_i$ & Mean $m_i$ & Within 10\% \\
\midrule
Roman-Empire   & 0.995 & 0.045 & 1.000 & 98\% \\
Amazon-Ratings & 0.979 & 0.074 & 0.999 & 92\% \\
\bottomrule
\end{tabular}
\end{table}

The mean aggregation mass is approximately one on both datasets, and most
nodes remain within 10\% of unit mass. Thus, the mass-preserving
parameterisation remains a close empirical approximation beyond the balanced
SBM setting.

\subsection{Ablation Studies on CASP}
\label{app:casp_ablation}

\subsubsection{Component Analysis of CASP}
\label{app:casp_components}

Table~\ref{tab:ablation} isolates the contribution of each CASP 
components across a subset of homophilic and 
heterophilic datasets. Intra-only propagation 
improves over baseline on homophilic datasets but underperforms on 
heterophilic ones, and vice versa for inter-only propagation, 
confirming that neither direction alone is sufficient. The 
non-mass-preserving variant performs comparably to CASP Full, 
indicating the mass-preserving parameterisation contributes modest 
but consistent gains by preventing signal scale distortion. Full CASP 
achieves the best accuracy in all datasets. Figure~\ref{fig:r_sweep} shows accuracy as a function of fixed $r$ on 
Citeseer and Film datasets. On Citeseer, accuracy peaks when $r > 1$, confirming 
that stronger intra-community aggregation improves homophilic 
classification. On Film, the peak occurs at $r < 1$, confirming 
that cross-community mixing is beneficial under heterophily. Both 
optima are consistent with Theorem~1 and validate the data-driven 
calibration of $r$.
\begin{table}[htbp]
\begin{minipage}[htbp]{0.50\linewidth}
\vspace{0pt}
\centering
\caption{Ablation of CASP components}
\label{tab:ablation}
\setlength{\tabcolsep}{4pt}
\renewcommand{\arraystretch}{1.1}
\resizebox{\linewidth}{!}{%
\begin{tabular}{lcccc}
\toprule
& \multicolumn{2}{c}{\textbf{Homophilic}} 
& \multicolumn{2}{c}{\textbf{Heterophilic}} \\
\cmidrule(lr){2-3} \cmidrule(lr){4-5}
\textbf{Method} & Citeseer & Pubmed & Film & Amazon-Ratings \\
\midrule
Baseline (GCN)        
    & 76.68\tiny{$\pm$1.64} 
    & 86.74\tiny{$\pm$0.47} 
    & 30.26\tiny{$\pm$0.79} 
    & 37.99\tiny{$\pm$0.61} \\
Intra-only propagation           
    & 78.57\tiny{$\pm$2.21} 
    & 89.14\tiny{$\pm$0.46} 
    & 34.80\tiny{$\pm$2.77} 
    & 46.42\tiny{$\pm$1.32} \\
Inter-only propagation          
    & 77.12\tiny{$\pm$1.86} 
    & 87.83\tiny{$\pm$0.35} 
    & 36.09\tiny{$\pm$0.98} 
    & 48.36\tiny{$\pm$1.34} \\
Non-mass-preserving   
    & 79.03\tiny{$\pm$1.08} 
    & 89.42\tiny{$\pm$0.80} 
    & 37.00\tiny{$\pm$1.84} 
    & 49.19\tiny{$\pm$1.44} \\
\midrule
CASP Full             
    & \textbf{79.36}\tiny{$\pm$1.89} 
    & \textbf{89.68}\tiny{$\pm$0.50} 
    & \textbf{37.23}\tiny{$\pm$2.17} 
    & \textbf{49.57}\tiny{$\pm$0.95} \\
\bottomrule
\end{tabular}%
}
\end{minipage}
\hfill
\begin{minipage}[htbp]{0.47\linewidth}
\vspace{0pt}
\centering
\includegraphics[width=\linewidth, height=21ex]{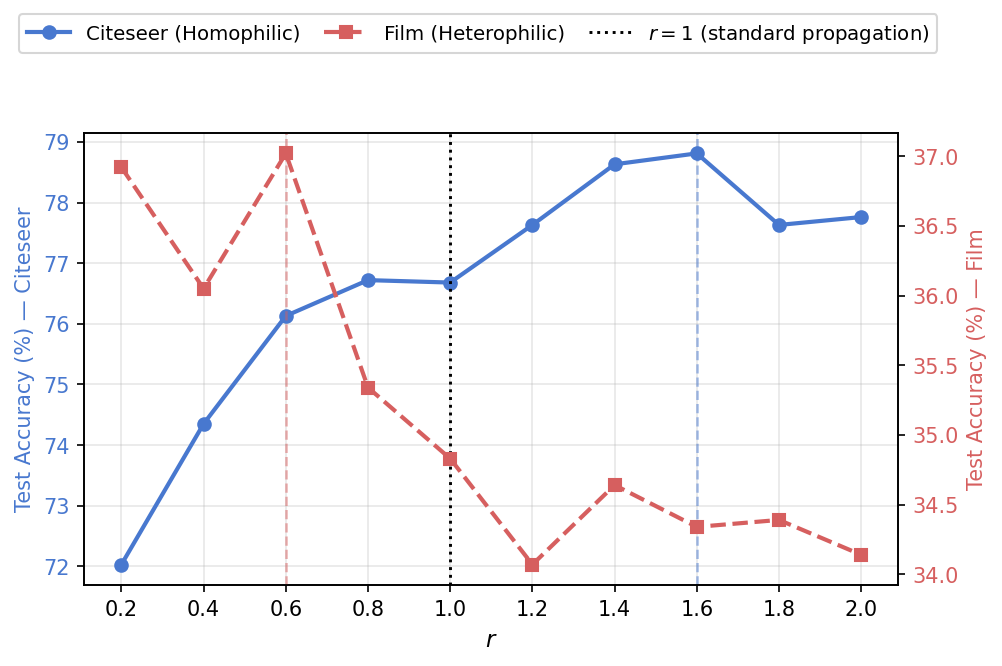}
\captionof{figure}{Accuracy vs fixed $r$}
\label{fig:r_sweep}
\end{minipage}
\end{table}

\subsubsection{Robustness to Data Splits and Community Detection Algorithm}
\label{app:split_robustness}

Table~\ref{tab:casp_split_robustness} and 
Table~\ref{tab:casp_comm_robustness} evaluate CASP under two robustness 
conditions. Across three train/validation/test splits ranging from very 
sparse (2.5/2.5/95) to standard (60/20/20) label regions, CASP 
consistently improves over both GCN and APPNP on Pubmed and Film, 
confirming that the data-driven calibration of $r$ remains effective 
regardless of label availability. Substituting the default Louvain 
partition with Leiden, spectral clustering, or label propagation yields 
comparable accuracy, confirming that CASP is robust to the choice of 
community detection algorithm.

\begin{table}[htbp]
\centering
\footnotesize
\caption{Robustness of CASP to data splits on Pubmed (homophilic) and Film (heterophilic). Best results are highlighted in \textbf{bold}.}
\label{tab:casp_split_robustness}

\begin{tabular*}{\textwidth}{@{\extracolsep{\fill}} l cccccc}
\toprule
& \multicolumn{3}{c}{\textbf{Pubmed}} 
& \multicolumn{3}{c}{\textbf{Film}} \\
\cmidrule(lr){2-4} \cmidrule(lr){5-7}
\textbf{Method} 
& 2.5/2.5/95 & 48/32/20 & 60/20/20 
& 2.5/2.5/95 & 48/32/20 & 60/20/20 \\
\midrule
GCN           
& 78.81 $\pm$ 0.24 & 87.38 $\pm$ 0.66 & 86.74 $\pm$ 0.47
& 22.74 $\pm$ 2.37 & 30.59 $\pm$ 0.23 & 30.26 $\pm$ 0.79 \\
GCN + CASP    
& \textbf{82.53 $\pm$ 0.78} & \textbf{89.70 $\pm$ 0.33} & \textbf{89.68 $\pm$ 0.50}
& \textbf{24.76 $\pm$ 1.24} & \textbf{32.54 $\pm$ 3.36} & \textbf{37.23 $\pm$ 2.17} \\
APPNP         
& 79.97 $\pm$ 0.28 & 85.02 $\pm$ 0.09 & 83.37 $\pm$ 0.42
& 29.74 $\pm$ 1.04 & 34.86 $\pm$ 1.32 & 36.44 $\pm$ 1.78 \\
APPNP + CASP  
& \textbf{83.18 $\pm$ 1.06} & \textbf{88.95 $\pm$ 0.48} & \textbf{88.76 $\pm$ 0.35}
& \textbf{29.91 $\pm$ 0.87} & \textbf{37.58 $\pm$ 1.67} & \textbf{38.15 $\pm$ 1.59} \\
\bottomrule
\end{tabular*}
\end{table}

\begin{table}[htbp]
\centering
\footnotesize
\caption{Robustness of CASP to community detection algorithm on Pubmed (homophilic) and Film (heterophilic).}
\label{tab:casp_comm_robustness}

\begin{tabular*}{0.7\textwidth}{@{\extracolsep{\fill}} l cc}
\toprule
\textbf{Algorithm} 
& \textbf{Pubmed} 
& \textbf{Film}  \\
\midrule
GCN (baseline)      
& 86.74 $\pm$ 0.47 & 30.26 $\pm$ 0.79 \\
Louvain             
& 89.68 $\pm$ 0.50 & 37.23 $\pm$ 2.17 \\
Leiden              
& 90.48 $\pm$ 0.44 & 34.50 $\pm$ 1.57 \\
Spectral clustering 
& 90.27 $\pm$ 0.52 & 37.08 $\pm$ 2.60 \\
Label propagation   
& 90.50 $\pm$ 0.57 & 36.83 $\pm$ 2.37 \\
\bottomrule
\end{tabular*}
\end{table}

\subsubsection{Robustness to Noisy Community Partitions}
\label{app:noisy_partitions}

Figure~\ref{fig:partition_robustness} tests sensitivity of ECI and 
CASP-GCN to errors in community detection by randomly reassigning a 
fraction $\rho \in \{0.0, 0.1, \ldots, 0.5\}$ of nodes to incorrect 
communities. ECI degrades gracefully with $\rho$: even at $\rho{=}0.5$, 
the echo chamber signal remains well above unity, confirming the diagnostic 
is robust to partition noise. Node classification accuracy on Pubmed 
is flat across all $\rho$, with no statistically meaningful 
degradation. The learned $r$ remains consistently above unity throughout, 
indicating that CASP correctly identifies the homophilic propagation 
direction even under substantial partition corruption, with noise 
attenuating the magnitude of $r$ but never reversing its sign.

\begin{figure}[htbp]
\centering
\begin{subfigure}[t]{0.38\linewidth}
    \includegraphics[width=\linewidth]{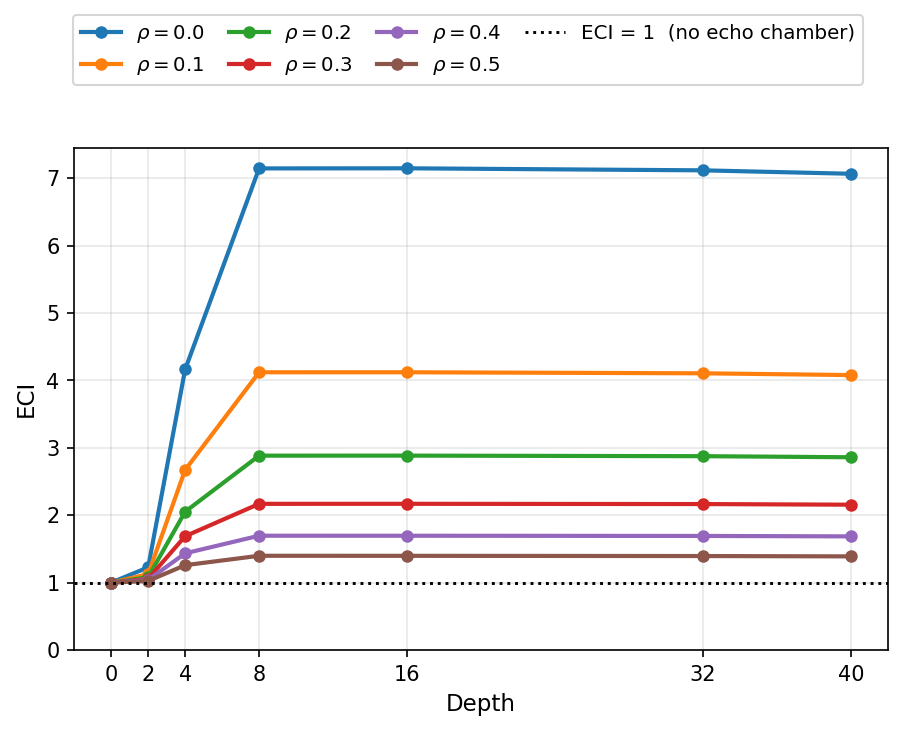}
    \caption{ECI vs depth under noisy partitions on SBM.}
    \label{fig:partition_robustness_eci}
\end{subfigure}
\hfill
\begin{subfigure}[t]{0.58\linewidth}
    \includegraphics[width=\linewidth]{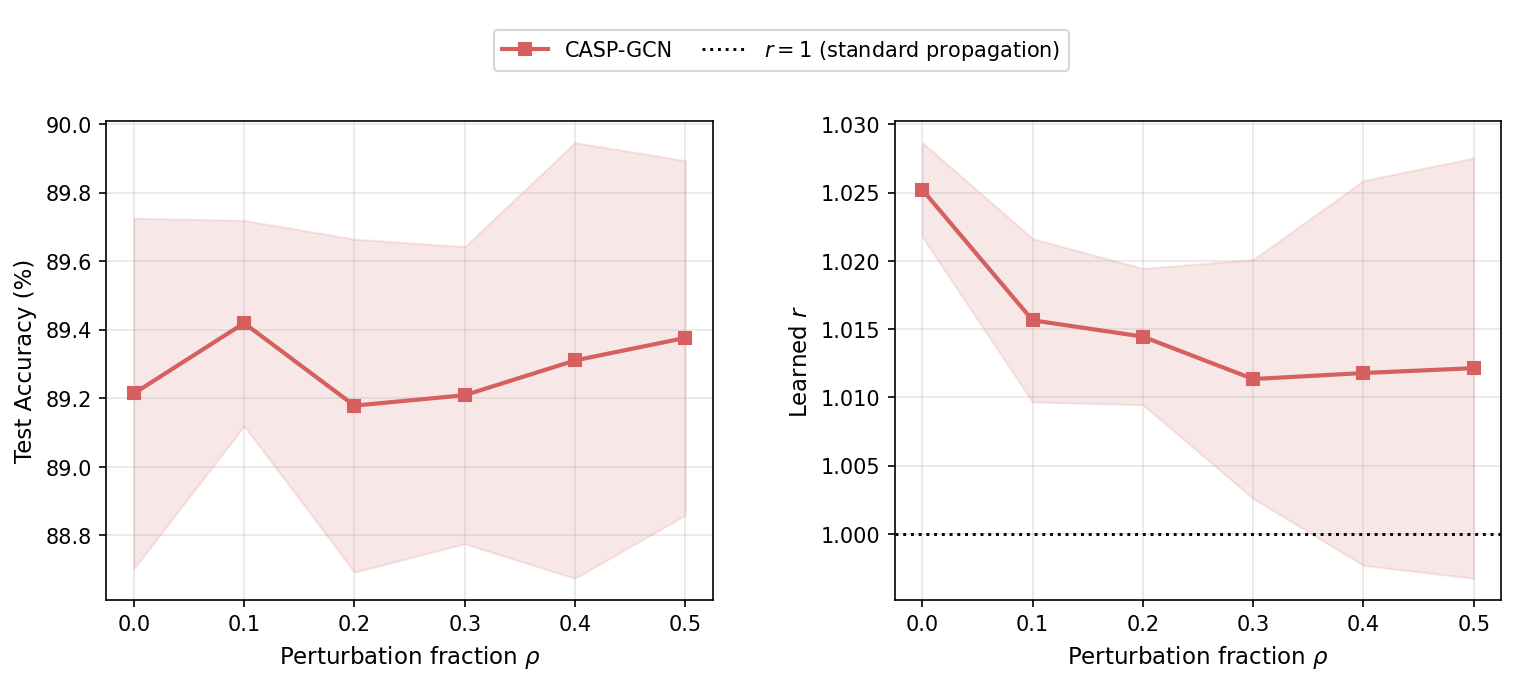}
    \caption{CASP-GCN accuracy and learned $r$ on Pubmed.}
    \label{fig:partition_robustness_acc_r}
\end{subfigure}
\caption{Robustness to noisy community partitions. ECI remains above 
unity and accuracy is stable even at $\rho{=}0.5$.}
\label{fig:partition_robustness}
\end{figure}

\begin{figure}[htbp]
\centering
\includegraphics[width=\linewidth]{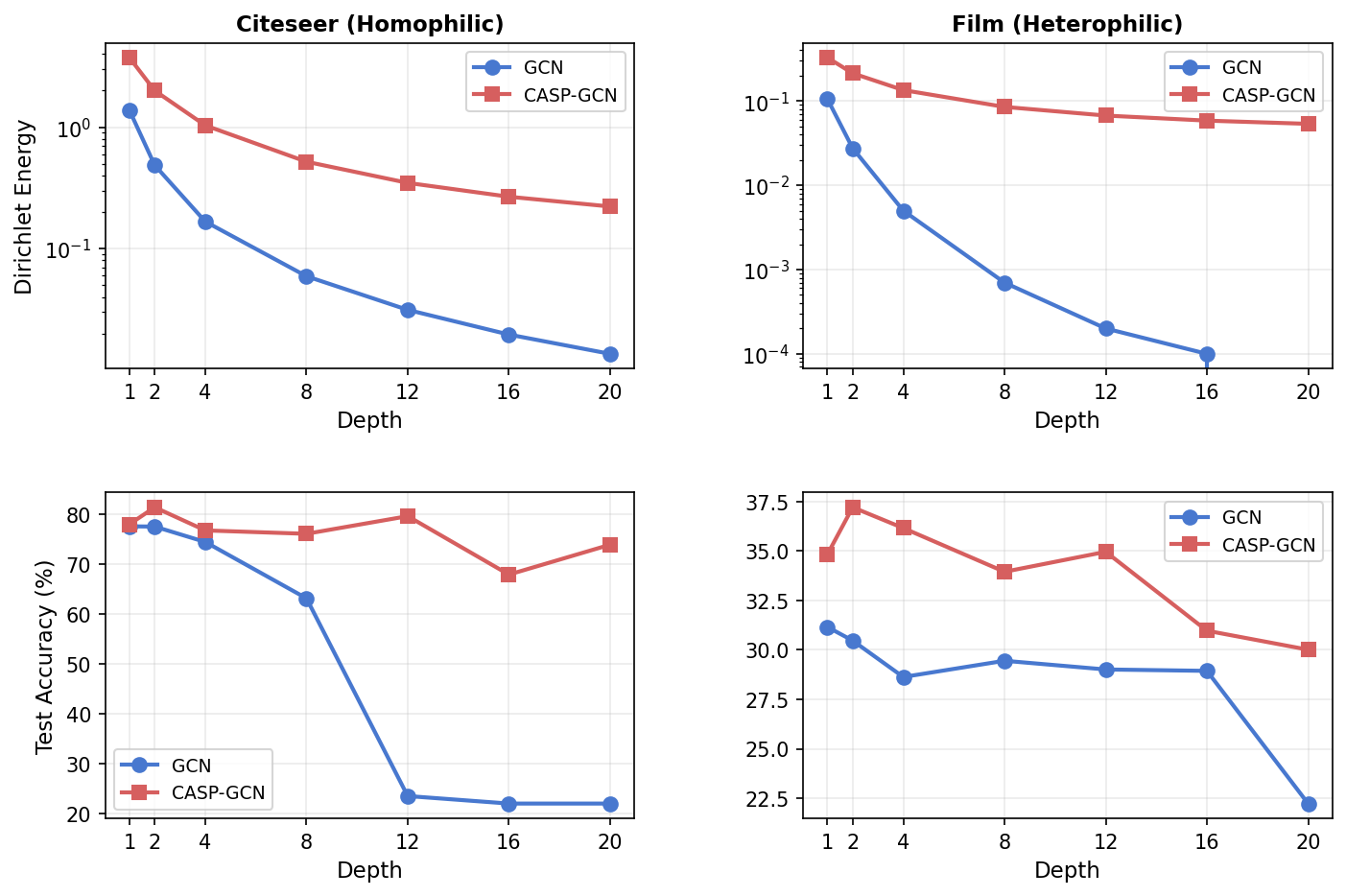}
\caption{Dynamic depth analysis: Dirichlet energy (top) and test accuracy 
(bottom) vs.\ depth for GCN and CASP-GCN on Citeseer (homophilic) and 
Film (heterophilic) datasets.}
\label{fig:depth_analysis}
\end{figure}

\subsubsection{Depth Analysis and Oversmoothing Robustness}
\label{app:depth_analysis}

Figure~\ref{fig:depth_analysis} evaluates Dirichlet energy and test 
accuracy as a function of depth on homophilic and heterophilic datasets. 
GCN accuracy degrades sharply at the depths where Dirichlet energy 
approaches zero, exhibiting the standard oversmoothing failure. CASP-GCN 
sustains significantly higher Dirichlet energy throughout, and its accuracy 
remains stable well beyond the depths at which GCN collapses. This confirms 
that CASP reshapes the oversmoothing window rather than merely delaying it: 
by rebalancing intra- and inter-community propagation according to the 
graph's label structure, CASP maintains useful representations at depths 
where standard GCN has already converged to a trivial solution.

\subsubsection{Node Classification Performance with Existing Baselines}
\label{app:baselines}

Table~\ref{tab:results} compares CASP with existing node-classification GNNs
on homophilic (Cora, Citeseer, and Pubmed) and heterophilic (Film, Cornell,
Wisconsin, and Texas) benchmarks. The baselines include general-purpose GNNs:
GCN~\cite{kipf2017semi}, GAT~\cite{velivckovic2018graph},
GraphSAGE~\cite{hamilton2017inductive}, SGC~\cite{wu2019simplifying}, and
GCN-Cheby~\cite{defferrard2016convolutional}; and heterophily-oriented
architectures: MixHop~\cite{abu2019mixhop},
{GCNII~\cite{chen2020simple},
WRGAT~\cite{suresh2021breaking}, ACM-GCN~\cite{luan2021heterophily},
H$_2$GCN~\cite{zhu2020beyond},
GPR-GNN~\cite{chien2021adaptive},
GGCN~\cite{yan2022two}, LINKX~\cite{lim2021large},
GloGNN~\cite{li2022finding}, DirGNN~\cite{rossi2024edge},
{HiGNN~\cite{zheng2025learn}, and
BEC-GNN~\cite{hevapathige2025depth}.

CASP improves both evaluated backbones across all seven datasets, although the
magnitude of improvement depends on the backbone. GCN+CASP remains below
stronger heterophily-oriented methods on Cornell, Wisconsin, and Texas,
whereas DirGNN+CASP achieves the best results. Overall, CASP provides
consistent backbone-relative improvements and is most effective when combined
with an architecture suited to the underlying graph structure.

\begin{table*}[htbp]
\centering
\caption{Comparison of node classification accuracy (\%)
using ten random 60/20/20 train/validation/test splits. Values are means and
standard deviations across the ten runs. Best results are highlighted in
\textbf{bold}. Baseline results are taken from
\cite{zheng2025learn}.}
\label{tab:results}
\resizebox{\textwidth}{!}{%
\begin{tabular}{lccccccc}
\toprule
Methods & Cora & Citeseer & Pubmed & Film & Cornell & Wisconsin & Texas \\
\midrule
GCN
    & 86.90 $\pm$ 1.09 & 76.68 $\pm$ 1.64 & 87.38 $\pm$ 0.66
    & 30.26 $\pm$ 0.79 & 57.03 $\pm$ 4.67 & 59.80 $\pm$ 6.99
    & 59.46 $\pm$ 5.25 \\
GAT
    & 86.06 $\pm$ 1.31 & 75.46 $\pm$ 1.72 & 87.62 $\pm$ 0.42
    & 26.28 $\pm$ 1.73 & 58.92 $\pm$ 3.32 & 55.29 $\pm$ 8.71
    & 58.38 $\pm$ 4.45 \\
GraphSAGE
    & 80.44 $\pm$ 1.90 & 76.04 $\pm$ 1.30 & 88.45 $\pm$ 0.50
    & 34.23 $\pm$ 0.99 & 75.95 $\pm$ 5.01 & 81.18 $\pm$ 5.56
    & 82.43 $\pm$ 6.14 \\
SGC
    & 84.97 $\pm$ 1.95 & 75.66 $\pm$ 1.37 & 87.15 $\pm$ 0.47
    & 25.83 $\pm$ 1.09 & 55.41 $\pm$ 5.29 & 57.84 $\pm$ 4.83
    & 58.11 $\pm$ 6.27 \\
MixHop
    & 86.58 $\pm$ 1.12 & 70.75 $\pm$ 2.95 & 80.75 $\pm$ 2.29
    & 32.22 $\pm$ 2.34 & 73.51 $\pm$ 6.34 & 75.88 $\pm$ 4.90
    & 77.84 $\pm$ 7.73 \\
GCN-Cheby
    & 86.80 $\pm$ 1.08 & 76.25 $\pm$ 1.76 & 88.08 $\pm$ 0.52
    & 36.11 $\pm$ 1.09 & 74.32 $\pm$ 7.46 & 79.41 $\pm$ 4.46
    & 77.30 $\pm$ 4.07 \\
H$_2$GCN
    & 87.71 $\pm$ 1.25 & 76.72 $\pm$ 1.50 & 88.50 $\pm$ 0.64
    & 35.86 $\pm$ 1.03 & 82.16 $\pm$ 4.80 & 86.67 $\pm$ 4.69
    & 84.86 $\pm$ 6.77 \\
ACM-GCN
    & 87.83 $\pm$ 0.95 & 75.56 $\pm$ 1.32 & 89.48 $\pm$ 0.58
    & 35.09 $\pm$ 1.18 & 77.57 $\pm$ 5.26 & 83.53 $\pm$ 3.83
    & 82.70 $\pm$ 6.27 \\
GCNII
    & 86.80 $\pm$ 1.08 & 74.84 $\pm$ 1.48 & 88.27 $\pm$ 0.72
    & 33.48 $\pm$ 2.05 & 54.32 $\pm$ 9.14 & 56.86 $\pm$ 8.32
    & 61.89 $\pm$ 6.43 \\
GPR-GNN
    & 87.42 $\pm$ 1.21 & 75.43 $\pm$ 1.47 & 89.18 $\pm$ 0.51
    & 35.47 $\pm$ 1.66 & 74.32 $\pm$ 3.66 & 79.61 $\pm$ 5.56
    & 74.86 $\pm$ 5.70 \\
GGCN
    & 86.32 $\pm$ 0.91 & 76.65 $\pm$ 1.91 & 88.25 $\pm$ 0.43
    & 34.86 $\pm$ 0.87 & 71.35 $\pm$ 7.34 & 74.12 $\pm$ 5.37
    & 65.14 $\pm$ 8.30 \\
LINKX
    & 77.32 $\pm$ 1.68 & 72.00 $\pm$ 1.90 & 78.39 $\pm$ 1.09
    & 27.06 $\pm$ 1.22 & 39.46 $\pm$ 17.89 & 55.88 $\pm$ 6.29
    & 52.43 $\pm$ 9.21 \\
GloGNN
    & 88.31 $\pm$ 1.15 & 77.41 $\pm$ 1.65 & 89.62 $\pm$ 0.35
    & 37.36 $\pm$ 1.34 & 82.16 $\pm$ 5.82 & 82.35 $\pm$ 5.11
    & 69.19 $\pm$ 11.16 \\
WRGAT
    & 75.47 $\pm$ 2.90 & 76.81 $\pm$ 1.89 & 88.52 $\pm$ 0.92
    & 36.53 $\pm$ 0.77 & 81.62 $\pm$ 3.90 & 86.98 $\pm$ 3.78
    & 83.62 $\pm$ 5.50 \\
DirGNN
    & 86.54 $\pm$ 1.66 & 77.71 $\pm$ 0.78 & 86.94 $\pm$ 0.55
    & 35.76 $\pm$ 6.31 & 76.51 $\pm$ 6.14 & 80.50 $\pm$ 5.50
    & 76.25 $\pm$ 4.68 \\
HiGNN
    & \textbf{89.72 $\pm$ 1.46} & 79.30 $\pm$ 2.13
    & 89.43 $\pm$ 0.53 & 37.21 $\pm$ 1.35
    & 80.00 $\pm$ 4.26 & 85.88 $\pm$ 3.18
    & 86.22 $\pm$ 4.67 \\
BEC-GNN
    & 88.01 $\pm$ 1.56 & 78.14 $\pm$ 1.31
    & 87.00 $\pm$ 0.28 & 34.34 $\pm$ 2.08
    & 65.96 $\pm$ 9.03 & 66.25 $\pm$ 4.55
    & 68.85 $\pm$ 11.02 \\
\midrule
GCN + CASP
    & 88.72 $\pm$ 1.50 & \textbf{79.36 $\pm$ 1.89}
    & 89.68 $\pm$ 0.50 & 37.23 $\pm$ 2.17
    & 59.04 $\pm$ 4.75 & 67.75 $\pm$ 9.82
    & 65.08 $\pm$ 10.00 \\
DirGNN + CASP
    & 87.06 $\pm$ 1.92 & 78.32 $\pm$ 2.89
    & \textbf{90.03 $\pm$ 0.53} & \textbf{40.49 $\pm$ 1.80}
    & \textbf{87.02 $\pm$ 2.77} & \textbf{89.62 $\pm$ 3.21}
    & \textbf{89.51 $\pm$ 4.59} \\
\bottomrule
\end{tabular}%
}
\end{table*}

\subsubsection{Impact of CASP on ECI and Echo Chamber Window}
\label{app:eci_window}

To empirically validate how CASP modulates the echo chamber effect, we construct 
a synthetic SBM graph and artificially induce homophilic and heterophilic 
label structures. Under homophily, node labels are aligned with community 
membership; under heterophily, labels are deliberately misaligned so that 
same-community nodes belong to different classes. Figure~\ref{fig:eci_sbm} 
shows the resulting ECI depth profiles for standard GCN and GCN+CASP.

Under homophily, CASP  amplifies intra-community aggregation, raising 
peak ECI and extending the echo chamber window across all propagation depths. 
This is the intended behavior: a deeper echo chamber compresses within-class 
variance and improves linear separability. Under heterophily, CASP  promotes inter-community mixing, suppressing peak ECI and narrowing the echo 
chamber window relative to standard GCN. This directly reduces the harmful 
region in which same-community nodes of different classes become 
indistinguishable. In both cases, CASP's data-driven calibration of $r$ from 
label structure correctly identifies the beneficial direction and adjusts the 
echo chamber window accordingly, confirming Theorem \ref{thm:task}.

\begin{figure}[htbp]
    \centering
    \includegraphics[width=\linewidth]{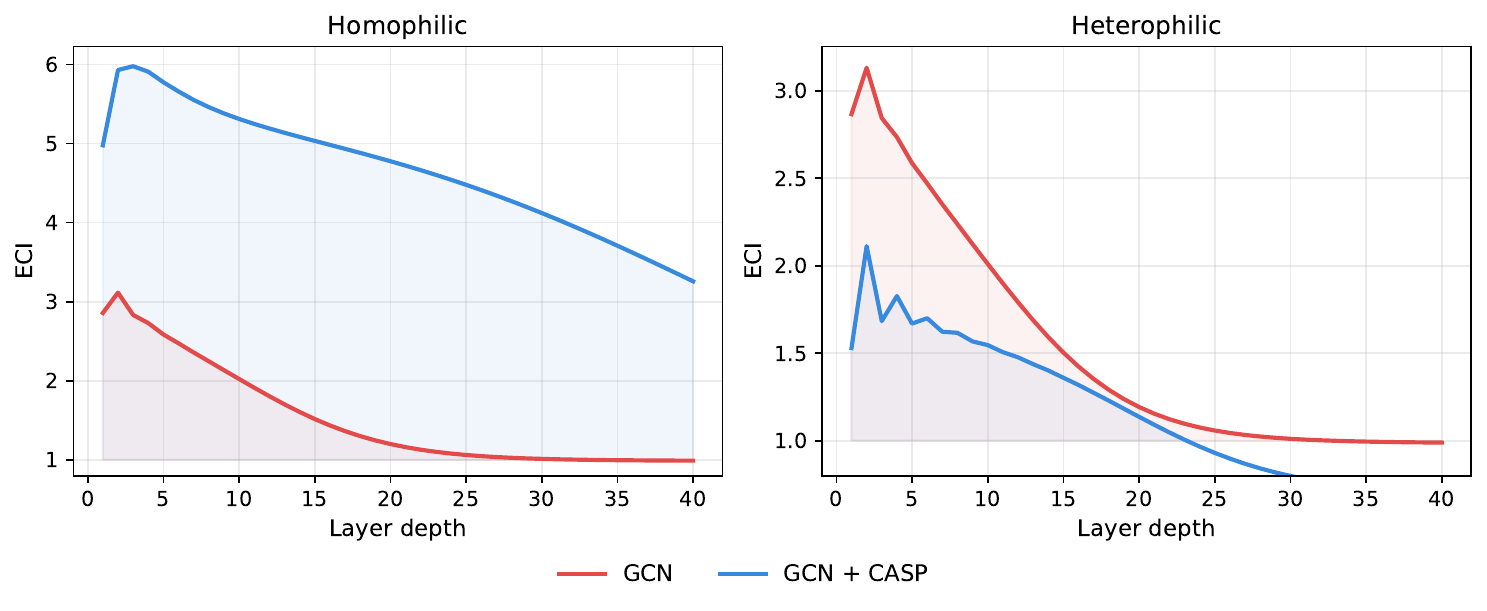}
    \caption{Impact of CASP on ECI and the echo chamber window on a synthetic 
    SBM graph ($n=300$, $m=5$, $p_{\text{intra}}=0.08$, 
    $p_{\text{inter}}=0.005$).}
    \label{fig:eci_sbm}
\end{figure}

\subsubsection{Scalability Analysis}
\label{sec:scalability}

We evaluate CASP on two large-scale OGB benchmarks:
ogbn-arxiv and ogbn-mag~\cite{hu2020open}. Ogbn-arxiv comprises 169,343
nodes, 1,166,243 edges, and 128-dimensional input features. For ogbn-mag,
we follow the OGB evaluation protocol \cite{hu2020open} and use its homogeneous paper-to-paper subgraph,
which contains approximately 736K nodes and 5.4M citation edges.  Table~\ref{tab:ogb-results} reports the large-scale node classification
results. CASP improves GCN on both datasets.

\begin{table}[h]
    \centering
    \caption{Node classification accuracy (\%) on large-scale OGB
    benchmarks.}
    \label{tab:ogb-results}
    \small
    \begin{tabular}{lcc}
        \toprule
        Method & ogbn-arxiv & ogbn-mag \\
        \midrule
        GCN
        & $69.53 \pm 0.28$
        & $30.43 \pm 0.25$ \\
        GCN + CASP
        & $\mathbf{70.12 \pm 0.31}$
        & $\mathbf{31.21 \pm 0.24}$ \\
        \bottomrule
    \end{tabular}
\end{table}

For the scalability analysis on ogbn-arxiv, we use a four-layer GCN
backbone with 256 hidden dimensions. Figure~\ref{fig:scalability}
empirically validates the complexity claims in
Section~\ref{sec:complexity}: the per-epoch runtime and peak activation
memory of CASP-GCN remain constant-factor multiples of those of GCN.
The parameter overhead is negligible because CASP introduces only one
additional scalar parameter. Its two propagation streams are independent
and can also be parallelised on a GPU.

The accuracy improvement confirms that this modest overhead produces a
measurable improvement in representation quality.
Table~\ref{tab:louvain} reports the one-time Louvain community-detection
cost, which is amortised across training runs on the same graph.

\begin{figure}[h]
    \centering
    \includegraphics[width=0.62\linewidth]{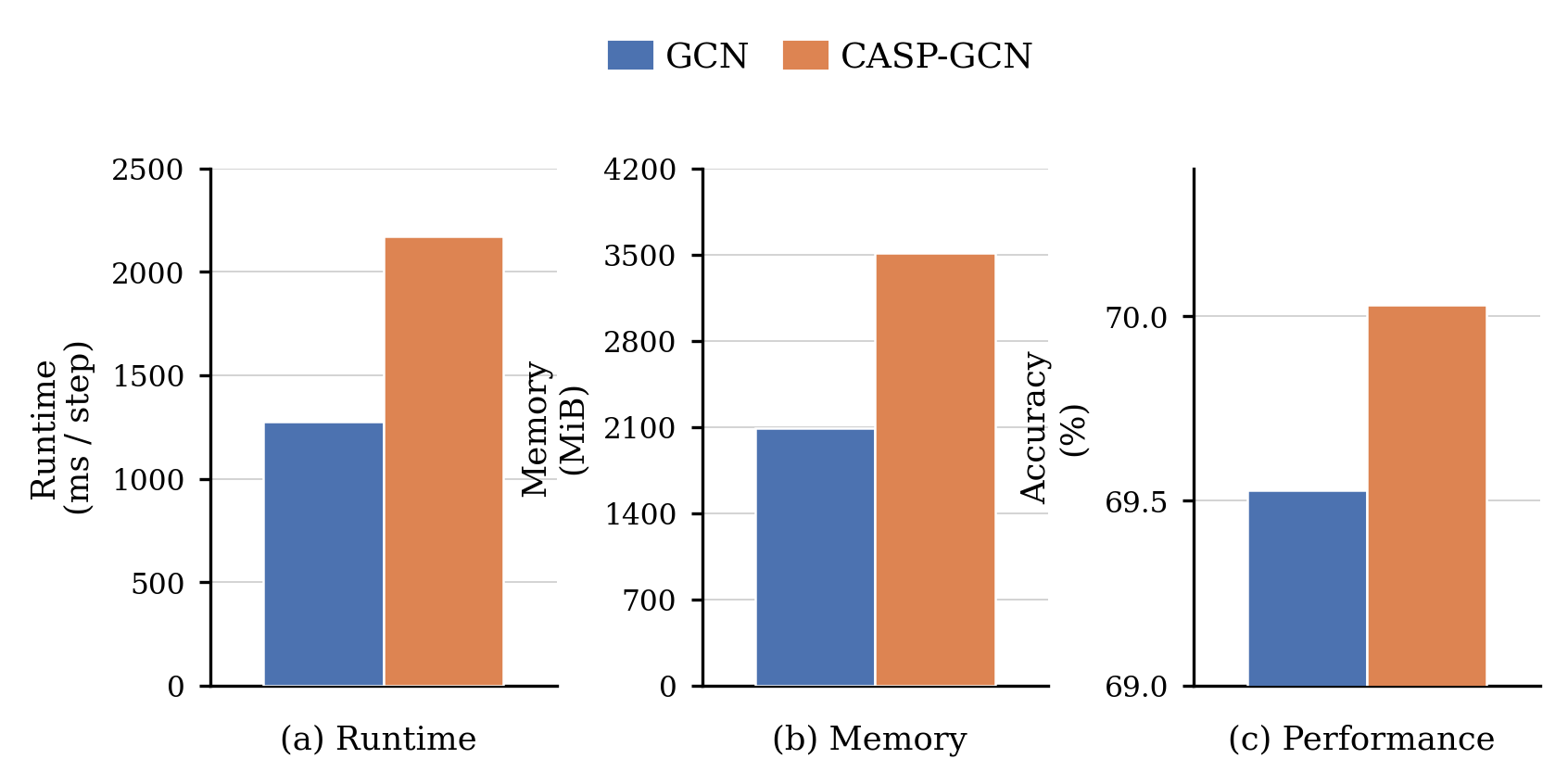}
    \caption{Runtime, memory, and accuracy comparison between GCN and
    CASP-GCN on ogbn-arxiv.}
    \label{fig:scalability}
\end{figure}

\begin{table}[h]
    \centering
    \caption{One-time Louvain preprocessing cost on ogbn-arxiv.}
    \label{tab:louvain}
    \small
    \begin{tabular}{lc}
        \toprule
        Dataset & Time (s) \\
        \midrule
        ogbn-arxiv & 24.69 \\
        \bottomrule
    \end{tabular}
\end{table}

\subsubsection{Sensitivity to Resolution Parameter in Louvain Algorithm}
\label{sec:resolution_ablation}

CASP relies on Louvain community detection to partition edges into intra- and
inter-community sets, from which $p$ and $\delta$ are derived.
The resolution parameter $\gamma$ controls community granularity, where smaller values
yield fewer, coarser communities and larger values produce finer partitions.
We evaluate CASP-DirGNN across $\gamma \in \{0.25, 0.5, 1, 2, 4\}$ on Pubmed
and Amazon-ratings (Table~\ref{tab:resolution}).

Results are stable across all resolutions on Pubmed, with all values
outperforming the baseline by a substantial margin.
On Amazon-ratings, $\gamma = 1$ achieves the highest accuracy.
This is consistent with the role of $\gamma$ in modularity:
$\gamma = 1$ corresponds to the standard objective, which balances
intra- and inter-community connectivity relative to a random-graph
null model~\cite{reichardt2006statistical}.
Deviating from this scale ($\gamma \neq 1$) produces coarser or finer
partitions, which in our experiments weakens the intra/inter signal
and degrades CASP's calibration.
We therefore fix $\gamma = 1$ across all experiments without
dataset-specific tuning.

\begin{table}[h]
\centering
\caption{
    Ablation on Louvain resolution parameter $\gamma$.
    Best result per dataset is highlighted in \textbf{bold}.
    Baseline is DirGNN.
}
\label{tab:resolution}
\small
\setlength{\tabcolsep}{6pt}
\begin{tabular}{lcc}
\toprule
\textbf{Setting} & \textbf{Pubmed} & \textbf{Amazon-ratings} \\
\midrule
Baseline (no CASP)    & $86.94 \pm 0.55$ & $46.66 \pm 0.61$ \\
\midrule
$\gamma = 0.25$       & $\mathbf{90.55 \pm 0.49}$ & $47.66 \pm 0.97$ \\
$\gamma = 0.5$        & $89.99 \pm 0.38$ & $47.21 \pm 1.40$ \\
$\gamma = 1$  & $90.03 \pm 0.53$ & $\mathbf{49.09 \pm 1.19}$ \\
$\gamma = 2$          & $89.97 \pm 0.46$ & $47.36 \pm 1.37$ \\
$\gamma = 4$          & $90.09 \pm 0.47$ & $46.70 \pm 1.45$ \\
\bottomrule
\end{tabular}
\end{table}

%\clearpage
%\input{sections/checklist}

\end{document}